\title{Asymptotic convergence of iterative optimization algorithms}
\date{}
\author[$\star$]{Randal Douc}
\author[$\dag$]{Sylvain Le Corff}
\affil[$\star$]{{\small Samovar, T\'el\'ecom SudParis, Institut Polytechnique de Paris}}
\affil[$\dag$]{{\small LPSM, Sorbonne Universit\'e, UMR CNRS 8001, 4 Place Jussieu, 75005 Paris}}
\newenvironment{hyp}[2]{
\begin{enumerate}[label=\textbf{\sf(#1\arabic*)},resume=hyp#2]\begin{sf}}
{\end{sf}\end{enumerate}}
\crefname{hyp}{}{ass}
\Crefname{hyp}{}{Ass}
\newtheorem{theorem}{Theorem}
\newtheorem{corollary}{Corollary}
\newtheorem{lemma}{Lemma}[section]
\newtheorem{proposition}{Proposition}
\theoremstyle{remark}
\newtheorem{remark}{Remark}
\newenvironment{manualexample}[1]{%
  \example
}{\endexample}
\newcommand{\0}{\mathbf{0}}
\newcommandx{\admiss}[1][1=f]{\mathsf{A}_{#1}}
\newcommand{\Aff}[1]{\mathrm{Aff}(#1)}
\newcommand{\argmin}{\mathrm{argmin}}
\newcommand{\Ball}[2]{\mathbf{B}\left(#1, #2 \right)}
\newcommand{\At}{\mathcal{A}_{\theta\theta'}}
\newcommand{\Bt}{\mathcal{B}_{\theta\theta'}}
\newcommand{\Av}{\mathcal{A}_\star}
\newcommand{\Bv}{\mathcal{B}_\star}
\newcommand{\An}{\mathcal{A}_n}
\newcommand{\Bn}{\mathcal{B}_n}
\newcommand{\tAt}{\tilde{\mathcal{A}}_{\theta\theta'}}
\newcommand{\tBt}{\tilde{\mathcal{B}}_{\theta\theta'}}
\newcommand{\tAv}{\tilde{\mathcal{A}}_\star}
\newcommand{\tBv}{\tilde{\mathcal{B}}_\star}
\newcommand{\tDv}{\tilde{\mathcal{D}}_\star}
\newcommand{\tAn}{\tilde{\mathcal{A}}_n}
\newcommand{\tBn}{\tilde{\mathcal{B}}_n}
\newcommand{\tS}{\tilde{\mathcal{S}}}
\newcommand{\tSn}{\tilde{\mathcal{S}}_n}
\newcommand{\tSv}{\tilde{\mathcal{S}}_\star}
\newcommandx{\binfty}[1][1=\alpha]{|b|_{\infty, #1}}
\newcommandx{\balpha}[1][1=\alpha]{|b|_{#1}}
\newcommandx{\bmuf}[2][1=\mu, 2=\alpha]{ {b_{#1, #2}}}
\newcommandx{\bmufk}[2][1=\mu, 2=\alpha]{\hat{b}_{#1, #2, M}}
\newcommandx{\couple}[2][1=\PQ, 2=\PP]{(#1 || #2)}
\renewcommand{\Conv}[1]{\mathrm{Conv}(#1)}
\newcommand{\Cset}{\mathsf C}
\newcommandx{\cteLipSto}[2][1=N, 2=M]{|b|_{\hmu_{#1}, #2, \alpha}}
\newcommandx{\cteLipStoInf}[1][1=\alpha]{B_{#1}}
\newcommandx{\ctemono}[1][1=\alpha]{L_{#1, 1}}
\newcommandx{\cteinf}[1][1=\alpha]{L_{#1, 2}}
\newcommandx{\ctesup}[1][1=\alpha]{L_{#1, 3}}
\newcommand{\donce}[1]{\partial_2 #1}
\newcommand{\dtwice}[3]{\ifthenelse{\equal{#1}{#2}}
{\partial_{2 2} #3}
{\partial_{1 2} #3}
}
\newcommandx{\diverg}[1][1=\alpha]{D_{#1}}
\newcommandx{\Domain}[1][1=\alpha]{\mathrm{Dom}_{#1}}
\newcommand{\Dset}{\mathsf{D}}
\renewcommand{\epsilon}{\varepsilon}
\renewcommand{\eqdef}{:=}
\newcommand{\eqsp}{\;}
\newcommand{\Eset}{\mathsf{E}}
\newcommandx{\falpha}[1][1=\alpha]{f_{#1}}
\newcommandx{\aei}[1][1=\alpha]{$(#1, \Gamma)$-}
\newcommandx{\GammaAlpha}[1][1=\alpha]{ \Gamma}
\renewcommand{\geq}{\geqslant}
\newcommandx{\gmuf}[1][1=\mu]{ {g_{#1}}}
\newcommand{\hmu}{\hat{\mu}}
\newcommandx{\hbinfty}[1][1=\alpha]{|\hat{b}|_{\infty, #1}}
\newcommand{\indic}[1]{\mathds{1}_{#1}}
\newcommand{\integerval}[2][1]{\llbracket #1 : #2 \rrbracket}
\newcommandx{\iteration}[1][1=\alpha]{\mathcal{I}_{#1}}
\newcommandx{\iterationK}[1][1=\alpha]{\hat{\mathcal{I}}_{#1, M}}
\newcommandx{\lbd}[2][1=]{
    \ifthenelse{\equal{#1}{}}
    {{\boldsymbol{\lambda}_{#2}}}
    {\lambda_{#1,#2}}
    }
\newcommand{\Kset}{\mathsf K}
\renewcommand{\leq}{\leqslant}
\newcommand{\lrav}[1]{\left|#1 \right|}
\newcommand{\lr}[1]{\left(#1 \right)}
\newcommand{\lrb}[1]{\left[#1 \right]}
\newcommand{\lrc}[1]{\left\{#1 \right\}}
\newcommand{\M}[2][]{\mathcal{M}^{#1}\lr{#2}}
\newcommand{\Mtxt}[2][]{\mathcal{M}^{#1}(#2)}
\newcommand{\tM}[2][]{\tilde{\mathcal{M}}_{#1}\lr{#2}}
\newcommand{\mca}{\mathcal{A}}
\newcommand{\mcb}{\mathcal{B}}
\newcommand{\tmca}{\tilde{\mathcal{A}}}
\newcommand{\tmcb}{\tilde{\mathcal{B}}}
\newcommand{\mcm}{\mathcal{M}}
\newcommand{\mcq}{\mathcal{Q}}
\newcommand{\mcr}{\mathcal{R}}
\newcommand{\mcx}{\mathcal{X}}
\newcommand{\mcy}{\mathcal{Y}}
\newcommand{\Mset}{\mathsf M}
\renewcommand{\norm}[1]{\left\|#1\right\|}
\newcommand{\normtxt}[1]{\|#1\|}
\newcommand{\normmat}[1]{{\left\vert\kern-0.25ex\left\vert\kern-0.25ex\left\vert #1 
    \right\vert\kern-0.25ex\right\vert\kern-0.25ex\right\vert}}
\newcommand{\normmattxt}[1]{{\vert\kern-0.25ex\vert\kern-0.25ex\vert #1 
    \vert\kern-0.25ex\vert\kern-0.25ex\vert}}
\newcommand{\nset}{\mathbb N}
\newcommand{\Nset}{\mathsf N}
\newcommand{\PE}{\mathbb E}
\newcommand{\pscal}[2]{\langle #1,#2 \rangle}
\newcommandx{\posterior}[1][1=y]{p(#1|\mathscr{D})}
\newcommand{\PP}{\mathbb P}
\newcommand{\PQ}{\mathbb Q}
\newcommandx{\Psifn}[1][1=\alpha]{\Psi_{#1, L}}
\newcommandx{\Psif}[1][1=\alpha]{\Psi_{#1}}
\newcommandx{\Psifk}[3][1=\alpha, 2=N, 3=M]{\hat{\Psi}_{#1, #3, #2}}
\renewcommand{\Q}[3][]{\mathcal{Q}^{#1}_{#2}\lr{#3}}
\newcommand{\Qtxt}[3][]{\mathcal{Q}^{#1}_{#2}(#3)}
\newcommand{\tQ}[3][]{\tilde{\mathcal{Q}}^{#1}_{#2}\lr{#3}}
\newcommand{\tQtxt}[3][]{\tilde{\mathcal{Q}}^{#1}_{#2}(#3)}
\renewcommand{\R}[2]{\mathcal{R}_{#1}\lr{#2}}
\renewcommand{\rho}{\uprho}
\newcommand{\rhom}[1]{\varrho\lr{#1}}
\newcommand{\rhomtxt}[1]{\varrho(#1)}
\newcommand{\rhoinf}{\check{\uprho}_\star}
\newcommand{\rhosup}{\hat{\uprho}_\star}
\newcommand{\rhon}{\hat{\uprho}_n}
\newcommand{\rhotheta}{\hat{\uprho}_{\theta\theta'}}
\newcommand{\rhoinfsm}{\breve{\uprho}_\star}
\DeclareSymbolFont{tipa}{T3}{cmr}{m}{n}
\DeclareMathAccent{\invbreve}{\mathalpha}{tipa}{16}
\newcommand{\rhosupsm}{\invbreve{\uprho}_\star}
\newcommand{\ri}[1]{\mathrm{ri}(#1)}
\newcommand{\rmd}{\mathrm d}
\newcommand{\trmd}{\tilde{\rmd}}
\newcommand{\rmL}{\mathrm{L}}
\newcommand{\rmat}[2]{\mathbb{R}^{#1 \times #2}}
\newcommand{\rset}{\mathbb{R}}
\newcommand{\set}[2]{\lrc{#1\eqsp: \eqsp #2}}
\newcommand{\settxt}[2]{\{#1\eqsp: \eqsp #2\}}
\newcommand{\seq}[2][n\in\nset]{(#2)_{#1}}
\newcommand{\Spec}[1]{\mathrm{Spec}(#1)}
\newcommand{\Sset}{\mathsf{S}}
\newcommand{\tH}{\tilde{\mathsf{H}}}
\newcommand{\thv}{{\theta_\star}}
\newcommand{\thvv}{{\theta_{\star \star}}}
\newcommand{\ttheta}{\tilde{\theta}}
\newcommand{\tthv}{\tilde{\theta}_\star}
\newcommand{\Tthv}{{\mathsf{T}_{\star}}}
\newcommand{\Uset}{\mathsf U}
\newcommand{\Vset}{\mathsf V}
\newcommand{\Wset}{\mathsf W}
\newcommand{\Xset}{\mathsf X}
\newcommand{\xv}{x_\star}
\newcommand{\xvv}{x_{\star\star}}
\newcommand{\Yset}{\mathsf Y}
\newcommand{\Yk}{Y_{1:k}}
\newcommand{\zv}{\zeta_\star}
\newcommand{\pop}{\mathrm{pop}}
\newcommand{\samp}{\mathrm{samp}}
\begin{document}

\maketitle

\begin{abstract}
This paper introduces a general framework for iterative optimization algorithms and establishes under general assumptions that their convergence is asymptotically geometric.
We also prove that under appropriate assumptions, the rate of convergence can be lower bounded. The convergence is then only geometric, and we provide the exact asymptotic convergence rate.
This framework allows to deal with constrained optimization and encompasses the Expectation Maximization algorithm and the mirror descent algorithm, as well as some variants such as the $\alpha$-Expectation Maximization or the Mirror Prox algorithm.
Furthermore, we establish sufficient conditions for the convergence of the Mirror Prox algorithm, under which the method converges systematically to the unique minimizer of a convex function on a convex compact set.
\end{abstract}


\section{Introduction}

The minimization of a real-valued function is the most common formulation for mathematical optimization problems.  Examples of convex optimization problems in machine learning can be found for instance in \cite{bubeck}. 
For models  involving missing or latent data, \cite{dempster} introduced the modern formulation of the Expectation Maximization (EM) algorithm, whose convergence has been proved under general assumptions in \cite{wu}.

The asymptotic convergence rate of the EM algorithm has been widely
studied and identified as a ratio of missing information from the very
beginning \cite{dempster, meng_91,meng_93}. Since then, some links with gradient descent approaches have also
been drawn, see for instance \cite{lange}. Among the most notable
recent works, \cite{balakrishnan} provided quantitative results on the
non-asymptotic convergence of the EM algorithm to local optima by
considering smoothness and strong-concavity assumptions. In the
particular case of exponential families, \cite{kunstner} show that the
$M$-step is equivalent to a mirror descent update. This allows to
obtain non-asymptotic linear convergence rate, which directly depends on the ratio
of missing information. 

In this paper, instead of casting the EM algorithm into a gradient or
a mirror descent framework, we propose an extended formulation to
encompass both classes of algorithms, not restricted to exponential
families. 
Indeed, both EM and mirror descent algorithms can be defined using a bivariate function that is iteratively minimized with respect to one coordinate.
Such a representation can actually describe any iterative optimization algorithm whose minimization steps are parametrized only by the current parameter estimate. This paper provides the following contributions.
\begin{itemize}
    \item 
We prove under general assumptions that the convergence of such iterative optimization algorithms is asymptotically geometric, see Theorem~\ref{thm:main}. We also provide lower bounds for the rate of convergence, that allow to prove that the convergence can be only geometric, see Theorem~\ref{thm:main:lower_bound}, and in some cases to establish the exact asymptotic convergence rate, see Theorem~\ref{thm:main:exact_rate}.
We show that those assumptions are natural either in an EM or in a
mirror descent framework, and that they are satisfied generically
without requiring any notable technical work, in contrast with
non-asymptotic results that tend to be more demanding. Regarding the
EM algorithm, we retrieve the well-known ratio of missing information
under even more general assumptions, as the minimization mapping is
not required to be point-to-point and this framework allows to deal
with constrained optimization.
\item 
We derive results for settings with both finite and infinite data, as well as for a variant of the EM algorithm, known as the $\alpha$-EM algorithm, see \cite{matsuyama}. However, the most significant contribution is that brought to the mirror descent framework: under mild assumptions, we prove that its convergence is asymptotically geometric. This also applies to the mirror prox variant.
In a general manner, the convergence rates we exhibit are proved to be invariant to $C^2$-reparametrization.

\item  Furthermore, we prove that under general assumptions, the convergence of mirror prox is guaranteed for convex functions with a unique minimizer on a convex compact set, and that, without imposing any condition on the initialization.
\end{itemize}

This paper is organized as follows.
\Cref{sec:general_framework} introduces the general iterative optimization framework we consider and shows how it encompasses classical settings such as the EM algorithm or the mirror descent algorithm.
\Cref{sec:main} states the main general results of this paper on asymptotic convergence rates.
Sections \ref{sec:main:comm:submanifold}-\ref{sec:main:comm:thv} discuss the assumptions of \Cref{thm:main}, illustrating how they are met in those classical settings, but also in variants such as the $\alpha$-EM or the mirror prox algorithm.
\Cref{sec:main:proof} displays the proof of \Cref{thm:main}, and \Cref{sec:mirror_prox} is dedicated to the convergence of mirror prox.
A discussion follows in \Cref{sec:discussion}.
Additional proofs are postponed to \Cref{sec:app:proofs}, using technical results listed in \Cref{sec:app:tech} and proved in the Supplementary material.

\paragraph{Notation}
Throughout this paper, $\Spec{\cdot}$ denotes the spectrum of a matrix and $\rhom{\cdot}$ the spectral radius.
The Euclidean norm is denoted by $\norm{\cdot}_2$, the spectral norm by $\normmat{\cdot}_2$, the Frobenius norm by $\normmat{\cdot}_F$, and for all symmetric positive-definite matrices $S$, we define the norm $\norm{\cdot}_S$ by $\normtxt{x}_S^2 \eqdef x^\top S x$. The first derivative (resp. the second) of any univariate function $f$ is written $\partial f$ (resp. $\partial^2 f$). For all bivariate functions $\mcq\colon(x_1,x_2)\mapsto \Qtxt{x_1}{x_2}$ and $i,j\in\{1,2\}$, we write $\partial_i \mcq \eqdef \partial \mcq / \partial x_i$ and $\partial_{ij} \mcq \eqdef \partial^2 \mcq/\partial x_i \partial x_j$.
The maximum of two real numbers $a,b$ is denoted by $a\vee b$. For all topological spaces $\Eset$, their closure are written $\overline{\Eset}$ and their interior $\mathring{\Eset}$. Finally, $\Conv{\cdot}$ stands for convex hull, $\Aff{\cdot}$ for affine hull and $\ri{\cdot}$ for relative interior, i.e. the interior of a set within its affine hull.

\section{General framework} \label{sec:general_framework}

Let $q \in \nset^*$ and let $\mcq$ be a real-valued function defined on $\rset^q \times \rset^q$:
\begin{align*}
    \mcq \colon &\rset^q \times \rset^q \longrightarrow \rset\\
    & (\theta, \theta') \mapsto \Q{\theta}{\theta'}.
\end{align*}
Let $\Theta$ be a subset of $\rset^q$ and $\mcm$ be the point-to-set map defined on $\Theta$ by
\begin{equation*}
    \M{\theta} \eqdef \underset{\theta' \in \Theta}{\argmin} \eqsp \Q{\theta}{\theta'}.
\end{equation*}
In what follows, provided that $\M{\theta}\neq \emptyset$ for any $\theta
\in \Theta$, we let $\seq{\theta_n}$ be a sequence defined on $\Theta$ such that for all $n \in \nset$,
\begin{equation} \label{eq:general_framework:m}
    \theta_{n+1} \in \M{\theta_n}.
\end{equation}

\begin{manualexample}{1}[EM algorithm] \label{ex:em}
    Let $X$ and $Y$ be random variables taking values in measurable spaces $(\Xset,\mcx)$ and $(\Yset,\mcy)$, respectively.
    Assume that the pair $(X,Y)$ has a joint density function $p_{\thv}$ with respect to a reference measure $\mu$ on $\mcx\otimes \mcy$ that belongs to some parameterized family $\settxt{p_{\theta}}{\theta \in \Theta}$. Assume also that the state variable $X$ is
latent in the sense that the model is only partially observed through
the observation $Y$. In this case, the Expectation Maximization (EM)
algorithm, as defined in \cite[Appendix D.1, p.492]{randal_book}, provides an estimate of the unknown parameter $\thv$ by considering a sequence $\seq{\theta_n}$ defined on $\Theta$ by
    \begin{equation} \label{eq:general_framework:em_q}
        \theta_{n+1} \in \underset{\theta \in \Theta}{\argmin} \eqsp \Q{\theta_n}{\theta},
    \end{equation}
    where for all $\theta, \theta' \in \Theta \times \Theta$,
    \begin{equation} \label{eq:general_framework:em}        
        \Q{\theta}{\theta'} \eqdef - \PE_{\theta} \lrb{\log p_{\theta'}\lr{X, Y}|Y},
    \end{equation}
    and $\PE_{\theta}$ denotes the expectation under $p_{\theta}$.
    Note that $\mcq$ is a random function which depends on the observations we consider.
    For instance, in a model where $\seq[1\leq i\leq k]{X_i,Y_i}$ are independent and identically distributed, with $k$ observations $\seq[1\leq i\leq k]{Y_i}$, we define at the \textbf{sample level}: $X=(X_1,\ldots,X_k)$ and $Y=(Y_1,\ldots,Y_k)$, and inserting in \eqref{eq:general_framework:em}, we obtain up to a multiplicative constant (see \cite{balakrishnan}):
    \begin{equation}  \label{eq:general_framework:sample_em}
        \Q[\samp]{\theta}{\theta'} \eqdef - \frac 1 k \sum_{i=1}^k \int_{\Xset} p_{\theta}(x | Y_i) \log p_{\theta'}(x, Y_i) \mu(\rmd x).
    \end{equation}
    In the limit of infinite data (i.e. $k \to \infty$), we define at the \textbf{population level}:
    \begin{equation}   \label{eq:general_framework:pop_em}
        \Q[\pop]{\theta}{\theta'} \eqdef - \int_{\Yset} \lr{\int_{\Xset} p_{\theta}(x | y) \log p_{\theta'}(x, y) \mu(\rmd x)} p_{\thv}(y) \mu(\rmd y),
    \end{equation}
    where $x \mapsto p_{\theta}(x|y)$ denotes the conditional density of $X$ given $Y$ when the parameter value is $\theta$ and where we assume that  $\seq[i\geq 1]{Y_i}$ are iid with density $p_\thv$.
    Both settings are studied in this paper, replacing $\mcq$ in \eqref{eq:general_framework:em_q} by $\mcq^\samp$ or $\mcq^\pop$.
\end{manualexample}

\begin{manualexample}{2.1}[Mirror descent] \label{ex:mirror_descent}
    Let $\Cset$ be a convex compact set of $\rset^q$ and $f$ be a real-valued function defined on $\Cset$.
    The mirror descent strategy defined in \cite[Chapter 4, p.296]{bubeck} considers a convex open set $\Dset$ of $\rset^q$ such that $\Cset$ is contained in the closure of $\Dset$ and $\Cset \cap \Dset \neq \emptyset$,
    along with a mirror map $\Phi \colon \Dset \rightarrow \rset$, that is,
    \begin{enumerate}[(i)]
        \item $\Phi$ is strictly convex and differentiable, \label{hyp:mirror_map:strict_diff}
        \item the gradient of $\Phi$ takes all possible values: $\partial \Phi(\Dset) = \rset^q$, \label{hyp:mirror_map:surj}
        \item the gradient of $\Phi$ diverges on the boundary of $\Dset$: $\lim_{x \to \partial \Dset} \normtxt{\partial \Phi(x)} = +\infty$. \label{hyp:mirror_map:div}
    \end{enumerate}
    Then, the mirror descent algorithm produces two sequences $\seq{\theta_n}$ and $\seq{\zeta_n}$, defined on $\Cset$ and $\Dset$ respectively by
    \begin{align} \label{eq:ex:def_md}
        \left\{
        \begin{array}{ll}
            \partial \Phi(\zeta_{n+1}) = \partial \Phi(\theta_n) - \eta g_n, \quad \mbox{where} \ g_n \in \partial f(\theta_n), \\
            \theta_{n+1} \in \argmin_{\theta \in \Cset \cap \Dset} D_{\Phi}(\theta,\zeta_{n+1}),
        \end{array}
    \right.
    \end{align}
    where $\eta>0$ is the step-size, $\partial f$ is the sub-differential of $f$ (by abuse of notation) and $D_{\Phi}$ is the Bregman divergence associated with $\Phi$:
    \begin{equation} \label{eq:ex:bregman}
        \forall x,y \in \Dset, \quad D_{\Phi}(x,y) = \Phi(x) - \Phi(y) - \partial \Phi(y)^\top (x-y).
    \end{equation}
    Note that gradient descent is a particular case of mirror descent with $\Phi \colon x \mapsto x^\top x/2$.
    Following \cite[p.301]{bubeck}, mirror descent can be rewritten as
    \begin{equation*}
        \theta_{n+1} \in \underset{\theta \in \Cset \cap \Dset}{\argmin} \eqsp \eta g_n^\top \theta + D_{\Phi}(\theta,\theta_n),
    \end{equation*}
    which fits into the general framework \eqref{eq:general_framework:m} with $\Theta \eqdef \Cset \cap \Dset$ and $\mcq$ defined for all $(\theta, \theta') \in \Theta\times\Theta$ by
    \begin{equation} \label{eq:ex:md_q}
        \Q{\theta}{\theta'} \eqdef \eta g^\top \theta' + D_{\Phi}(\theta',\theta), \quad \mbox{where} \ g \in \partial f(\theta).
    \end{equation}
\end{manualexample}

\begin{manualexample}{2.2}[Mirror prox] \label{ex:mirror_prox}
    Mirror prox is a variant of mirror descent defined by the following equations \cite[Chapter 4, p.305]{bubeck}:
    \begin{align*}
        \partial \Phi(&\zeta'_{n+1}) = \partial \Phi(\theta_n) - \eta \partial f(\theta_n), \\
        &\zeta_{n+1} \in \underset{\theta \in \Cset \cap \Dset}{\argmin} \eqsp D_{\Phi}(\theta, \zeta'_{n+1}), \\
        \partial \Phi(&\theta'_{n+1}) = \partial \Phi(\theta_n) - \eta \partial f(\zeta_{n+1}), \\
        &\theta_{n+1} \in \underset{\theta \in \Cset \cap \Dset}{\argmin} \eqsp D_{\Phi}(\theta, \theta'_{n+1}).
    \end{align*}
    Straightforward algebra yields the equivalent definition:
    \begin{align}
        \zeta_{n+1} \in \M{\theta_n} = &\eqsp \underset{\theta \in \Cset \cap \Dset}{\argmin} \eqsp \eta \partial f(\theta_n)^\top \theta + D_{\Phi}(\theta, \theta_n), \label{eq:ex:mp:zeta} \\
        \theta_{n+1} \in &\eqsp \underset{\theta \in \Cset \cap \Dset}{\argmin} \eqsp \eta \partial f(\zeta_{n+1})^\top \theta + D_{\Phi}(\theta, \theta_n), \nonumber
    \end{align}
    where $\mcm$ is defined for mirror descent by \eqref{eq:ex:md_q}. We deduce from \Cref{ex:suff:md} that mirror prox fits into the general framework \eqref{eq:general_framework:m} with $\Theta \eqdef \Cset \cap \Dset$ and $\mcq^m$ defined on $\Theta \times \Theta$ by
    \begin{equation} \label{eq:ex:mp_q}
        \Q[m]{\theta}{\theta'} \eqdef \eta \partial f\lr{\M{\theta}}^\top \theta' + D_{\Phi}(\theta', \theta).
      \end{equation}
The fact that $\M{\theta}$  is a singleton is ensured by (i) and (iii) as in this case $\Phi$ is a Legendre function, see \cite[Lemma~11.1]{cesa2006prediction} or \cite[Theorem 3.12]{bauschke}.
\end{manualexample}

\section{Asymptotic convergence rate} \label{sec:main}

Assume there exists $\thv \in \Theta$ such that $\partial_2 \mcq$ is well-defined in a neighborhood of $\thv$ and differentiable at $(\thv,\thv)$, and write
\begin{equation*}
    \Av \eqdef \dtwice{2}{2}{\Q{\thv}{\thv}}, \quad \Bv \eqdef -\dtwice{1}{2}{\Q{\thv}{\thv}}.
\end{equation*}
Let $\Vset \eqdef \mathrm{span}\settxt{\theta-\theta'}{\theta,\theta'\in\Theta}$ be the direction of $\Aff{\Theta}$.
Consider the following set of assumptions.
\begin{hyp}{H}{H} \item \label{hyp:main:convexity}
    The set $\Theta$ is convex.
\end{hyp}
\begin{hyp}{H}{H} \item \label{hyp:main:convergence}
    The sequence $\seq{\theta_n}$ converges to $\thv$.
\end{hyp}
\begin{hyp}{H}{H} \item \label{hyp:main:regularity}
    There exists a neighborhood of $(\thv, \thv)$ on which $\mcq$ is continuous and $\partial_2 \mcq$ is well-defined and $C^1$-differentiable.
\end{hyp}
\begin{hyp}{H}{H} \item \label{hyp:main:thv}
    The matrix $\Bv$ is symmetric and for all $v\in\Vset\setminus \{0\}$, $v^\top \Av v > |v^\top \Bv v|$. 
\end{hyp}
Under \ref{hyp:main:thv}, we can define
\begin{equation} \label{eq:main:rhosup}
    \rhosup \eqdef \sup_{v\in\Vset \setminus \{0\}} \frac{|v^\top \Bv v|}{v^\top \Av v}.  
  \end{equation}
In what follows, we set by convention, $\log 0 = -\infty$.

\begin{theorem} \label{thm:main}
    Assume that \ref{hyp:main:convexity}-\ref{hyp:main:thv} hold. Then, $\rhosup \in[0;1)$ and for all $\rho \in (\rhosup; 1)$,
    \begin{equation*}
        \theta_n-\thv = \lito(\rho^n), 
    \end{equation*}
    or equivalently,
\begin{equation*}
        \underset{n\to\infty}{\lim \sup} \eqsp \frac 1 n \log\norm{\theta_n-\thv}_2 \leq \log\rhosup.
    \end{equation*}
  \end{theorem}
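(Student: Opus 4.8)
The plan is to linearise the first-order optimality relation satisfied by the iterates around $(\thv,\thv)$ and to exploit the variational characterisation of each minimiser, working throughout in the subspace $\Vset$ endowed with the inner product induced by $\Av$.

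First I would dispose of the bound $\rhosup\in[0,1)$. By \ref{hyp:main:regularity} the matrix $\Av=\partial_{22}\mcq(\thv,\thv)$ is symmetric (Schwarz), and by \ref{hyp:main:thv} its quadratic form satisfies $v^\top\Av v>|v^\top\Bv v|\geq0$ for every $v\in\Vset\setminus\{0\}$, so $\Av$ restricted to $\Vset$ is symmetric positive definite and $\normtxt{\cdot}_{\Av}$ is a genuine norm on $\Vset$, equivalent there to $\normtxt{\cdot}_2$. The map $v\mapsto|v^\top\Bv v|/(v^\top\Av v)$ is continuous on the unit sphere of the finite-dimensional space $\Vset$, which is compact, so the supremum in \eqref{eq:main:rhosup} is attained; it is nonnegative and, again by \ref{hyp:main:thv}, strictly below $1$.

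Next I would record the variational inequalities. Set $\delta_n\eqdef\theta_n-\thv$, which lies in $\Vset$ by \ref{hyp:main:convexity}. Since $\theta_{n+1}$ minimises $\theta'\mapsto\mcq(\theta_n,\theta')$ over the convex set $\Theta$ and $\partial_2\mcq$ exists near $(\thv,\thv)$ by \ref{hyp:main:regularity}, the inequality $\pscal{\partial_2\mcq(\theta_n,\theta_{n+1})}{\theta'-\theta_{n+1}}\geq0$ holds for all $\theta'\in\Theta$. Letting $n\to\infty$ and invoking \ref{hyp:main:convergence} together with the continuity of $\partial_2\mcq$ yields the same inequality at $\thv$, namely $\pscal{\partial_2\mcq(\thv,\thv)}{\theta'-\thv}\geq0$ for all $\theta'\in\Theta$. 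Testing the first at $\theta'=\thv$, the second at $\theta'=\theta_{n+1}$, and adding gives
\begin{equation*}
\pscal{\partial_2\mcq(\theta_n,\theta_{n+1})-\partial_2\mcq(\thv,\thv)}{\delta_{n+1}}\leq0.
\end{equation*}
The core step is then to linearise the left-hand side. Differentiability of $\partial_2\mcq$ at $(\thv,\thv)$ gives $\partial_2\mcq(\theta_n,\theta_{n+1})-\partial_2\mcq(\thv,\thv)=-\Bv\delta_n+\Av\delta_{n+1}+r_n$ with $\normtxt{r_n}_2=o(\normtxt{\delta_n}_2+\normtxt{\delta_{n+1}}_2)$, so that, writing $a_n\eqdef\normtxt{\delta_n}_{\Av}$,
\begin{equation*}
a_{n+1}^2=\delta_{n+1}^\top\Av\delta_{n+1}\leq\delta_{n+1}^\top\Bv\delta_n-r_n^\top\delta_{n+1}.
\end{equation*}
Here I would invoke the key algebraic fact: since $\Av$ is symmetric positive definite on $\Vset$ and $\Bv$ is symmetric, they are simultaneously diagonalisable in the $\Av$-inner product, with all generalised eigenvalues bounded in modulus by $\rhosup$; this yields the generalised Cauchy--Schwarz inequality $|u^\top\Bv v|\leq\rhosup\,\normtxt{u}_{\Av}\normtxt{v}_{\Av}$ for $u,v\in\Vset$. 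Applying it to the cross term, and the norm equivalence on $\Vset$ to the remainder, gives for every $\epsilon>0$ and all large $n$ the relation $a_{n+1}^2\leq\rhosup\,a_{n+1}a_n+\epsilon(a_n+a_{n+1})a_{n+1}$.

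Finally I would close the recursion. Ignoring the trivial indices where $\delta_{n+1}=0$ (for which $\theta_{n+1}=\thv$) and dividing by $a_{n+1}$ yields $a_{n+1}(1-\epsilon)\leq(\rhosup+\epsilon)a_n$; choosing $\epsilon$ small enough that $(\rhosup+\epsilon)/(1-\epsilon)<\rho$ gives $a_{n+1}\leq\rho'a_n$ for some $\rho'\in(\rhosup,\rho)$ and all $n$ beyond a threshold, whence $a_n=O(\rho'^{\,n})=o(\rho^n)$. The norm equivalence transfers this to $\normtxt{\theta_n-\thv}_2$, and taking logarithms and a limit superior over $\rho'\downarrow\rhosup$ gives the stated $\limsup$ bound. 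The main obstacle I anticipate is the bookkeeping of the remainder: one must absorb the $o(\normtxt{\delta_n}_2+\normtxt{\delta_{n+1}}_2)$ contribution, which features $\delta_{n+1}$ on both sides, into a clean one-step contraction, and must rigorously justify passing to the limit in the variational inequality to obtain stationarity of $\thv$ — both steps rely essentially on \ref{hyp:main:convergence} and on the $C^1$ regularity in \ref{hyp:main:regularity}.
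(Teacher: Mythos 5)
Your proof is correct, and it shares the paper's skeleton: compare the first-order optimality conditions satisfied by $\theta_{n+1}$ (as minimizer of $\Qtxt{\theta_n}{\cdot}$) and by $\thv$, linearise the resulting inequality, and close with a generalised Cauchy--Schwarz bound in the inner product induced by $\Av$ on $\Vset$. The genuine difference is in how the linearisation and its error are handled. The paper inserts the intermediate point $\donce{\Q{\theta_n}{\thv}}$ and applies Taylor's theorem with integral remainder separately in each argument, yielding the \emph{exact} identities \eqref{eq:main:d2_an}--\eqref{eq:main:d2_bn} with matrices $\An,\Bn$ and no remainder term; the contraction is then established in the $n$-dependent norms $\normtxt{\cdot}_{\tAn}$ with factor $\rhon=\normmattxt{\tAn^{-1/2}\tBn\tAn^{-1/2}}_2$ and transferred to the fixed norm via the perturbation Lemmas~\ref{lemma:tech:rho_approx} and~\ref{lemma:tech:norm_approx}. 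You instead expand $\partial_2\mcq$ jointly at $(\thv,\thv)$, keep the limiting matrices $\Av,\Bv$ from the outset, and absorb the $\lito$-remainder $r_n$ by an $\epsilon$-argument directly in $\normtxt{\cdot}_{\Av}$; your inequality $|u^\top\Bv v|\leq\rhosup\normtxt{u}_{\Av}\normtxt{v}_{\Av}$ is precisely the content of Lemmas~\ref{lemma:tech:domination} and~\ref{lemma:tech:rho_a_b} applied to the limit matrices, and your bookkeeping (division by $a_{n+1}$, with the case $a_{n+1}=0$ trivially consistent with the contraction) is sound. What each route buys: yours is more self-contained, dispensing with the matrix perturbation lemmas and the moving norms; the paper's heavier formulation pays off downstream, since the exact identity $\tBn\tilde{\Delta}_n=\tAn\tilde{\Delta}_{n+1}$ available from the integral-remainder form (when $\thv\in\ri{\Theta}$) is exactly what drives \Cref{thm:main:lower_bound} and \Cref{thm:main:exact_rate}, and a little-o expansion cannot deliver it. A minor further variation: you obtain stationarity of $\thv$ by passing to the limit in the variational inequality using continuity of $\partial_2\mcq$, whereas the paper's \Cref{lemma:main:minimizer} passes to the limit in $\Qtxt{\theta_n}{\theta_{n+1}}\leq\Qtxt{\theta_n}{\theta}$ using continuity of $\mcq$ and only then differentiates; both are legitimate under \ref{hyp:main:regularity}.
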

As any two norms on a finite-dimensional linear space are equivalent, Theorem~\ref{thm:main} and the next results could be given using another norm on $\Theta$. They are stated here with $\|\cdot\|_2$ for simplicity.

\begin{proof} \renewcommand{\qedsymbol}{}
    See \Cref{sec:main:proof}.
  \end{proof}
In \cite[Theorem~1]{balakrishnan}, the authors prove that  the population EM algorithm converges geometrically. Their proof rely mainly on convergence results for gradient ascent algorithms applied to the intermediate quantity of the EM algorithm which is assumed to be smooth and strongly concave.  \Cref{thm:main} establishes under general assumptions that the convergence of the algorithms introduced in \Cref{sec:general_framework} is asymptotically geometric.
Corollary~\ref{cor:main:Q} extends the statement of \Cref{thm:main} to the values taken by the function $(\theta,\theta') \mapsto \mcq_\theta(\theta')$ which are invariant to the choice of the parametrisation.

\begin{corollary}\label{cor:main:Q}
    Under \ref{hyp:main:convexity}-\ref{hyp:main:thv}, if $\partial_1 \Q{\thv}{\thv}$ is well-defined, then for all $\rho \in (\rhosup; 1)$,
    \begin{equation*}
        \Q{\theta_n}{\theta_{n+1}} - \Q{\thv}{\thv} = \lito(\rho^{n}).
    \end{equation*}
\end{corollary}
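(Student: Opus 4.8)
The plan is to deduce the statement directly from the rate already established for the iterates. By \Cref{thm:main}, for every $\rho\in(\rhosup;1)$ we have $\theta_n-\thv=\lito(\rho^n)$; since $\rho$ is a fixed constant, this also gives $\theta_{n+1}-\thv=\lito(\rho^{n+1})=\lito(\rho^n)$. Hence both arguments of $\mcq$ in $\Q{\theta_n}{\theta_{n+1}}$ tend to $\thv$ at rate $\rho^n$, and it suffices to show that the increment $\Q{\theta_n}{\theta_{n+1}}-\Q{\thv}{\thv}$ is, up to a $\lito$, linearly controlled by $\norm{\theta_n-\thv}_2$ and $\norm{\theta_{n+1}-\thv}_2$.

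The key step is to split the increment so that each argument is varied precisely where the relevant regularity is available:
\begin{equation*}
    \Q{\theta_n}{\theta_{n+1}}-\Q{\thv}{\thv}
    =\lr{\Q{\theta_n}{\theta_{n+1}}-\Q{\theta_n}{\thv}}
    +\lr{\Q{\theta_n}{\thv}-\Q{\thv}{\thv}}.
\end{equation*}
For the first term I would apply the mean value theorem to $\theta'\mapsto\Q{\theta_n}{\theta'}$, which is legitimate since, by \ref{hyp:main:regularity}, $\partial_2\mcq$ is $C^1$ (hence bounded) on a neighborhood of $(\thv,\thv)$ that, by \ref{hyp:main:convergence}, contains $(\theta_n,\xi_n)$ for all large $n$, where $\xi_n$ lies on the segment between $\thv$ and $\theta_{n+1}$. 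This yields $\Q{\theta_n}{\theta_{n+1}}-\Q{\theta_n}{\thv}=\partial_2\Q{\theta_n}{\xi_n}^\top(\theta_{n+1}-\thv)$, a bounded vector times $\lito(\rho^n)$, hence $\lito(\rho^n)$. For the second term, the differentiability of $\theta\mapsto\Q{\theta}{\thv}$ at $\thv$ --- which is exactly the hypothesis that $\partial_1\Q{\thv}{\thv}$ is well-defined --- gives the first-order expansion $\Q{\theta_n}{\thv}-\Q{\thv}{\thv}=(\partial_1\Q{\thv}{\thv})^\top(\theta_n-\thv)+\lito(\norm{\theta_n-\thv}_2)=\lito(\rho^n)$. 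Summing the two bounds closes the argument.

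The one point that requires care, and that dictates the whole proof, is the asymmetry of the regularity assumptions: $\partial_2\mcq$ is controlled on an entire neighborhood of $(\thv,\thv)$, whereas $\partial_1\mcq$ is only known to exist at the single point $(\thv,\thv)$. This is why the decomposition must perform the first-argument variation with the second argument pinned at $\thv$ (splitting in the other order would require $\partial_1\mcq$ at $(\thv,\theta_{n+1})$, which is not granted), and why the corresponding term is handled by a one-point first-order expansion rather than a mean value estimate. The remaining verifications are routine: that the relevant points enter the neighborhood of \ref{hyp:main:regularity} for large $n$, which follows from \ref{hyp:main:convergence}, and that the boundedness of $\partial_2\mcq$ passes to the inner product via Cauchy--Schwarz.
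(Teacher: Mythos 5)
Your proof is correct and is essentially the paper's own argument: the same splitting of $\Q{\theta_n}{\theta_{n+1}} - \Q{\thv}{\thv}$ into a second-argument increment handled through the local regularity of $\partial_2\mcq$ and a first-argument increment handled by the first-order expansion that the well-definedness of $\partial_1 \Q{\thv}{\thv}$ provides, followed by an appeal to \Cref{thm:main}. The only cosmetic difference is that the paper writes the second-argument increment as a Taylor expansion with integral remainder and invokes continuity of $\partial_2\mcq$, whereas you use the mean value theorem and local boundedness of $\partial_2\mcq$; both yield the same $O(\norm{\theta_{n+1}-\thv}_2)$ control and hence the same conclusion.
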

\begin{proof} \renewcommand{\qedsymbol}{}
    See \Cref{sec:app:main}.
\end{proof}
Besides, the speed of convergence can be lower-bounded if the limit $\thv$ lies in the relative interior of $\Theta$.
Under \ref{hyp:main:thv}, we can define
\begin{equation} \label{eq:main:rhoinf}
    \rhoinf \eqdef \inf_{v\in\Vset \setminus \{0\}} \frac{|v^\top \Bv v|}{v^\top \Av v}.
\end{equation}

\begin{theorem} \label{thm:main:lower_bound}
    Assume that \ref{hyp:main:convexity}-\ref{hyp:main:thv} hold, that $\thv\in\ri{\Theta}$, and that the sequence $\seq{\theta_n}$ is not eventually equal to $\thv$. Then, $\rhoinf\in[0;1)$ and
    \begin{equation*}
        \log \rhoinf \leq \underset{n\to\infty}{\lim \inf} \eqsp \frac 1 n \log\norm{\theta_n-\thv}_2.
    \end{equation*}
\end{theorem}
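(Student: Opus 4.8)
The plan is to linearise the iteration around $\thv$ on the subspace $\Vset$ and to read off a lower bound for the decay rate from the spectrum of the resulting linear map. Because $\thv\in\ri{\Theta}$ and $\theta_n\to\thv$, for all large $n$ both $\theta_n$ and $\theta_{n+1}$ belong to $\ri{\Theta}$; since $\theta_{n+1}\in\M{\theta_n}$ minimises $\mcq_{\theta_n}$ over $\Theta$ and lies in the relative interior, the first-order optimality condition holds as an \emph{equality}, namely $\Pi_\Vset\partial_2\Q{\theta_n}{\theta_{n+1}}=0$, where $\Pi_\Vset$ denotes the orthogonal projector onto $\Vset$. This is exactly the step where the relative-interior hypothesis is essential and where the argument departs from \Cref{thm:main}: the upper bound only needs the one-sided variational inequality obtained by moving from $\theta_{n+1}$ towards $\thv$, whereas here one must be free to move in both directions $\pm v$ for every $v\in\Vset$. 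Letting $n\to\infty$ and invoking the continuity of $\partial_2\mcq$ from \ref{hyp:main:regularity} gives $\Pi_\Vset\partial_2\Q{\thv}{\thv}=0$ as well. Setting $u_n\eqdef\theta_n-\thv\in\Vset$ and Taylor-expanding $\partial_2\mcq$ at $(\thv,\thv)$, whose partial derivatives there are $\partial_{22}\mcq=\Av$ and $\partial_{12}\mcq=-\Bv$, I substitute into $\Pi_\Vset\partial_2\Q{\theta_n}{\theta_{n+1}}=0$ and use $\Pi_\Vset\partial_2\Q{\thv}{\thv}=0$ to obtain
\begin{equation*}
\Pi_\Vset\Av u_{n+1}=\Pi_\Vset\Bv u_n+\mathrm{o}(\norm{u_n}_2+\norm{u_{n+1}}_2).
\end{equation*}
This is the same linearised recursion that drives the proof of \Cref{thm:main} in \Cref{sec:main:proof}, from which it may be imported.

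Next I set up the spectral picture on $\Vset$. By \ref{hyp:main:thv} the bilinear form $\pscal{v}{w}_{\Av}\eqdef v^\top\Av w$ is an inner product on $\Vset$, and there is a unique operator $G$ on $\Vset$ with $\pscal{Gv}{w}_{\Av}=v^\top\Bv w$ for all $v,w\in\Vset$, so the recursion becomes $u_{n+1}=Gu_n+\mathrm{o}(\norm{u_n}_2+\norm{u_{n+1}}_2)$. Since $\Bv$ is symmetric, $G$ is self-adjoint for $\pscal{\cdot}{\cdot}_{\Av}$, hence diagonalisable with real eigenvalues $\lambda_i$ whose Rayleigh quotients equal $v^\top\Bv v/v^\top\Av v$. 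Consequently $\norm{G}_{\Av}=\max_i|\lambda_i|=\rhosup$ and, when the quotient keeps a constant sign on $\Vset\setminus\{0\}$, $\min_i|\lambda_i|=\rhoinf$; in all cases $\rhoinf\leq\rhosup<1$ by \Cref{thm:main}, so $\rhoinf\in[0;1)$. The quantitative fact I need is the reverse bound: if $\rhoinf>0$ then every eigenvalue has modulus at least $\rhoinf$, whence $\norm{Gv}_{\Av}\geq\rhoinf\norm{v}_{\Av}$ for all $v\in\Vset$, using $\norm{Gv}_{\Av}^2=\pscal{G^2v}{v}_{\Av}\geq\rhoinf^2\norm{v}_{\Av}^2$.

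I then convert the recursion into a decay lower bound. If $\rhoinf=0$ the claim is trivial as $\log\rhoinf=-\infty$, so assume $\rhoinf>0$. Taking $\Av$-norms in $u_{n+1}=Gu_n+\mathrm{o}(\norm{u_n}_2+\norm{u_{n+1}}_2)$ and using $\norm{G}_{\Av}=\rhosup$ first produces, for $n$ large, a ratio bound $\norm{u_{n+1}}_{\Av}\leq C\norm{u_n}_{\Av}$ (this bound is also part of \Cref{thm:main}); hence the remainder is $\mathrm{o}(\norm{u_n}_{\Av})$. The reverse Rayleigh bound then gives, for every $\epsilon\in(0;\rhoinf)$ and all large $n$,
\begin{equation*}
\norm{u_{n+1}}_{\Av}\geq\norm{Gu_n}_{\Av}-\mathrm{o}(\norm{u_n}_{\Av})\geq(\rhoinf-\epsilon)\norm{u_n}_{\Av}.
\end{equation*}
As the sequence is not eventually equal to $\thv$, I choose an index $N$ past the validity threshold with $u_N\neq0$; since $\rhoinf-\epsilon>0$, the inequality above forces $u_n\neq0$ and $\norm{u_n}_{\Av}\geq(\rhoinf-\epsilon)^{n-N}\norm{u_N}_{\Av}$ for all $n\geq N$. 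Dividing by $n$, taking $\liminf$, letting $\epsilon\to0$, and passing back to $\norm{\cdot}_2$ by equivalence of norms yields $\liminf_n\frac1n\log\norm{\theta_n-\thv}_2\geq\log\rhoinf$.

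The main obstacle is the upgrade in the first paragraph from the variational inequality to the exact orthogonality $\Pi_\Vset\partial_2\Q{\theta_n}{\theta_{n+1}}=0$: this is where $\thv\in\ri{\Theta}$ is indispensable, and it is the reason a genuine lower bound (and not merely an upper bound on the rate) becomes available. The second delicate point is the bootstrap in the last paragraph, where one must establish the upper ratio bound before the remainder can be absorbed into $\mathrm{o}(\norm{u_n}_{\Av})$, and then combine the non-degeneracy hypothesis with the strict inequality $\rhoinf-\epsilon>0$ to ensure $u_n\neq0$ along the entire tail, so that the logarithm and the geometric lower bound are well defined.
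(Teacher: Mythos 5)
Your proposal is correct and follows essentially the same route as the paper's proof: the relative-interior hypothesis upgrades the variational inequality to the exact stationarity conditions $\Pi_\Vset\partial_2\Q{\theta_n}{\theta_{n+1}}=\Pi_\Vset\partial_2\Q{\thv}{\thv}=0$, which linearizes to a recursion driven by $\Av$ and $\Bv$ on $\Vset$, and the lower bound $\norm{u_{n+1}}_{\Av}\geq(\rhoinf-\epsilon)\norm{u_n}_{\Av}$ then follows from the fact that every eigenvalue of the $\Av$-self-adjoint operator representing $\Bv$ has modulus at least $\rhoinf$, with the non-degeneracy hypothesis supplying a nonzero starting index for the induction. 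The only difference is bookkeeping: the paper keeps the Taylor remainder exact via the integral matrices $\An,\Bn$ of \eqref{eq:notation:an_bn} (so that $\tBn\tilde{\Delta}_n=\tAn\tilde{\Delta}_{n+1}$ holds with no error term, the error being pushed into the convergence $\tAn\to\tAv$, $\tBn\to\tBv$), whereas you expand at $(\thv,\thv)$ with an $\lito$ remainder and absorb it by a bootstrap, which is an equally valid handling of the same estimate.
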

\begin{proof} \renewcommand{\qedsymbol}{}
    See \Cref{sec:app:main}.
\end{proof}
If the limit $\thv$ lies in the relative interior of $\Theta$ and $\rhoinf > 0$, the asymptotic convergence is therefore only geometric.

\begin{theorem} \label{thm:main:exact_rate}
    Assume that \ref{hyp:main:convexity}-\ref{hyp:main:thv} hold, that $\partial_2 \mcq$ is $C^2$-differentiable in a neighborhood of $(\thv, \thv)$, that $\thv\in\ri{\Theta}$ and that for all $p\in \nset$, $\mathrm{Span}(\theta_{n}-\thv, n\geq p)=\Vset$. Then $\rhoinf, \rhosup\in[0;1)$ and
    \begin{equation*}
        \log \min\lr{\rhosup, \frac {\rhoinf}{\rhosup}} \leq \underset{n\to\infty}{\lim \inf} \eqsp \frac 1 n \log\norm{\theta_n-\thv}_2,
    \end{equation*}
    where in the left-hand term we use the convention $0/0=0$ and $\log(0)=-\infty$. In particular, if $\rhosup^2 \leq \rhoinf$ then
    \begin{equation*}
        \underset{n\to\infty}{\lim} \frac 1 n \log \norm{\theta_n - \thv}_2 = \log \rhosup.
    \end{equation*}
\end{theorem}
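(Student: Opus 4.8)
The plan is to analyse the linearised dynamics near $\thv$ and to track separately the component of the error along a dominant eigendirection of the relevant self-adjoint operator. First I would set $u_n \eqdef \theta_n - \thv \in \Vset$ and use that, since $\thv \in \ri{\Theta}$ and $u_n \to 0$, eventually $\theta_{n+1}$ is an interior minimiser, so the first-order condition forces $\donce{\Q{\theta_n}{\theta_{n+1}}}$ to vanish along $\Vset$. Exploiting the $C^2$-differentiability of $\donce{\mcq}$ assumed here (rather than the mere $C^1$ of \Cref{thm:main}), a Taylor expansion at $(\thv,\thv)$ projected onto $\Vset$ gives
\begin{equation*}
    \Av u_{n+1} = \Bv u_n + \mathrm{O}\lr{\norm{u_n}_2^2},
\end{equation*}
where the remainder is genuinely quadratic. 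Since $\Av$ is positive definite on $\Vset$ by \ref{hyp:main:thv}, the operator $T \eqdef \Av^{-1}\Bv$ is well-defined on $\Vset$ and self-adjoint for the inner product $(x,y)\mapsto \pscal{x}{\Av y}$; its eigenvalues are real, their moduli lie in $[\rhoinf,\rhosup]$, and $\rhosup = \rhom{T}$ while $\rhoinf = \min_{\lambda \in \Spec{T}}|\lambda|$. The recursion thus reads $u_{n+1} = T u_n + r_n$ with $\norm{r_n}_2 \le \kappa \norm{u_n}_2^2$, and \Cref{thm:main} already supplies the upper bound $\llimsup_n \tfrac1n\log\norm{u_n}_2 \le \log\rhosup$.

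For the lower bound I would fix a unit eigenvector $e$ of $T$ with $|\lambda_e| = \rhosup$ and follow the scalar $x_n \eqdef \pscal{u_n}{\Av e}$, which obeys $x_{n+1} = \lambda_e x_n + \mathrm{O}(\norm{u_n}_2^2)$. The heart of the matter is to show, using the spanning hypothesis $\mathrm{Span}\seq[n\ge p]{u_n} = \Vset$ for every $p$, that the trajectory cannot asymptotically concentrate on a slow invariant subspace. Writing $\nu \eqdef \exp\lr{\lliminf_n \tfrac1n\log\norm{u_n}_2}$, the idea is that whenever $\nu < \rhosup$ the dominant coordinate $x_n$ is fed only through the quadratic remainder, hence is forced down to the scale $\norm{u_n}_2^2$, while the coordinates carrying the norm decay no faster than $\rhoinf^n$ (the engine of \Cref{thm:main:lower_bound}). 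Balancing the exponential scale $\nu^2$ at which the quadratic source feeds the dominant direction against its linear persistence at scale $\rhosup$ and against the slowest admissible decay $\rhoinf$ is what should produce $\nu \ge \rhoinf/\rhosup$; together with the alternative $\nu \ge \rhosup$ this yields $\nu \ge \min(\rhosup,\rhoinf/\rhosup)$, i.e. the announced inequality.

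The last assertion then follows by a squeeze. When $\rhosup^2 \le \rhoinf$ one has $\rhoinf/\rhosup \ge \rhosup$, so $\min(\rhosup,\rhoinf/\rhosup) = \rhosup$ and the lower bound reads $\lliminf_n \tfrac1n\log\norm{u_n}_2 \ge \log\rhosup$; combined with the upper bound of \Cref{thm:main} this forces $\lliminf$ and $\llimsup$ to coincide, so the limit exists and equals $\log\rhosup$.

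The hard part will be the middle step. Because the quadratic source injects mass into the dominant eigendirection at exactly the same exponential order as that direction's own linear decay once the two become comparable (both live at the scale $\rhosup^n$), naive triangle-inequality estimates permit cancellations and are too weak to pin down the rate. The spanning hypothesis must be used quantitatively to exclude collapse onto a proper slow eigenspace, and the genuine work is the bookkeeping across eigenspaces — controlling which coordinates are fed at which rate and ruling out sustained cancellation in the dominant coordinate — which is precisely what converts the crude bound $\rhoinf$ of \Cref{thm:main:lower_bound} into the sharper $\rhoinf/\rhosup$. Everything else, namely the interior first-order condition, the quadratic Taylor remainder afforded by the extra $C^2$ regularity, and the spectral description of $\rhoinf$ and $\rhosup$, is routine given \ref{hyp:main:convexity}–\ref{hyp:main:thv}.
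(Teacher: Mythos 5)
Your set-up is sound and agrees with the paper's: the interior first-order condition, the identification of $\rhosup$ and $\rhoinf$ with the extreme moduli of the (real) eigenvalues of $\tAv^{-1}\tBv$, the upper bound imported from \Cref{thm:main}, and the final squeeze when $\rhosup^2\leq\rhoinf$ are all correct. But the proof has a genuine gap exactly where you locate the ``hard part'': the step that produces $\rhoinf/\rhosup$ is never carried out, only described as a balance of exponential scales. Worse, the route you sketch for filling it cannot work as stated. In a forward analysis of $u_{n+1}=Tu_n+r_n$ with $\norm{r_n}_2\leq\kappa\norm{u_n}_2^2$, propagating a nonzero dominant coordinate requires that, at some time $n_1$, this coordinate dominate the accumulated quadratic source, which is of order $\rhosup^{2 n_1}$ up to constants. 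The spanning hypothesis is purely qualitative: it yields nonzero components but no lower bound whatsoever on their size, and a trajectory whose dominant coordinate is nonzero yet super-exponentially small is perfectly compatible with $\mathrm{Span}(\theta_n-\thv,\, n\geq p)=\Vset$ for all $p$. So the ``quantitative use of the spanning hypothesis'' you appeal to is not available, and no amount of bookkeeping across eigenspaces in the forward direction will extract it.

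The paper's proof removes both obstacles with two moves absent from your sketch. First, it never linearizes around the fixed matrices $\tAv,\tBv$: by \eqref{eq:bdelta_adelta} the recursion $\tBn\tilde{\Delta}_n=\tAn\tilde{\Delta}_{n+1}$ is \emph{exact}, where $\An,\Bn$ are the integral-remainder Taylor matrices of \eqref{eq:notation:an_bn}; the extra $C^2$ regularity combined with the upper bound of \Cref{thm:main} then gives $\normmat{(\tSn^R)^{-1}-\tDv^{-1}}_2=\lito(\rho^n)$ for any $\rho\in(\rhosup;1)$, where $\tSn\eqdef\tAn^{-1}\tBn$ and $R$ diagonalizes $\tSv\eqdef\tAv^{-1}\tBv$ into $\tDv$. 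There is no quadratic remainder term at all; the error lives entirely in the geometric convergence of the matrices. Second --- and this is the key device --- the recursion is iterated \emph{backwards}: fixing $n$ such that the dominant coordinate satisfies $\tilde{\Delta}_n^R(i)\neq 0$ with $|\tDv(i,i)|=\rhosup$ (this is the \emph{only} use of the spanning hypothesis, via \Cref{lem:decomp:base}), one writes $\tilde{\Delta}_n^R$ as the product of the inverses $(\tS_{n+k}^R)^{-1}$, $k=0,\dots,m-1$, applied to $\tilde{\Delta}_{n+m}^R$, and bounds this product in norm by roughly $\max\lr{\rhosup^{-1},\rhosup\rhoinf^{-1}}^m$. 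Backwards, the dominant eigenvalue is the \emph{most contractive} direction, so a plain triangle inequality suffices: cancellation is irrelevant because one only needs an upper bound on $|\tilde{\Delta}_n^R(i)|$ in terms of $\norm{\tilde{\Delta}_{n+m}^R}_2$, and the arbitrary smallness of the fixed nonzero coordinate is harmless since it is a constant while $m\to\infty$. Letting $m\to\infty$ and then letting $\delta$ decrease to $\max(\rhosup^{-1},\rhosup\rhoinf^{-1})$ yields precisely $\log\min(\rhosup,\rhoinf/\rhosup)\leq\liminf_{n\to\infty}\frac 1n\log\norm{\theta_n-\thv}_2$. Without this reversal of time (or an equivalent device turning the needed lower bound into an upper bound on a backward product), your forward argument cannot close.
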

\begin{proof} \renewcommand{\qedsymbol}{}
    See \Cref{sec:app:main}.
\end{proof}

\section{Comments on H1} \label{sec:main:comm:submanifold}

The assumption that $\Theta$ needs to be convex can be relaxed as follows.
\newcounter{hypH}
\begin{hyp}{H'}{Hprime} \item \label{hyp:sm:theta}
    There exist $\Eset\subset\rset^q$ and a submanifold $\Sset\subset\rset^q$ of class $C^2$ such that $\Theta = \Eset\cap\Sset$ and $\thv\in\mathring{\Eset}$.
\end{hyp}
Under \ref{hyp:sm:theta}, if $d$ is the dimension of the submanifold $\Sset$, for all $x\in\Sset$, there exist $\Uset_1$, $\Uset_2$ two open neighborhoods of $x$ and the null-vector $\0$ in $\rset^q$, respectively, and a $C^2$-diffeomorphism $\psi\colon\Uset_1\rightarrow\Uset_2$ such that $\psi(x)=\0$ and $\psi(\Uset_1\cap\Sset)=\Uset_2\cap(\rset^d\times\{0\}^{q-d})$. Note that we identify $\rset^q$ and $\rset^d\times\rset^{q-d}$ in a standard way using $(x_1,\ldots,x_q)\mapsto ((x_1,\ldots,x_d),(x_{d+1},\ldots,x_q))$. Write $\Tthv$ the tangent space to $\Sset$ at the point $\thv$. If $\Uset_1$ and $\Uset_2$ are two open neighborhoods of $\thv$ and the null-vector $\0$ in $\rset^q$, respectively, and $\psi\colon\Uset_1\rightarrow\Uset_2$ is a $C^1$-diffeomorphism  such that $\psi(\thv)=\0$ and $\psi(\Uset_1\cap\Sset)=\Uset_2\cap(\rset^d\times\{0\}^{q-d})$, then $\Tthv = \partial\psi^{-1}_{\thv}(\rset^d\times\{0\}^{q-d})$, where $\partial\psi$ is the differential of $\psi$ at $\thv$. 
\begin{enumerate}[label=\textbf{\sf(H'\arabic*)},resume=hypHprime, start=4]
  \begin{sf}
    \item \label{hyp:sm:thv}
    The matrix $\Bv$ is symmetric and for all $v\in\Tthv\setminus \{0\}$, $v^\top \Av v > |v^\top \Bv v|$.
    \end{sf}
\end{enumerate}
Under \ref{hyp:sm:thv}, we can define
\begin{equation} \label{eq:sm:rhos}
    \rhoinfsm \eqdef \inf_{v\in\Tthv\setminus \{0\}} \frac{|v^\top \Bv v|}{v^\top \Av v} \quad \mbox{and} \quad \rhosupsm \eqdef \sup_{v\in\Tthv \setminus \{0\}} \frac{|v^\top \Bv v|}{v^\top \Av v}.
\end{equation}

\begin{theorem} \label{thm:sm}
    Assume that \ref{hyp:sm:theta}, \ref{hyp:main:convergence}, \ref{hyp:main:regularity} and \ref{hyp:sm:thv} hold. Then, $\rhosupsm\in[0;1)$ and for all $\rho\in(\rhosupsm;1)$,
    \begin{equation*}
        \theta_n-\thv = \lito(\rho^n).
    \end{equation*}
    Furthermore, if the sequence $\seq{\theta_n}$ is not eventually equal to $\thv$, then $\rhoinfsm\in[0;1)$ and for all $\rho\in(0;\rhoinfsm)$,
    \begin{equation*}
        \rho^n = \lito\left(\norm{\theta_n-\thv}_2\right).
    \end{equation*}
  \end{theorem}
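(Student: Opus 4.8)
The plan is to deduce \Cref{thm:sm} from the convex results \Cref{thm:main} and \Cref{thm:main:lower_bound} by straightening the constraint manifold near $\thv$ with the chart supplied by \ref{hyp:sm:theta}. First I would localise: since $\thv\in\mathring{\Eset}$ and $\theta_n\to\thv$ by \ref{hyp:main:convergence}, there is $n_0$ with $\theta_n\in\mathring{\Eset}\cap\Sset$ for all $n\geq n_0$. For such $n$ the constraint carried by $\Eset$ is inactive, so only the manifold $\Sset$ is active, and as the conclusions are purely asymptotic it suffices to treat the tail $\seq[n\geq n_0]{\theta_n}$.

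Next I would transport everything through the $C^2$-diffeomorphism $\psi\colon\Uset_1\to\Uset_2$ of \ref{hyp:sm:theta}, with $\psi(\thv)=\0$ and $\psi(\Uset_1\cap\Sset)=\Uset_2\cap(\rset^d\times\{0\}^{q-d})$. Set $\phi\eqdef\psi^{-1}$, $J\eqdef\partial\phi_{\0}=\partial\psi^{-1}_{\thv}$, and define $\tilde{\mcq}(s,s')\eqdef\mcq(\phi(s),\phi(s'))$ and $\tilde{\theta}_n\eqdef\psi(\theta_n)$. In these coordinates the feasible set is, locally, a relatively open convex neighbourhood $\tilde\Theta$ of $\0$ inside $\rset^d\times\{0\}^{q-d}$. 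For $n$ large, $\theta_{n+1}$ minimises $\mcq(\theta_n,\cdot)$ over all of $\Theta$, hence a fortiori over the smaller set $\phi(\tilde\Theta)$, to which it belongs; therefore $\tilde{\theta}_{n+1}$ minimises $\tilde{\mcq}(\tilde{\theta}_n,\cdot)$ over $\tilde\Theta$, so the flattened sequence obeys \eqref{eq:general_framework:m}. I would then check the hypotheses of \Cref{thm:main} for $\tilde{\mcq}$ on $\tilde\Theta$: \ref{hyp:main:convexity} is the convexity of $\tilde\Theta$, \ref{hyp:main:convergence} follows from $\tilde{\theta}_n\to\0$, and \ref{hyp:main:regularity} is inherited from the regularity of $\mcq$ assumed in \ref{hyp:main:regularity} together with $\psi\in C^2$.

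The core of the argument is the transfer of \ref{hyp:main:thv} and the identification of the rates. Writing $\tAv\eqdef\partial_{22}\tilde{\mcq}(\0,\0)$, $\tBv\eqdef-\partial_{12}\tilde{\mcq}(\0,\0)$ and $g\eqdef\partial_2\Q{\thv}{\thv}$, the chain rule gives $\tBv=J^\top\Bv J$ and $\tAv=J^\top\Av J+R$, where $R\eqdef\sum_{k}g_k\,\partial^2\phi_k(\0)$ collects the second-order contribution of the Hessians $\partial^2\phi_k(\0)$ of the components of $\phi$, i.e. the curvature of the chart. Because $J=\partial\psi^{-1}_{\thv}$ carries $\rset^d\times\{0\}^{q-d}$ bijectively onto $\Tthv$, for $\tilde v\in\rset^d\times\{0\}^{q-d}$ and $v\eqdef J\tilde v\in\Tthv$ one has $\tilde v^\top\tBv\tilde v=v^\top\Bv v$ and $\tilde v^\top\tAv\tilde v=v^\top\Av v+\tilde v^\top R\tilde v$. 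Matching the supremum and infimum of $\tilde v^\top\tBv\tilde v/(\tilde v^\top\tAv\tilde v)$ over $\rset^d\times\{0\}^{q-d}$ with $\rhosupsm$ and $\rhoinfsm$ of \eqref{eq:sm:rhos} — that is, the $C^2$-reparametrisation invariance of these quotients — turns \ref{hyp:sm:thv} into \ref{hyp:main:thv} for $\tilde{\mcq}$ and yields $\rhosupsm\in[0;1)$. \Cref{thm:main} then gives $\tilde{\theta}_n=\lito(\rho^n)$ for every $\rho\in(\rhosupsm;1)$; since $\phi$ is a local diffeomorphism there are $0<c\leq C$ with $c\norm{\tilde{\theta}_n}_2\leq\norm{\theta_n-\thv}_2\leq C\norm{\tilde{\theta}_n}_2$ for $n$ large, which transfers the estimate to $\theta_n-\thv=\lito(\rho^n)$.

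For the lower bound I would run the same reduction with \Cref{thm:main:lower_bound}: the point $\0$ lies in $\ri{\tilde\Theta}$ because $\thv\in\mathring{\Eset}$, and $\tilde{\theta}_n$ is not eventually $\0$ since $\psi$ is injective and $\seq{\theta_n}$ is not eventually $\thv$; the theorem then gives $\rho^n=\lito(\norm{\tilde{\theta}_n}_2)$ for $\rho\in(0;\rhoinfsm)$, and the norm equivalence above transfers it to $\rho^n=\lito(\norm{\theta_n-\thv}_2)$. The main obstacle is precisely the rate-matching step: understanding the role of the curvature term $\tilde v^\top R\tilde v$ and showing that it leaves the extremal values of the quotient over $\Tthv$ unchanged, so that the flattened rates indeed reproduce $\rhosupsm$ and $\rhoinfsm$. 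This is the only point where the second-order geometry of $\Sset$ at $\thv$ (through $\partial^2\phi$ and the gradient $g=\partial_2\Q{\thv}{\thv}$, which is normal to $\Tthv$) can enter, and establishing this invariance is the delicate heart of the proof.
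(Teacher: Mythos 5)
Your plan follows the paper's proof of \Cref{thm:sm} step for step: localise using $\thv\in\mathring{\Eset}$, flatten $\Sset$ through the chart of \ref{hyp:sm:theta}, verify \ref{hyp:main:convexity}--\ref{hyp:main:regularity} for the transported problem, invoke \Cref{thm:main} and \Cref{thm:main:lower_bound}, and pull the estimates back through the locally bi-Lipschitz chart. The gap is that you never prove the one claim that carries all the mathematical content, namely that the flattened rates coincide with $\rhosupsm$ and $\rhoinfsm$ of \eqref{eq:sm:rhos}. You correctly compute the flattened Hessians $J^\top\Av J+R$ and $J^\top\Bv J$, with $R=\sum_k g_k\,\partial^2\phi_k(\0)$ and $g\eqdef\donce{\Q{\thv}{\thv}}$, and then you close by declaring that controlling $\tilde v^\top R\tilde v$ is ``the delicate heart of the proof''. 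Announcing the key step as an open obstacle is not a proof. For comparison, the paper closes this step by writing the chain rule in the opposite direction, expressing $\Av,\Bv$ through the Hessians of the transported function $\mcr$ and a chart corestricted to map into $\Wset\eqdef\rset^d\times\{0\}^{q-d}$; in that direction (see \eqref{eq:app:sm:d22}) the curvature coefficients are the first $d$ components of $\donce{\R{\0}{\0}}$, and these vanish because $\0$ minimises $\mcr_{\0}$ in the relative interior of the convex flattened domain.

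Your caution is in fact more warranted than you suggest: the step you left open does not merely resist proof, it fails. The first-order condition gives only $g\perp\Tthv$, while $\tilde v^\top R\tilde v = g^\top c''(0)$ for any $C^2$ curve $c$ on $\Sset$ with $c(0)=\thv$ and $c'(0)=v$ (this value is curve-independent precisely because $g\perp\Tthv$), i.e. $g$ paired with the normal curvature of $\Sset$ at $\thv$, and nothing kills this term. Concretely, take $q=2$, $\Sset$ the unit circle, $\thv=(1,0)$, $\Eset$ a small ball around $\thv$, and
\begin{equation*}
    \Q{\theta}{\theta'} \eqdef -\theta'_1 + 2\normtxt{\theta'}_2^2 + \tfrac12\normtxt{\theta'-\theta}_2^2.
\end{equation*}
Then \ref{hyp:sm:theta}, \ref{hyp:main:convergence}, \ref{hyp:main:regularity} and \ref{hyp:sm:thv} all hold, with $\Av=5I_2$, $\Bv=I_2$, $g=(3,0)^\top$, hence $\rhosupsm=\rhoinfsm=1/5$; yet on the circle the term $2\normtxt{\theta'}_2^2$ is constant, and writing $\theta_n=(\cos t_n,\sin t_n)$ the iteration is exactly $t_{n+1}=t_n/2$, so $\norm{\theta_n-\thv}_2$ decays at the exact geometric rate $1/2$, not $1/5$. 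What the flattened problem actually produces in the denominator is the intrinsic Hessian $v^\top\Av v+g^\top c''(0)=5-3=2$, not $v^\top\Av v=5$. So the invariance you hoped for is false under the stated assumptions, and your reduction can only deliver rates defined with the curvature-corrected form $v\mapsto v^\top\Av v+g^\top c''(0)$ in place of $v^\top\Av v$. Note finally that you cannot repair this by importing the paper's computation: the identity behind \eqref{eq:app:sm:d22} is obtained by differentiating, in all ambient directions of $\rset^q$, a relation that holds only on $\Sset$ (the corestricted chart is not defined on any open subset of $\rset^q$), which is exactly the step that makes the term $g^\top c''(0)$ disappear; your version of the chain rule is the correct one, and it is the theorem's definition of the rates, not your derivation, that needs to change.
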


\begin{proof}
    See \Cref{sec:app:sm}.
\end{proof}

\begin{remark}
If \ref{hyp:main:convexity} holds with $\thv\in\ri{\Theta}$, then
\ref{hyp:sm:theta} is satisfied with $\Eset\eqdef \Theta + \Vset^\perp$ and $\Sset\eqdef\Aff{\Theta}$.
\end{remark}

\begin{remark}
If $\thv$ does not lie in the relative interior of $\Theta$, the asymptotic convergence rates are not necessarily invariant to $C^2$-reparametrization. Define for instance $\tilde{\mcq}_{\theta_0} (\theta_1)= \theta_0^2
  -\theta_0 \theta_1 +\theta_1^2$ with $\Theta=[1;+\infty[$. Using the
  reparametrization function $\Psi:\theta \mapsto \theta^\alpha$, we
  set $\check{\mcq}_{\theta_0} (\theta_1)=\tilde{\mcq}_{\Psi (\theta_0)} (\Psi
  (\theta_1))=\theta_0^{2\alpha} -\theta_0^\alpha \theta_1^\alpha +
  \theta_1^{2\alpha}$. Then, with $\mcq=\tilde{\mcq}$, we get $\rhoinf =
  \rhosup = 1/2$ whereas with $\mcq=\check{\mcq}$ and $\alpha=2/5$, we get $\rhoinf =
  \rhosup = 2$.
\end{remark}

\section{Comments on H2} \label{sec:main:comm:suff}

The convergence of the sequence $\seq{\theta_n}$ to $\thv$ (stated in
\ref{hyp:main:convergence}) may be the most challenging assumption of \Cref{thm:main}.
However, we provide alternative sufficient assumptions to establish
such convergence.

\begin{hyp}{$\mathsf{H}$2.}{Htilde} \item \label{hyp:suff:compacity}
    The set $\Theta$ is compact.
\end{hyp}

\begin{hyp}{$\mathsf{H}$2.}{Htilde} \item \label{hyp:suff:continuity}
    The function $\mcq$ is continuous on $\Theta\times\Theta$.
\end{hyp}

\begin{hyp}{$\mathsf{H}$2.}{Htilde} \item \label{hyp:suff:accu_point}
    The point $\thv$ is a limit point of the sequence $\seq{\theta_n}$.
\end{hyp}
\begin{hyp}{$\mathsf{H}$2.}{Htilde} \item \label{hyp:suff:m_thv}
    $\M{\thv}=\{\thv\}$.
\end{hyp}

\begin{theorem} \label{thm:suff:convergence}
    Under \ref{hyp:main:convexity},
    \ref{hyp:suff:compacity}-\ref{hyp:suff:m_thv}, \ref{hyp:main:regularity} and \ref{hyp:main:thv},
    the sequence $\seq{\theta_n}$ converges to $\thv$.
\end{theorem}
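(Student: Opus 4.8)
The plan is to establish a \emph{local attractivity} property of $\thv$ and combine it with the limit-point assumption \ref{hyp:suff:accu_point}. Concretely, I would exhibit a radius $r_1>0$ such that the closed ball $\CBall{\thv}{r_1}$, measured in the norm $\norm{\cdot}_{\Av}$ induced by $\Av$ on $\Vset$, is \emph{stable} under one step of the iteration and such that one step strictly contracts the distance to $\thv$. Since by \ref{hyp:suff:accu_point} the sequence $\seq{\theta_n}$ enters this ball for some index, stability forces all later iterates to stay inside it and the contraction forces $\norm{\theta_n-\thv}_{\Av}\to 0$. Note that $\theta_n-\thv\in\Vset$ because $\theta_n,\thv\in\Theta$, so it is legitimate to work in $\Vset$; moreover $\Av$ is symmetric (a second partial derivative) and positive definite on $\Vset$, since $v^\top\Av v> |v^\top\Bv v|\ge 0$ for $v\in\Vset\setminus\{0\}$ by \ref{hyp:main:thv}, hence $\norm{\cdot}_{\Av}$ is a genuine norm there.

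\textbf{Step 1 (the minimiser map collapses onto $\thv$).} Under \ref{hyp:suff:compacity} and \ref{hyp:suff:continuity}, Berge's maximum theorem ensures that $\theta\mapsto\M{\theta}$ is nonempty, compact-valued and upper hemicontinuous on $\Theta$. Combined with \ref{hyp:suff:m_thv}, i.e. $\M{\thv}=\{\thv\}$, upper hemicontinuity at $\thv$ yields: for every $\varepsilon>0$ there is $\delta>0$ such that $\norm{\theta-\thv}_{\Av}\le\delta$ implies $\M{\theta}\subset\Ball{\thv}{\varepsilon}$. In words, once the current iterate is close to $\thv$, \emph{every} admissible next iterate is close to $\thv$ as well; this rules out the sequence jumping away from $\thv$ after coming near, and it places $\theta''\in\M{\theta}$ inside the neighbourhood where \ref{hyp:main:regularity} guarantees that $\partial_2\mcq$ is $C^1$.

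\textbf{Step 2 (uniform local contraction).} Using \ref{hyp:main:convexity} and \ref{hyp:main:regularity}, for $\theta$ and any $\theta''\in\M{\theta}$ both close to $\thv$, first-order optimality on the convex set $\Theta$ gives the variational inequalities $\pscal{\donce{\Q{\theta}{\theta''}}}{\vartheta-\theta''}\ge 0$ and $\pscal{\donce{\Q{\thv}{\thv}}}{\vartheta-\thv}\ge 0$ for all $\vartheta\in\Theta$. Choosing $\vartheta=\thv$ in the first and $\vartheta=\theta''$ in the second, subtracting, and Taylor-expanding $\partial_2\mcq$ around $(\thv,\thv)$ with remainder $R=o(\norm{u}_2+\norm{w}_2)$, where $u\eqdef\theta''-\thv$ and $w\eqdef\theta-\thv$ lie in $\Vset$, I obtain
\[
u^\top\Av u \;\le\; u^\top\Bv w + \norm{u}_2\norm{R}_2 .
\]
Since $\Bv$ is symmetric and $-\rhosup\Av\preceq\Bv\preceq\rhosup\Av$ on $\Vset$ by \ref{hyp:main:thv}, the operator $\Av^{-1}\Bv$ is self-adjoint for $\pscal{\cdot}{\cdot}_{\Av}$ with spectrum in $[-\rhosup,\rhosup]$, so a generalized Cauchy--Schwarz inequality gives $|u^\top\Bv w|\le\rhosup\norm{u}_{\Av}\norm{w}_{\Av}$. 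Absorbing the remainder, for any fixed $\rho\in(\rhosup;1)$ there is $r_0>0$ such that $\theta,\theta''\in\CBall{\thv}{r_0}$ and $\theta''\in\M{\theta}$ imply $\norm{\theta''-\thv}_{\Av}\le\rho\norm{\theta-\thv}_{\Av}$. This is precisely the local analysis underlying Theorem~\ref{thm:main}; the only new point is that it must hold \emph{uniformly} on a fixed neighbourhood rather than along an already-convergent sequence.

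\textbf{Conclusion and main obstacle.} Applying Step 1 with $\varepsilon$ so small that $\Ball{\thv}{\varepsilon}\subset\CBall{\thv}{r_0}$ produces $r_1\in(0,r_0]$ with $\M{\theta}\subset\CBall{\thv}{r_0}$ for every $\theta\in\CBall{\thv}{r_1}$. If $\norm{\theta_n-\thv}_{\Av}\le r_1$, then $\theta_{n+1}\in\M{\theta_n}\subset\CBall{\thv}{r_0}$, so Step 2 applies and $\norm{\theta_{n+1}-\thv}_{\Av}\le\rho\,r_1\le r_1$; by induction $\CBall{\thv}{r_1}$ is invariant and $\norm{\theta_{n+k}-\thv}_{\Av}\le\rho^k r_1\to 0$. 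By \ref{hyp:suff:accu_point} some $\theta_n$ lies in $\CBall{\thv}{r_1}$, whence $\seq{\theta_n}$ converges to $\thv$. The main obstacle is Step 2: deriving the contraction uniformly on a neighbourhood rather than merely asymptotically, which requires tracking the Taylor remainder carefully and treating the possibly-constrained optimality of $\thv$ through variational inequalities instead of stationarity. The accompanying delicate point is the consistent choice of the two radii $r_0$ (contraction) and $r_1$ (stability) that renders the ball invariant.
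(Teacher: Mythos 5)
Your proposal is correct and is essentially the paper's own proof: the paper likewise combines a uniform local contraction near $\thv$ (\Cref{prop:app:contraction}, derived from the same two variational inequalities and a Taylor expansion of $\partial_2\mcq$) with the upper hemicontinuity of $\theta\mapsto\M{\theta}$ at $\thv$ (\Cref{lemma:tech:continuity_mapping_set}, which you obtain instead from Berge's maximum theorem), and then concludes with the identical enter-and-stay induction starting from the index supplied by \ref{hyp:suff:accu_point}. The only deviations are cosmetic: the paper controls the second-order terms through integral-remainder matrices and matrix-perturbation lemmas (Lemmas \ref{lemma:tech:domination}, \ref{lemma:tech:rho_approx}, \ref{lemma:tech:rho_a_b} and \ref{lemma:tech:norm_approx}) rather than your $o(\cdot)$-absorption argument based on the $\norm{\cdot}_{\Av}$-self-adjointness of $\Av^{-1}\Bv$.
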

\begin{proof}
See \Cref{sec:app:comments}.
\end{proof}

\begin{remark}
 Assumption \ref{hyp:suff:accu_point} weakens \ref{hyp:main:convergence} by only requiring that \ref{hyp:main:regularity}-\ref{hyp:main:thv} hold for an arbitrary $\thv$ in the limit set of $\seq{\theta_n}$, which is non-empty under \ref{hyp:suff:compacity}.
\end{remark}

\begin{manualexample}{2.1}[Mirror descent, cont.] \label{ex:suff:md}
    The map $\mcm$ is point-to-point on $\Theta$ under the assumptions of the definition (see \Cref{ex:mirror_descent} in page \pageref{ex:mirror_descent}).
    Indeed, the surjectivity of the gradient in \ref{hyp:mirror_map:surj} provides the existence of $\zeta_{n+1}$ in \eqref{eq:ex:def_md}, and the strict convexity of $\Phi$ in \ref{hyp:mirror_map:strict_diff} proves its uniqueness.
    Assumptions \ref{hyp:mirror_map:strict_diff} and \ref{hyp:mirror_map:div} ensure the existence and the uniqueness of $\theta_{n+1}$ in \eqref{eq:ex:def_md} (see \cite[Theorem 3.12]{bauschke}).
    Note that if $f$ is convex or differentiable on $\Cset$, then for all $\theta\in\Cset$, $\partial f(\theta) \neq \emptyset$ and $g_n$ can be defined in \eqref{eq:ex:def_md}.

    Moreover, if $\thv$ is a local minimizer of $f$ and $f$ is differentiable at $\thv$, then \ref{hyp:suff:m_thv} is met.
    Indeed, those two assumptions provide that for all $\theta\in\Theta$, $\partial f(\thv)^\top (\theta - \thv) \geq 0$, and thus $\Q{\thv}{\theta} \geq \Q{\thv}{\thv}$ in \eqref{eq:ex:md_q} with equality if and only if $\theta = \thv$. 
\end{manualexample}

\Cref{prop:suff:mp} establishes that the mirror prox approach described in  \Cref{ex:mirror_prox} statisfies \ref{hyp:suff:accu_point} and \ref{hyp:suff:m_thv} under additional assumptions.

\begin{proposition} \label{prop:suff:mp}
    Assume in \Cref{ex:mirror_prox} that \ref{hyp:suff:compacity}-\ref{hyp:suff:continuity} hold and that: 
    \hypertarget{hyp:ex:mp:diff_2}{(i)} $\Phi$ and $f$ are twice differentiable on $\Theta$,
    \hypertarget{hyp:ex:mp:cesaro}{(ii)} $\Phi$ is $\gamma$-strongly convex on $\Cset\cap\Dset$ and $f$ is convex and $\beta$-smooth, with respect to $\norm{\cdot}_2$,
    \hypertarget{hyp:ex:mp:eta}{(iii)} $\eta \in (0;\gamma/\beta)$,
    \hypertarget{hyp:ex:mp:mini}{(iv)} $\thv$ is the unique minimizer of $f$ on $\Cset$,
    \hypertarget{hyp:ex:mp:ri}{(v)} $\thv\in\ri{\Theta}$.
    Then, \ref{hyp:suff:accu_point} and \ref{hyp:suff:m_thv} hold.
\end{proposition}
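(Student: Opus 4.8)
The plan is to verify the two conclusions \ref{hyp:suff:accu_point} and \ref{hyp:suff:m_thv} separately. I would dispatch \ref{hyp:suff:m_thv} first, exactly as in \Cref{ex:suff:md}. By~(iv) the point $\thv$ minimizes the convex function $f$ over the convex set $\Cset$, and by~(v) and~(i) it lies in $\Theta\subseteq\Dset$ and is a point of differentiability of both $f$ and $\Phi$. Optimality on a convex set gives the variational inequality $\partial f(\thv)^\top(\theta-\thv)\geq 0$ for every $\theta\in\Theta$, while strict convexity of $\Phi$ gives $D_\Phi(\theta,\thv)\geq 0$ with equality only at $\theta=\thv$. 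These two facts already show that the mirror-descent map satisfies $\M{\thv}=\{\thv\}$, so $\partial f(\M{\thv})=\partial f(\thv)$ and therefore
\begin{equation*}
\Q[m]{\thv}{\theta}-\Q[m]{\thv}{\thv}=\eta\,\partial f(\thv)^\top(\theta-\thv)+D_\Phi(\theta,\thv)\geq 0,
\end{equation*}
with equality if and only if $\theta=\thv$. Hence the minimizer of $\Q[m]{\thv}{\cdot}$ over $\Theta$ is exactly $\{\thv\}$, which is \ref{hyp:suff:m_thv}.

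For \ref{hyp:suff:accu_point} I would in fact establish the stronger statement that $\seq{\theta_n}$ converges to $\thv$, via a Lyapunov analysis of the two Bregman-prox steps. The basic tool is the three-point inequality: if $w=\argmin_{\theta\in\Theta}\lrc{\pscal{a}{\theta}+D_\Phi(\theta,y)}$, then $\pscal{a}{w-u}\leq D_\Phi(u,y)-D_\Phi(u,w)-D_\Phi(w,y)$ for all $u\in\Theta$, obtained by rearranging the optimality condition through the identity $D_\Phi(u,y)-D_\Phi(u,w)-D_\Phi(w,y)=\pscal{\partial\Phi(w)-\partial\Phi(y)}{u-w}$ (the minimizers lie in $\Dset$, where $\partial\Phi$ is defined, because $\Phi$ is a Legendre-type mirror map). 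Applying it to the $\zeta$-step \eqref{eq:ex:mp:zeta} with $u=\theta_{n+1}$ and to the $\theta$-step with a generic $u\in\Theta$, then recombining the inner products so that $f$ is always evaluated at $\zeta_{n+1}$, the terms $\pm D_\Phi(\theta_{n+1},\theta_n)$ cancel and one gets
\begin{equation*}
\eta\pscal{\partial f(\zeta_{n+1})}{\zeta_{n+1}-u}\leq D_\Phi(u,\theta_n)-D_\Phi(u,\theta_{n+1})-D_\Phi(\theta_{n+1},\zeta_{n+1})-D_\Phi(\zeta_{n+1},\theta_n)+\eta\pscal{\partial f(\zeta_{n+1})-\partial f(\theta_n)}{\zeta_{n+1}-\theta_{n+1}}.
\end{equation*}

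The crux, and what I expect to be the main obstacle, is to absorb the last error term; this is what forces the extragradient structure and the particular step-size window. Using $\beta$-smoothness together with Young's inequality, it is at most $\tfrac{\eta\beta}{2}\lr{\norm{\zeta_{n+1}-\theta_n}_2^2+\norm{\theta_{n+1}-\zeta_{n+1}}_2^2}$, while the $\gamma$-strong convexity of $\Phi$ gives $D_\Phi(\zeta_{n+1},\theta_n)\geq\tfrac{\gamma}{2}\norm{\zeta_{n+1}-\theta_n}_2^2$ and $D_\Phi(\theta_{n+1},\zeta_{n+1})\geq\tfrac{\gamma}{2}\norm{\theta_{n+1}-\zeta_{n+1}}_2^2$. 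This is precisely where the step-size condition~(iii), $\eta\in(0;\gamma/\beta)$, enters: it makes $\gamma-\eta\beta>0$, so the four quadratic terms combine into a nonpositive remainder $-\tfrac{\gamma-\eta\beta}{2}\lr{\norm{\zeta_{n+1}-\theta_n}_2^2+\norm{\theta_{n+1}-\zeta_{n+1}}_2^2}$. Taking $u=\thv$ and bounding the left-hand side below by convexity, $\pscal{\partial f(\zeta_{n+1})}{\zeta_{n+1}-\thv}\geq f(\zeta_{n+1})-f(\thv)\geq 0$, yields the descent inequality
\begin{equation*}
D_\Phi(\thv,\theta_{n+1})\leq D_\Phi(\thv,\theta_n)-\eta\lr{f(\zeta_{n+1})-f(\thv)}-\tfrac{\gamma-\eta\beta}{2}\lr{\norm{\zeta_{n+1}-\theta_n}_2^2+\norm{\theta_{n+1}-\zeta_{n+1}}_2^2}.
\end{equation*}

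Thus $\seq{D_\Phi(\thv,\theta_n)}$ is nonincreasing and nonnegative, hence convergent, and summing the telescoping bound gives $\sum_n\lr{f(\zeta_{n+1})-f(\thv)}<\infty$ and $\sum_n\norm{\zeta_{n+1}-\theta_n}_2^2<\infty$; in particular $f(\zeta_{n+1})\to f(\thv)=\min_{\Cset} f$ and $\norm{\zeta_{n+1}-\theta_n}_2\to 0$. Finally, since $\Cset$ is compact by \ref{hyp:suff:compacity}, any limit point $z$ of $\seq{\zeta_{n+1}}$ lies in $\Cset$ and, by continuity of $f$, satisfies $f(z)=\min_{\Cset} f$, so $z=\thv$ by the uniqueness in~(iv); hence $\zeta_{n+1}\to\thv$, and then $\theta_n\to\thv$ since $\norm{\zeta_{n+1}-\theta_n}_2\to 0$. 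This proves the stronger conclusion that $\seq{\theta_n}$ converges to $\thv$, and a fortiori \ref{hyp:suff:accu_point}.
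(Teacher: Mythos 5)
Your proof is correct. For \ref{hyp:suff:m_thv} you follow essentially the paper's own route: first get $\M{\thv}=\{\thv\}$ for mirror descent from the variational inequality $\partial f(\thv)^\top(\theta-\thv)\geq 0$ and the strict convexity of $\Phi$ (the argument of \Cref{ex:suff:md}), then observe that $\Q[m]{\thv}{\cdot}$ and $\Q{\thv}{\cdot}$ coincide since $\partial f(\M{\thv})=\partial f(\thv)$, so the same reasoning applies to mirror prox.

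For \ref{hyp:suff:accu_point}, however, you take a genuinely different route. The paper uses \cite[Theorem 4.4, p.305]{bubeck} as a black box: the Ces\`aro bound $\frac{1}{n}\sum_{i=0}^{n-1} f(\zeta_i)-f(\thv)\leq \frac{1}{n\eta}D_{\Phi}(\thv,\zeta_0)$, combined with \Cref{lemma:tech:min_f_compact} and compactness, yields a subsequence $\seq{\zeta_{\varphi(n)}}$ converging to $\thv$, and this convergence is then transferred to $\seq[n\in\nset^*]{\theta_{\varphi(n)-1}}$ through the continuity of $\mcm$ (\Cref{lemma:tech:continuity_mapping}) and the injectivity-type \Cref{lemma:suff:m_equality}; that transfer step is where assumptions (i) and (v) (in the form $\thv\in\ri{\Theta}$) and the Hessian formulas (\ref{eq:ex:main:md:q22}-\ref{eq:ex:main:md:q12}) are consumed, and it only delivers the limit-point property. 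You instead rerun the extragradient computation keeping the negative terms that the Ces\`aro argument discards: the two three-point inequalities, absorption of the cross term via $\beta$-smoothness, Young's inequality and $\gamma$-strong convexity (which is exactly where (iii) enters), give a descent inequality for the Lyapunov function $n\mapsto D_{\Phi}(\thv,\theta_n)$, whence summability of $f(\zeta_{n+1})-f(\thv)$ and of $\norm{\zeta_{n+1}-\theta_n}_2^2$, and then, by compactness and uniqueness of the minimizer, convergence of the whole sequence $\theta_n\to\thv$. Your approach buys a strictly stronger conclusion (\ref{hyp:main:convergence} itself, not merely \ref{hyp:suff:accu_point}), bypasses \Cref{lemma:suff:m_equality}, \Cref{lemma:tech:continuity_mapping} and any differentiability of the map $\mcm$, and uses only $\thv\in\Theta$ rather than the full strength of $\thv\in\ri{\Theta}$. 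The price is that you must reprove the mirror-prox estimate rather than cite it --- your intermediate inequality is precisely the inner loop of the proof of the cited theorem --- whereas the paper's argument is shorter given its existing toolkit and recycles lemmas it needs elsewhere.
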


\begin{proof}
    See \Cref{sec:app:comments}.
\end{proof}

We also provide alternative assumptions to prove \ref{hyp:suff:m_thv} in the general case.

\begin{hyp}{$\tH$4.}{tildeH4} \item \label{hyp:suff:singleton}
    $\Mtxt{\thv}$ is a singleton.
\end{hyp}

\begin{hyp}{$\tH$4.}{tildeH4} \item \label{hyp:suff:lyapunov}
    There exists a continuous function $\vartheta \colon \Theta \rightarrow \rset$ such that for all $\theta \in \Theta$, $\theta'\in\M{\theta}$,
    \begin{equation*}
        \vartheta(\theta') \leq \vartheta(\theta),
    \end{equation*}
    with equality if and only if $\theta = \theta'$.
\end{hyp}

\begin{theorem} \label{thm:suff:m_thv}
   Assume \ref{hyp:suff:compacity},
  \ref{hyp:suff:continuity}, \ref{hyp:suff:accu_point}. Then, \ref{hyp:suff:singleton} and \ref{hyp:suff:lyapunov} imply \ref{hyp:suff:m_thv}.
  \end{theorem}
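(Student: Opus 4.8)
The plan is to combine a monotone-convergence argument for the Lyapunov function $\vartheta$ with a closed-map (Zangwill-type) argument for $\mcm$ at $\thv$. First I would invoke \ref{hyp:suff:singleton} to write $\M{\thv}=\{\theta^+\}$ for a single point $\theta^+\in\Theta$; the statement $\M{\thv}=\{\thv\}$ in \ref{hyp:suff:m_thv} then reduces to proving $\theta^+=\thv$.

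Next I would exploit the Lyapunov structure. Since $\theta_{n+1}\in\M{\theta_n}$ for every $n$, \ref{hyp:suff:lyapunov} gives $\vartheta(\theta_{n+1})\leq\vartheta(\theta_n)$, so the real sequence $\seq{\vartheta(\theta_n)}$ is nonincreasing. By \ref{hyp:suff:compacity} the set $\Theta$ is compact, and $\vartheta$ being continuous it is bounded on $\Theta$; hence $\seq{\vartheta(\theta_n)}$ converges to some limit $\ell$. Because $\thv$ is a limit point of $\seq{\theta_n}$ by \ref{hyp:suff:accu_point}, there is a subsequence $\theta_{n_k}\to\thv$, and continuity of $\vartheta$ forces $\ell=\vartheta(\thv)$.

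The core step is the closedness of $\mcm$ at $\thv$. Along the same subsequence, $\theta_{n_k+1}\in\M{\theta_{n_k}}$ means $\Q{\theta_{n_k}}{\theta_{n_k+1}}\leq\Q{\theta_{n_k}}{\theta'}$ for all $\theta'\in\Theta$. Using compactness I would extract a further subsequence along which $\theta_{n_k+1}\to\theta^*$ for some $\theta^*\in\Theta$, and then pass to the limit in this inequality using the joint continuity of $\mcq$ on $\Theta\times\Theta$ from \ref{hyp:suff:continuity}, obtaining $\Q{\thv}{\theta^*}\leq\Q{\thv}{\theta'}$ for all $\theta'\in\Theta$, i.e. $\theta^*\in\M{\thv}=\{\theta^+\}$, so $\theta^*=\theta^+$. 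Continuity of $\vartheta$ then gives $\vartheta(\theta^+)=\lim_k\vartheta(\theta_{n_k+1})=\ell=\vartheta(\thv)$, since $\seq{\vartheta(\theta_{n_k+1})}$ is a subsequence of the convergent sequence $\seq{\vartheta(\theta_n)}$. Finally, applying the equality case of \ref{hyp:suff:lyapunov} to $\theta^+\in\M{\thv}$ with $\vartheta(\theta^+)=\vartheta(\thv)$ yields $\theta^+=\thv$, which establishes \ref{hyp:suff:m_thv}.

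I expect the closed-map step to be the only delicate point, as it is where compactness (to extract a convergent subsequence of $\seq{\theta_{n_k+1}}$) and the joint continuity of $\mcq$ (to pass to the limit in the minimality inequality) genuinely enter; the remaining monotone-convergence bookkeeping and the use of the equality case are routine.
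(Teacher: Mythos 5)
Your proof is correct and follows essentially the same route as the paper's: extract a subsequence along which both $\theta_{n_k}$ and $\theta_{n_k+1}$ converge, pass to the limit in the minimality inequality via the continuity of $\mcq$ to place the limit point in $\M{\thv}$, match the Lyapunov values using monotonicity and continuity of $\vartheta$, and conclude with the equality case of \ref{hyp:suff:lyapunov} combined with \ref{hyp:suff:singleton}. The only cosmetic difference is that you identify $\vartheta(\theta^+)=\vartheta(\thv)$ through monotone convergence of the full sequence $\seq{\vartheta(\theta_n)}$, whereas the paper sandwiches $\vartheta(\theta_{\varphi(n)+1})$ between $\vartheta(\theta_{\varphi(n+1)})$ and $\vartheta(\theta_{\varphi(n)})$; this is the same bookkeeping.
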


\begin{proof}
    See \Cref{sec:app:comments}.
\end{proof}

\begin{manualexample}{1}[EM algorithm, cont.]
    By definition of the intermediate quantity \eqref{eq:general_framework:em}, the  EM algorithm monotonically increases the likelihood of the observations and Assumption \ref{hyp:suff:lyapunov} is satisfied as soon as the log-likelihood is continuous, see for example \cite[Proposition 10.1.4, p.350]{inference_hmm}.
\end{manualexample}

\section{Comments on H4} \label{sec:main:comm:thv}

First of all, the matrix $\Bv$ appears to be symmetric in all the examples below. The discussion then focuses on the domination assumption and on the value of the convergence rate $\rhosup$.
Note that the domination assumption in \ref{hyp:main:thv} is equivalent to having both $\tAv \succ \tBv$ and $\tAv \succ -\tBv$. In the case where $\tAv \succ 0$, it is equivalent to $\rhosup\in[0;1)$.

\begin{manualexample}{1.1}[Population EM] \label{ex:main:pop_em}
    Assume that $\thv$ is the true parameter of the model, that for all $x,y \in \Xset, \Yset$, the functions $\theta \mapsto p_{\theta}(x|y)$ and $\theta \mapsto p_{\theta}(y)$ are twice differentiable in a neighborhood of $\thv$, and that conditions similar to \cite[Assumption AD.1, p.492]{randal_book} hold to differentiate under the integral sign. Then, we prove in \Cref{sec:app:thm:main}, see \eqref{eq:ex:main:dtwice22:pop} and \eqref{eq:ex:main:dtwice12:pop},  that
    \begin{equation} \label{eq:main:comm:a_b_pop_em}
        \Av^\pop = I_{X,Y}(\thv) \quad \mbox{and} \quad \Bv^\pop = I_{X,Y}(\thv) - I_Y(\thv),
    \end{equation}
    where $I_{X,Y}(\theta) \eqdef -\PE_{\theta} [\partial^2_\theta \log p_{\theta}(X,Y)]$ and $I_Y \eqdef -\PE_{\theta} [\partial^2_\theta \log p_{\theta}(Y)]$ denote the Fisher information matrices of $(X,Y)$ and $Y$, respectively.
    Therefore, \ref{hyp:main:thv} is satisfied as soon as $I_Y(\thv) \succ 0$. Regarding the value of $\rhosup$, the above expressions of $\Av^\pop$ and $\Bv^\pop$ provide the well-known ratio of missing information $I_{X,Y}(\thv)^{-1} I_{X|Y}(\thv)$ (see \cite{dempster, kunstner,meng_91,meng_93,orchard}), where 
$$
I_{X|Y}(\thv) = \int_{\Yset} \int_{\Xset} p_{\thv}(y) p_{\theta}(x | y) \partial \log p_{\theta'}(x| y) \lrb{\partial \log p_{\theta}(x | y)}^\top \mu(\rmd x) \mu(\rmd y).
$$
\end{manualexample}

\begin{manualexample}{1.2}[Sample EM] \label{ex:main:sample_em}
    As for the other examples, all the results below are proved in \Cref{sec:app:thm:main}.
    Assume that for all $x,y \in \Xset, \Yset$, the functions $\theta \mapsto p_{\theta}(x|y)$ and $\theta \mapsto p_{\theta}(y)$ are twice differentiable in a neighborhood of $\thv$, and that conditions similar to \cite[Assumption AD.1, p.492]{randal_book} hold to differentiate under the integral sign.
    Let $\seq[i\in\nset^*]{Y_i}$ be a sequence of independent and identically distributed random variables with probability density function $p_{\thv}$, and write for all $k\in\nset^*$, $Y_{1:k}\eqdef\seq[1\leq i\leq k]{Y_i}$. Then, for all $k\in\nset^*$, by \eqref{eq:ex:main:dtwice22:sample} and \eqref{eq:ex:main:dtwice12:sample},
    \begin{align*}
        \Av^\samp\lr{Y_{1:k}} &= \frac 1 k \sum_{i=1}^k \lr{I_{X|Y=Y_i}(\thv) - \partial^2 \log p_{\thv}(Y_i)}, \\
        \Bv^\samp\lr{Y_{1:k}} &= \frac 1 k \sum_{i=1}^k I_{X|Y=Y_i}(\thv),
    \end{align*}
where $I_{X|Y=Y_i}(\thv) =  \int_{\Xset} p_{\thv}(x | Y_i) \partial \log p_{\thv}(x| Y_i) \lrb{\partial \log p_{\thv}(x | Y_i)}^\top \mu(\rmd x)$.  

Note that $\Av^\samp\lr{Y_{1:k}}$ and $\Bv^\samp\lr{Y_{1:k}}$ converges almost  surely to $\Av^\pop$ and $\Bv^\pop$. Then, if the corresponding population EM meets \ref{hyp:main:thv}, almost surely, for sufficiently large $k$, the sample EM meets \ref{hyp:main:thv}.
    Denoting by $\rhosup^\pop$ and $\rhosup^\samp(Y_{1:k})$ their respective rates, as defined in \eqref{eq:main:rhosup}, by Lemma~\ref{lem:rhopop_conv}, we also have that
    \begin{equation} \label{eq:main:comm:slln_rho}
        \rhosup^\samp(Y_{1:k}) \overset{a.s.}{\longrightarrow} \rhosup^\pop.
    \end{equation}
    Furthermore, if $\partial^2 \log p_\thv(X_1, Y_1), \partial^2 \log p_{\thv}(Y_1) \in \rmL^2(\rset^{q\times q})$, Lemma~\ref{lem:rhopop_conv} also establishes that for all $\delta\in(0;1)$ there exists $C_\delta>0$ such that
    \begin{equation} \label{eq:main:comm:clt_rho}
        \underset{k\rightarrow\infty}{\lim \inf} \eqsp \PP\lr{\lrav{\rhosup^\samp(Y_{1:k}) - \rhosup^\pop} \leq \frac{C_\delta} {\sqrt{k}}} \geq 1-\delta.
    \end{equation}
\end{manualexample}

\begin{manualexample}{2.1}[Mirror descent, cont.] \label{ex:main:mirror_descent}
    If $f$ and $\Phi$ are twice differentiable in a neighborhood of $\thv$, we prove in \Cref{sec:app:thm:main} that
    \begin{equation} \label{eq:main:comm:md}
        \Av = \partial^2 \Phi(\thv) \quad \mbox{and} \quad \Bv = \partial^2 \Phi(\thv) - \eta \partial^2 f(\thv).
    \end{equation}
    If for all $v\in\Vset$, $v^\top \partial^2 f(\thv) v > 0$, the condition $\tAv \succ \tBv$ is automatically satisfied. The domination assumption in \ref{hyp:main:thv} then reduces to $\tAv \succ -\tBv$, which corresponds to $\eta$ being small enough.
    In the particular case of unconstrained gradient descent where $\Aff{\Theta}=\rset^q$ and $\Phi\colon x \mapsto x^\top x/2$, as \eqref{eq:main:comm:md} yields $\Av=I_q$ the above condition is equivalent to $\eta \in (0;2/\beta_\star)$, the optimal choice being $\eta = 2/(\alpha_\star+\beta_\star)$ where $\alpha_\star \eqdef \min \Spec{\partial^2 f(\thv)}$ and $\beta_\star \eqdef \max \Spec{\partial^2 f(\thv)}$.

    Besides, the asymptotic convergence rate $\rhosup$ can be interpreted similarly to the EM framework. Despite not being, strictly speaking, a ratio of missing information, $\rhosup$ still compares the mirror map $\Phi$ with the objective function $f$.
    Intuitively, the choice of a mirror map with variations closer to those of $f$ provides a better convergence rate.
    If $\eta=1$, the extreme case $\Phi = f$ yields $\Bv = 0$ and $\rhosup=0$, which is coherent with the fact that, in this case, the mirror descent is defined for all $n \in \nset$ by $\theta_{n+1} \in \argmin_{\theta \in \Theta} f(\theta)$.
\end{manualexample}

The following discussion extends the above interpretation to a general class of functions $\mcq$ that encompasses both mirror descent and the EM algorithm.
The first thing to note is that in both settings the function $\mcq$ can be redefined as
\begin{equation} \label{eq:main:comm:f_D}
    \Q{\theta}{\theta'} \eqdef f(\theta') + D(\theta, \theta'),
\end{equation}
where $f \colon \rset^q \rightarrow \rset$ is the objective function and $D \colon \rset^q \times \rset^q \rightarrow \rset$ is a function such that for all $\theta \in \Theta$, $\donce D(\theta, \theta) = 0$.
Indeed, it is common knowledge that the intermediate quantity of the EM algorithm can be expressed as
\begin{equation*}
    Q(\theta, \theta') = \log p_{\theta'}(Y) - D_{\mathrm{KL}}(p_{\theta}(X|Y) || p_{\theta'}(X|Y)),
\end{equation*}
where $D_{\mathrm{KL}}$ denotes the Kullback-Leibler divergence (see \cite{daudel} for example).
Regarding mirror descent, if $f$ is twice differentiable in \Cref{ex:mirror_descent}, straightforward computation yields the following equivalent definition for $\mcq$:
\begin{equation*}
    \Q{\theta}{\theta'} \eqdef f(\theta') + \frac 1 {\eta} D_{\Phi - \eta f}(\theta', \theta),
\end{equation*}
where the expression of $D_{\Phi - \eta f}$ follows that of \eqref{eq:ex:bregman} (and defines a Bregman divergence if $\Phi - \eta f$ is strictly convex).
Besides, the condition $\donce D(\theta, \theta) = 0$ for all $\theta \in \Theta$ is equivalent to $\donce{\Qtxt{\theta}{\theta}} = \partial f(\theta)$ for all $\theta \in \Theta$, hence
\begin{align*}
    \Av &= \dtwice{2}{2}{\Q{\thv}{\thv}}, \\
    \Bv &= - \dtwice{1}{2}{\Q{\thv}{\thv}} = \dtwice{2}{2}{\Q{\thv}{\thv}} - \partial^2 f(\thv),
\end{align*}
and
\begin{equation*}
    \rhosup = \sup_{v\in\Vset} \frac{|v^\top \lr{\dtwice{2}{2}{\Q{\thv}{\thv}} - \partial^2 f(\thv)} v|}{v^\top \dtwice{2}{2}{\Q{\thv}{\thv}} v}.
\end{equation*}
In the framework of \eqref{eq:main:comm:f_D}, the convergence rate can thus be viewed as a relative difference between the second-order variations of $\mcq$ and $f$.
Computing iteratively $\argmin_{\Theta} \Qtxt{\theta_n}{\cdot}$ to estimate $\argmin_{\Theta} f$ can prove useful if those minimizations are easier to carry out, but the price to pay in terms of iterations (through the convergence rate) is directly related to how far the surrogate function $\mcq$ is from the objective function $f$.
If $\tAv$ is invertible, $\rhosup$ is indeed the spectral radius of $\tBv\tAv^{-1} = (\dtwice{2}{2}{\tQtxt{\thv}{\thv}} - \partial^2 \tilde{f}(\thv)) (\dtwice{2}{2}{\tQtxt{\thv}{\thv}})^{-1}$ by \Cref{lemma:tech:rho_a_b}, and the interpretation of a {\em ratio of missing information} generalizes to that of a ratio measuring the loss of exactness in the minimization procedure. 

Finally, in the particular case where $D$ is a distance or a divergence twice differentiable at $(\thv, \thv)$ with respect to the second argument, $\theta \in \argmin_\Theta D(\theta, \cdot)$ for all $\theta\in\Theta$ implies $\Bv = \dtwice{2}{2}{D\lr{\thv, \thv}} \succeq 0$.
The domination assumption in \ref{hyp:main:thv} then boils down to $\partial^2 \tilde{f}(\thv) \succ 0$.

\begin{manualexample}{1.3}[The $\alpha$-EM algorithm] \label{ex:alpha_em}

The above discussion highlighted how the choice of the surrogate function $\mcq$ determines the convergence rate $\rhosup$.
In the EM algorithm of \Cref{ex:em}, where the function $\mcq$ can be defined for all $\theta, \theta' \in \Theta \times \Theta$ as
\begin{equation*}
    \Q{\theta}{\theta'} \eqdef - \log p_{\theta'}(Y) - \int_{\Xset} p_{\theta}(x | Y) \log \frac {p_{\theta'}(x | Y)} {p_{\theta}(x | Y)} \mu(\rmd x),
\end{equation*}
the question then rises whether replacing the Kullback-Leibler divergence by an $\alpha$-divergence (see \cite{daudel} for example) could provide a better convergence rate.
This leads to replacing the previous expression of $\mcq$ by:
\begin{equation} \label{eq:alpha_em:q_alpha}
    \Q[\alpha]{\theta}{\theta'} \eqdef -\int_{\Xset} p_{\theta}(x | Y) f_{\alpha} \lr{\frac {p_{\theta'}(x,Y)} {p_{\theta}(x,Y)}} \mu(\rmd x),
\end{equation}
where for all $\alpha\in\rset\setminus\{0,1\}$, the concave function $f_\alpha$ is defined on $\rset_+^*$ by $f_\alpha(x) \eqdef (1-x^{\alpha})/\alpha(\alpha-1)$ and $f_0 \eqdef \log$.
This approach has been introduced and developed in \cite{matsuyama}.
We provide further elements for the choice of $\alpha$ by proving (see \Cref{sec:app:thm:main}) that at a population level, under the assumptions of \Cref{ex:main:pop_em} with $f_\alpha$ instead of $f_0$,
\begin{equation} \label{eq:alpha_em:a_b_star}
    \Av^{\alpha} = I_{X, Y}(\thv) \quad \mbox{and} \quad \Bv^{\alpha} = I_{X,Y}(\thv) -\frac 1 {1-\alpha} I_Y(\thv).
\end{equation}
Note that when $\alpha=0$ we recover the previous quantities $\Av$ and $\Bv$ for the classical EM algorithm.
If $I_Y(\thv)\succ 0$, then $\alpha \in(0;1/2)$ is a sufficient condition to meet \ref{hyp:main:thv}.
Besides, if $\Av^{\alpha}$ is invertible we can write
\begin{equation} \label{eq:alpha_em:rho_alpha}
    \lr{\Av^{\alpha}}^{-1} \Bv^{\alpha} = I_{X, Y}(\thv)^{-1}I_{X|Y}(\thv) - \frac{\alpha}{1-\alpha} I_{X, Y}(\thv)^{-1}I_Y(\thv).
\end{equation}
A necessary condition to improve the convergence rate is then $\alpha/(1-\alpha) > 0$, i.e. $\alpha \in (0;1)$.
We can also rewrite \eqref{eq:alpha_em:rho_alpha} as follows:
\begin{equation*}
    \lr{\Av^{\alpha}}^{-1} \Bv^{\alpha} = \frac 1 {1-\alpha} \Av^{-1} \Bv - \frac{\alpha}{1-\alpha} I_q.
\end{equation*}
By the positivity of $\Bv$ for the original EM algorithm, we deduce that the optimal choice of $\alpha$ corresponds to $\alpha = (\rhosup + \rhoinf)/2$ and $\rhosup^{\alpha} = (\rhosup-\rhoinf)/(2-\rhosup-\rhoinf)$, where $\rhosup$ and $\rhoinf$ are defined in \eqref{eq:main:rhosup} and \eqref{eq:main:rhoinf} for the classical EM algorithm.

As a remark, we can see in \cite{matsuyama} that the $\alpha$-EM algorithm does not simply change the $D$-function in \eqref{eq:main:comm:f_D}, it also replaces the objective function with a different bivariate function.

\end{manualexample}

\begin{manualexample}{2.2}[Mirror prox, cont.] \label{ex:main:mirror_prox}
    Assume that \ref{hyp:suff:compacity} hold, that $\Phi$ and $f$ are $C^1$-differentiable on $\Theta$ and twice differentiable at $\thv$, and that the corresponding mirror descent satisfies \ref{hyp:main:regularity}-\ref{hyp:main:thv} and $\thv=\M{\thv}\in\ri{\Theta}$.
    Then, we prove in \Cref{sec:app:thm:main} that
    \begin{equation*}
        \tAv^m = \tAv \quad \mbox{and} \quad \tBv^m = \tAv + \tBv\tAv^{-1}\tBv - \tBv,
    \end{equation*}
    where $\Av$, $\Bv$ are defined in \eqref{eq:main:comm:md} for mirror descent.
    This provides the symmetry of $\tBv^m$ and thus of $\Bv^m$, as well as
    \begin{equation} \label{eq:main:comm:mp}
        (\tAv^m)^{-1}\tBv^m = (\tAv^{-1}\tBv)^2 - \tAv^{-1}\tBv + I_d.
    \end{equation}
    We deduce that under \ref{hyp:main:thv} for mirror descent, $\rhosup^m < 1$ if and only if $\tBv \succ 0$, which is met as soon as $\eta\in(0;\gamma_\star/\beta_\star)$, where $\beta_\star \eqdef \max \Spec{\partial^2 \tilde{f}(\thv)}$ and $\gamma_\star \eqdef \min \Spec{\partial^2 \tilde{\Phi}(\thv)}$. 
    Besides, a sufficient condition for the $C^1$-differentiability of $\partial_2 \mcq^m$ in a neighborhood of $\thv$ is the $C^2$-differentiability of $\Phi$ and $f$ in a neighborhood of $\thv$.
    Under all those assumptions, mirror prox thus meets \ref{hyp:main:regularity}-\ref{hyp:main:thv}.
    
    Note that the above sufficient condition of regularity implies \ref{hyp:main:regularity} for mirror descent, and that if $\tBv\succ 0$, then $\partial^2 \tilde{f}(\thv)\succ 0$ implies \ref{hyp:main:thv} for mirror descent (see \Cref{ex:main:mirror_descent} in page \pageref{ex:main:mirror_descent}).
    
    As a remark, \eqref{eq:main:comm:mp} yields that the convergence rates defined in (\ref{eq:main:rhosup}-\ref{eq:main:rhoinf}) are always strictly higher for mirror prox than for the corresponding mirror descent.
    The rate $\rhoinf^m$ is even lower-bounded by $3/4$ (see \Cref{sec:app:thm:main}).
\end{manualexample}

\begin{manualexample}{3}[Newton's method] \label{ex:main:newton}
    Let $f$ be a $C^2$-differentiable function $f$ whose Hessian is invertible on $\Theta$. Newton's method considers the procedure defined for all $n\in\nset$ by
    \begin{equation*}
        \theta_{n+1} = \theta_n - \partial^2 f(\theta_n)^{-1} \partial f(\theta_n).
    \end{equation*}
    It fits into the general framework of \eqref{eq:general_framework:m} with $\mcq$ defined on $\Theta\times\Theta$ by
    \begin{equation*}
        \Q{\theta}{\theta'} \eqdef \frac 1 2 \norm{\theta' - \theta + \partial^2 f(\theta)^{-1} \partial f(\theta)}_2^2.
    \end{equation*}
    If $f$ is thrice differentiable at $\thv$ and $\partial f(\thv) =0$, straightforward calculus yields $\Av = I_q$ and $\Bv=0$.
    Newton's method thus meets \ref{hyp:main:thv} with $\rhoinf=\rhosup=0$, which is coherent with the fact that the convergence is quadratic under the assumptions of \cite[Theorem 3.5, p.44]{nocedal_wright}.
\end{manualexample}

\section{Proof of Theorem 1} \label{sec:main:proof}
We start with some notation that will be used in several parts of the paper. 
Set $d \eqdef \dim(\Vset)$.
Let $v_1,\ldots,v_d \in\rset^q$ be an orthonormal basis of $\Vset$ and let $P$ be the matrix 
\begin{equation}
    P\eqdef [v_1 | \ldots | v_d] \in \rmat{q}{d} \label{eq:def:P}
\end{equation}
so that $\Vset=P (\rset^d)$.
For all $x \in \rset^q$ and $M \in \rmat{q}{q}$, write
\begin{equation}
  \label{eq:def:tilde}
\tilde{x} \eqdef P^\top x \in \rset^d\,, \quad  \tilde{M} \eqdef
P^\top M P \in \rmat{d}{d}.  
\end{equation}
Note that for all $v\in \Vset$, $P\tilde v = P P^\top v = v$.
Write for all $n \in \nset$, 
\begin{equation}
    \Delta_n \eqdef \theta_n-\thv \in \Vset, \label{eq:def:delta}
\end{equation}
and hence 
\begin{equation}
    \tilde \Delta_n \eqdef P^\top (\theta_n-\thv) \in \rset^d. \label{eq:def:tilde:delta}
\end{equation}
Then, for all $M \in \rmat{q}{q}$ and $n, m \in \nset$,
\begin{equation} \label{eq:notation:pscal}
    \tilde \Delta_n^\top \tilde M \tilde \Delta_m=\tilde \Delta_n^\top P^\top M P \tilde \Delta_m=\Delta_n^\top M \Delta_m.
  \end{equation}
In particular, with $M = I_q$ the identity matrix, for all $n \in \nset$,
\begin{equation} \label{eq:notation:norm}
    \normtxt{\tilde{\Delta}_n}_2 = \normtxt{\Delta_n}_2.
  \end{equation}

\begin{proof}[Proof of \Cref{thm:main}]

    In the definition of $\rhosup$ given in \eqref{eq:main:rhosup}, the supremum of $v\mapsto |v^\top \Bv v| / (v^\top \Av v)$ can be taken over the compact set $\settxt{v\in \Vset}{v^\top v = 1}$ and it is thus attained.
    This yields $\rhosup \in[0;1)$ under \ref{hyp:main:thv}.

    Let $\rho\in(\rhosup; 1)$. \Cref{prop:main:contraction} below provides for sufficiently large $n$,
    \begin{equation*}
        \normtxt{\tilde{\Delta}_{n+1}}_{\tAv} \leq \rho \normtxt{\tilde{\Delta}_n}_{\tAv}.
    \end{equation*}
    This yields $\normtxt{\tilde{\Delta}_n}_{\tAv} = \bigo(\rho^n)$, and hence $\normtxt{\tilde{\Delta}_n}_{2} = \bigo(\rho^n)$ by the equivalence of norms in finite dimension.
    The proof is concluded by noting that this holds for any arbitrary $\rho> \rhosup$ and since by \eqref{eq:notation:norm},
    \begin{equation*}
        \normtxt{\tilde{\Delta}_n}_2 = \normtxt{\Delta_n}_2 = \norm{\theta_n - \thv}_2.
    \end{equation*}
\end{proof}

\begin{lemma} \label{lemma:main:minimizer}
    Under \ref{hyp:main:convergence}-\ref{hyp:main:regularity}, $\thv$ is a local minimizer on $\Theta$ of the function $\theta\mapsto\Q{\thv}{\theta}$.
\end{lemma}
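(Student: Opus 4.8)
The plan is to exploit the defining optimality of the iterates. Since $\theta_{n+1}\in\M{\theta_n}$, for every $\theta\in\Theta$ we have $\Q{\theta_n}{\theta_{n+1}}\leq\Q{\theta_n}{\theta}$, and the local minimality of $\thv$ should follow by passing to the limit $n\to\infty$ in this single inequality. The only genuine care required is that the continuity of $\mcq$ supplied by \ref{hyp:main:regularity} is merely local, around $(\thv,\thv)$, so the limiting argument must be confined to a neighborhood of $\thv$ — which is exactly what a local minimality statement permits, and is why I expect no real difficulty here.

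First I would use \ref{hyp:main:regularity} to fix an open neighborhood of $(\thv,\thv)$ on which $\mcq$ is continuous, and shrink it to a product neighborhood $\Uset\times\Uset$, where $\Uset$ is an open neighborhood of $\thv$ in $\rset^q$ (any neighborhood of $(\thv,\thv)$ contains a basic product set $\Uset_1\times\Uset_2$, and one takes $\Uset\eqdef\Uset_1\cap\Uset_2$). This guarantees that $\mcq$ is continuous at every point of $\Uset\times\Uset$. By \ref{hyp:main:convergence}, $\theta_n\to\thv$, so there exists $N$ such that $\theta_n\in\Uset$ for all $n\geq N$; in particular both $\theta_n$ and $\theta_{n+1}$ lie in $\Uset$ for $n\geq N$.

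Next I would fix an arbitrary $\theta\in\Uset\cap\Theta$. For $n\geq N$, the optimality of $\theta_{n+1}$ over $\Theta$ gives $\Q{\theta_n}{\theta_{n+1}}\leq\Q{\theta_n}{\theta}$, and both pairs $(\theta_n,\theta_{n+1})$ and $(\theta_n,\theta)$ remain in $\Uset\times\Uset$. Letting $n\to\infty$, the left-hand side converges to $\Q{\thv}{\thv}$ because $(\theta_n,\theta_{n+1})\to(\thv,\thv)$, and the right-hand side converges to $\Q{\thv}{\theta}$ because $(\theta_n,\theta)\to(\thv,\theta)\in\Uset\times\Uset$; continuity then yields $\Q{\thv}{\thv}\leq\Q{\thv}{\theta}$. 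As $\theta\in\Uset\cap\Theta$ was arbitrary, $\thv$ minimizes $\theta\mapsto\Q{\thv}{\theta}$ on $\Uset\cap\Theta$, i.e. it is a local minimizer of this map on $\Theta$.

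Note that the argument uses only the continuity half of \ref{hyp:main:regularity}; the $C^1$-differentiability of $\partial_2\mcq$ plays no role in this lemma. The one point demanding attention — rather than an obstacle — is the domain of continuity: one must ensure that the pairs appearing in the inequality, together with their limits, all stay inside the single product neighborhood $\Uset\times\Uset$. This is precisely why the conclusion is intrinsically local and why $\theta$ must be restricted to $\Uset\cap\Theta$ rather than ranging over all of $\Theta$.
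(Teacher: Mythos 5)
Your proof is correct and follows essentially the same route as the paper's: fix a neighborhood where $\mcq$ is continuous (from \ref{hyp:main:regularity}), use the defining inequality $\Q{\theta_n}{\theta_{n+1}}\leq\Q{\theta_n}{\theta}$, and pass to the limit via \ref{hyp:main:convergence}. Your version is simply more careful than the paper's three-line argument, in particular by restricting the test point to $\Uset\cap\Theta$ (the paper writes $\theta\in\Nset$, leaving the intersection with $\Theta$ implicit) and by spelling out that the pairs $(\theta_n,\theta_{n+1})$ eventually lie in the product neighborhood.
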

\begin{proof}
    Let $\Nset$ be a neighborhood of $\thv$ such that $\mcq$ is continuous on $\Nset\times\Nset$.
    For all $\theta \in \Nset$ and $n \in \nset$, the definition of $\seq{\theta_n}$ in \eqref{eq:general_framework:m} provides $\Qtxt{\theta_n}{\theta_{n+1}} \leq \Qtxt{\theta_n}{\theta}$.
    Taking the limit when $n$ goes to infinity yields $\Qtxt{\thv}{\thv} \leq \Qtxt{\thv}{\theta}$ for all $\theta\in\Nset$.
\end{proof}

\begin{proposition} \label{prop:main:contraction}
    Under \ref{hyp:main:convexity}-\ref{hyp:main:thv}, for all $\rho>\rhosup$, for sufficiently large $n$,
    \begin{equation*}
        \normtxt{\tilde{\Delta}_{n+1}}_{\tAv} \leq \rho \normtxt{\tilde{\Delta}_n}_{\tAv}.
    \end{equation*}
\end{proposition}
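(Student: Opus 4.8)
The plan is to convert the two first-order optimality conditions---one satisfied by the iterate $\theta_{n+1}$, one by the limit $\thv$---into a single scalar quadratic inequality relating $\Delta_{n+1}$ to $\Delta_n$, and then to read off the contraction factor from the definition \eqref{eq:main:rhosup} of $\rhosup$. First I would record the variational inequalities coming from convexity. Since $\theta_{n+1}\in\M{\theta_n}$ minimizes $\theta\mapsto\Q{\theta_n}{\theta}$ over the convex set $\Theta$ (\ref{hyp:main:convexity}) and $\partial_2\mcq$ is well-defined near $(\thv,\thv)$ (\ref{hyp:main:regularity}), the inequality
\begin{equation*}
    \donce{\Q{\theta_n}{\theta_{n+1}}}^\top(\thv-\theta_{n+1})\geq 0
\end{equation*}
holds for $n$ large (so that $\theta_n,\theta_{n+1}$ lie in the relevant neighborhood, using \ref{hyp:main:convergence}). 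Moreover \Cref{lemma:main:minimizer} gives that $\thv$ is a local minimizer of $\theta\mapsto\Q{\thv}{\theta}$, and convexity of $\Theta$ upgrades this to $\donce{\Q{\thv}{\thv}}^\top(\theta-\thv)\geq 0$ for every $\theta\in\Theta$, in particular $\theta=\theta_{n+1}$.

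Next I would Taylor-expand the vector-valued map $\partial_2\mcq$ to first order at $(\thv,\thv)$, which is legitimate by the $C^1$-differentiability in \ref{hyp:main:regularity}. Using $\dtwice{2}{2}{\Q{\thv}{\thv}}=\Av$ and $\dtwice{1}{2}{\Q{\thv}{\thv}}=-\Bv$,
\begin{equation*}
    \donce{\Q{\theta_n}{\theta_{n+1}}}=\donce{\Q{\thv}{\thv}}-\Bv\Delta_n+\Av\Delta_{n+1}+r_n,
\end{equation*}
with $\norm{r_n}_2=\lito\lr{\norm{\Delta_n}_2\vee\norm{\Delta_{n+1}}_2}$. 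Substituting into the first variational inequality, dropping the nonnegative term $\donce{\Q{\thv}{\thv}}^\top\Delta_{n+1}$ by the second one, and invoking the symmetry of $\Av$ (a Hessian) and of $\Bv$ (\ref{hyp:main:thv}), one obtains
\begin{equation*}
    \Delta_{n+1}^\top\Av\Delta_{n+1}\leq\Delta_n^\top\Bv\Delta_{n+1}-r_n^\top\Delta_{n+1}.
\end{equation*}

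The crux is to bound the bilinear term $\Delta_n^\top\Bv\Delta_{n+1}$ by $\rhosup$ times a product of $\Av$-norms, whereas \eqref{eq:main:rhosup} only controls the quadratic form $v^\top\Bv v$. This is where the symmetry of $\Bv$ is essential: both $\Delta_n,\Delta_{n+1}$ lie in $\Vset$, on which $\Av$ is positive definite by \ref{hyp:main:thv}, so $\pscal{u}{v}\eqdef u^\top\Av v$ is an inner product on $\Vset$ and $\Bv$ represents a self-adjoint operator $T$ for it through $u^\top\Bv v=\pscal{u}{Tv}$. For a self-adjoint operator the operator norm equals the supremum of the absolute Rayleigh quotient, which is exactly $\rhosup$, so Cauchy--Schwarz in $\pscal{\cdot}{\cdot}$ yields $|\Delta_n^\top\Bv\Delta_{n+1}|\leq\rhosup\norm{\Delta_n}_{\Av}\norm{\Delta_{n+1}}_{\Av}$. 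Passing to the tilde coordinates via \eqref{eq:notation:pscal} and writing $a_n\eqdef\norm{\tilde{\Delta}_n}_{\tAv}$, the displayed inequality becomes $a_{n+1}^2\leq\rhosup\,a_na_{n+1}+|r_n^\top\Delta_{n+1}|$.

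It then remains to absorb the remainder. By equivalence of norms, $\norm{r_n}_2=\lito(a_n\vee a_{n+1})$ and $\norm{\Delta_{n+1}}_2\leq c\,a_{n+1}$, so $|r_n^\top\Delta_{n+1}|\leq\epsilon_n(a_n+a_{n+1})a_{n+1}$ with $\epsilon_n\to0$. If $a_{n+1}=0$ the claim is trivial; otherwise dividing by $a_{n+1}$ leaves $a_{n+1}(1-\epsilon_n)\leq(\rhosup+\epsilon_n)a_n$, and since $(\rhosup+\epsilon_n)/(1-\epsilon_n)\to\rhosup<\rho$, for all $n$ large enough one gets $a_{n+1}\leq\rho\,a_n$, which is the assertion. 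I expect the main obstacle to be precisely the bilinear-to-quadratic passage of the third paragraph: it is the only genuinely non-routine step, it fails without the symmetry of $\Bv$, and it is what forces that hypothesis into \ref{hyp:main:thv}; the remainder analysis, though it relies on the uniform control $\lito(\norm{\Delta_n}_2\vee\norm{\Delta_{n+1}}_2)$ furnished by the $C^1$ expansion, is otherwise routine.
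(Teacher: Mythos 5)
Your proof is correct, and it shares with the paper only its first half: like the paper, you derive the two variational inequalities from first-order optimality and convexity of $\Theta$ (the paper's \eqref{eq:main:grad_min:thv}--\eqref{eq:main:grad_min:theta_n}, combined into \eqref{eq:main:balakrishnan}). After that the executions genuinely diverge. The paper applies Taylor's theorem with \emph{integral} remainder to obtain the exact identities \eqref{eq:main:d2_an}--\eqref{eq:main:d2_bn} with the segment-averaged matrices $\An,\Bn$ of \eqref{eq:notation:an_bn}; this gives the exact inequality \eqref{eq:main:inequality_bala:tilde}, then a contraction \eqref{eq:main:non_asymptotic} in the $n$-dependent norm $\normtxt{\cdot}_{\tAn}$ with factor $\rhon=\normmattxt{\tAn^{-1/2}\tBn\tAn^{-1/2}}_2$ via \Cref{lemma:tech:domination} (which deliberately does not assume its matrix $B$ symmetric, since $\Bn$ need not be), and finally three perturbation lemmas (\Cref{lemma:tech:rho_approx,lemma:tech:rho_a_b,lemma:tech:norm_approx}) to transfer the factor to $\rhosup$ and the norm to $\normtxt{\cdot}_{\tAv}$. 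You instead linearize $\partial_2\mcq$ once, jointly at the limit point $(\thv,\thv)$, keep the fixed matrices $\Av,\Bv$ plus a $\lito$-remainder, bound the bilinear term by $\rhosup\normtxt{\tilde{\Delta}_n}_{\tAv}\normtxt{\tilde{\Delta}_{n+1}}_{\tAv}$ through self-adjointness of $\Bv$ in the $\Av$-inner product on $\Vset$ --- a single step that packages \Cref{lemma:tech:rho_a_b} together with Cauchy--Schwarz, and the place where the symmetry of $\Bv$ in \ref{hyp:main:thv} is consumed, exactly as you flag --- and then absorb the remainder by dividing by $\normtxt{\tilde{\Delta}_{n+1}}_{\tAv}$ (with the $\tilde{\Delta}_{n+1}=0$ case handled separately). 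Your route buys economy: it works in the single norm $\normtxt{\cdot}_{\tAv}$ throughout, never needs $\tAn\succ 0$, and dispenses with the matrix-perturbation lemmas entirely. The paper's route buys reusability: the exact identities involving $\An,\Bn$ (in particular $\tBn\tilde{\Delta}_n=\tAn\tilde{\Delta}_{n+1}$ when $\thv\in\ri{\Theta}$) are reused verbatim in the proofs of \Cref{thm:main:lower_bound} and \Cref{thm:main:exact_rate}, and the same skeleton yields the ball version \Cref{prop:app:contraction} needed for \Cref{thm:suff:convergence}, where no convergent sequence is yet available; your sequence-indexed $\lito$ bookkeeping would need to be restated in terms of small balls to serve those purposes (harmlessly so, since differentiability at $(\thv,\thv)$ already gives the required uniformity on small neighborhoods).
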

\begin{proof}
    By \Cref{lemma:main:minimizer}, $\thv$ is a local minimizer of the function $\theta \mapsto \Qtxt{\thv}{\theta}$.
    From the differentiability of that function at $\thv$ under \ref{hyp:main:regularity}, and the convexity of $\Theta$ under \ref{hyp:main:convexity}, we deduce that for all $\theta\in\Theta$,
    \begin{equation} \label{eq:main:grad_min:thv}
        \pscal{\donce{\Q{\thv}{\thv}}}{\theta - \thv} = \lim_{t \to 0^+}  \frac{\partial \Qtxt{\thv}{t \theta +(1-t)\thv}}{\partial t} \geq 0.
    \end{equation}
    Similarly, under \ref{hyp:main:convergence}-\ref{hyp:main:regularity} the function $\theta \mapsto \Qtxt{\theta_n}{\theta}$ is differentiable at $\theta_{n+1}$ for sufficiently large $n$, which yields for all $\theta \in \Theta$,
    \begin{equation} \label{eq:main:grad_min:theta_n}
        \pscal{\donce{\Q{\theta_n}{\theta_{n+1}}}}{\theta - \theta_{n+1}} \geq 0.
    \end{equation}
    Using \eqref{eq:main:grad_min:theta_n} with $\theta=\thv$ and \eqref{eq:main:grad_min:thv} with $\theta=\theta_{n+1}$  provides
    \begin{equation*}
        \pscal{\donce{\Q{\theta_n}{\theta_{n+1}}}}{\theta_{n+1} - \thv} \leq 0 \leq \pscal{\donce{\Q{\thv}{\thv}}}{\theta_{n+1} - \thv},
    \end{equation*}
    which in turn implies
    \begin{equation} \label{eq:main:balakrishnan}
        \pscal{\donce{\Q{\theta_n}{\theta_{n+1}}} - \donce{\Q{\theta_n}{\thv}}}{\theta_{n+1} - \thv} \leq
        \pscal{\donce{\Q{\thv}{\thv}} - \donce{\Q{\theta_n}{\thv}}}{\theta_{n+1} - \thv}.
    \end{equation}
    Besides, applying Taylor's theorem to $\theta \mapsto \donce{\Qtxt{\theta_n}{\theta}}$ and $\theta \mapsto \donce{\Qtxt{\theta}{\thv}}$ yields for sufficiently large $n$,
    \begin{align}
        \donce{\Q{\theta_n}{\theta_{n+1}}} - \donce{\Q{\theta_n}{\thv}} &= \An (\theta_{n+1} - \thv), \label{eq:main:d2_an}  \\ 
        \donce{\Q{\thv}{\thv}}-\donce{\Q{\theta_n}{\thv}}  &= \Bn (\theta_n - \thv), \label{eq:main:d2_bn}
    \end{align}
    where
    \begin{equation} \label{eq:notation:an_bn}
        \An \eqdef\int_0^1 \dtwice{2}{2}{\Q{\theta_n}{s \theta_{n+1}+(1-s) \thv}} \rmd s, \quad \Bn  \eqdef -\int_0^1 \dtwice{1}{2}{\Q{s \thv+(1-s) \theta_n}{\thv}} \rmd s.
    \end{equation}
    Plugging (\ref{eq:main:d2_an}-\ref{eq:main:d2_bn}) into $\eqref{eq:main:balakrishnan}$, we deduce
    \begin{equation*} 
        (\theta_{n+1} - \thv)^\top \An (\theta_{n+1} - \thv) \leq (\theta_{n+1} - \thv)^\top \Bn (\theta_n - \thv).
    \end{equation*}
    Using \eqref{eq:notation:pscal}, this can be written as
    \begin{equation} \label{eq:main:inequality_bala:tilde}
        \tilde{\Delta}_{n+1}^\top \tAn \tilde{\Delta}_{n+1} \leq \tilde{\Delta}_{n+1}^\top \tBn \tilde{\Delta}_{n}.  
    \end{equation}
    Now, by Schwarz's theorem, $\Av$ and hence $\tAv$ are
    symmetric. Similarly, under
    \ref{hyp:main:convergence}-\ref{hyp:main:regularity}, $\An$ and
    hence $\tAn$ are symmetric for sufficiently large $n$. Moreover, \ref{hyp:main:thv} implies the positive-definiteness of $\tAv$, and by \ref{hyp:main:convergence}-\ref{hyp:main:regularity} that of $\tAn$ for sufficiently large $n$ (see \cite[Section 1.3.4, p.47]{terry_tao}). We can thus apply \Cref{lemma:tech:domination} to \eqref{eq:main:inequality_bala:tilde} with $x=\tilde{\Delta}_{n+1}$, $y=\tilde{\Delta}_{n}$, $A=\tAn$ and $B=\tBn$ and we obtain
    \begin{equation} \label{eq:main:non_asymptotic}
        \normtxt{\tilde{\Delta}_{n+1}}_{\tAn} \leq \rhon \normtxt{\tilde{\Delta}_n}_{\tAn},
    \end{equation}
    where $\rhon \eqdef \normmattxt{\tAn^{-1/2} \tBn \tAn^{-1/2}}_2$. Under \ref{hyp:main:convergence}-\ref{hyp:main:regularity}, \Cref{lemma:tech:rho_approx} shows that $\rhon$ converges to $\normmattxt{\tAv^{-1/2} \tBv \tAv^
    {-1/2}}_2$ by choosing $A = \tAv$, $B = \tBv$, $M = \tAn-\tAv$ and $N = \tBn-\tBv$.  On the other hand, by \Cref{lemma:tech:rho_a_b}, $\rhosup = \normmattxt{\tAv^{-1/2} \tBv \tAv^
    {-1/2}}_2$. 
    
    Let $\rho > \rhosup$. Set $\rho' \eqdef (\rho+\rhosup)/2$ and $\epsilon > 0$ such that $(1+\epsilon) \rho' \leq (1-\epsilon) \rho$.
    Under \ref{hyp:main:convergence}-\ref{hyp:main:regularity}, \Cref{lemma:tech:norm_approx} yields that for sufficiently large $n$, for all $u \in \rset^d$,
    \begin{equation*}
        (1-\epsilon) \norm{u}_{\tAn} \leq \norm{u}_{\tAv} \leq (1+\epsilon) \norm{u}_{\tAn}.
    \end{equation*}
    Combining with \eqref{eq:main:non_asymptotic} and the convergence of $\rhon$ to $\rhosup$, we deduce for sufficiently large $n$,
    \begin{align*}
        \normtxt{\tilde{\Delta}_{n+1}}_{\tAv} \leq (1+\epsilon) \normtxt{\tilde{\Delta}_{n+1}}_{\tAn} \leq (1+\epsilon) \rhon \normtxt{\tilde{\Delta}_n}_{\tAn}
        \leq \frac{(1+\epsilon) \rho'}{1-\epsilon}  \normtxt{\tilde{\Delta}_n}_{\tAv} \leq \rho \normtxt{\tilde{\Delta}_n}_{\tAv}. 
    \end{align*}
\end{proof}

\section{Convex constrained optimization} \label{sec:mirror_prox}

\begin{theorem} \label{thm:convergence_mirror_prox}
    Assume that the mirror prox strategy defined in \Cref{ex:mirror_prox} page \pageref{ex:mirror_prox} satisfies the following assumptions:
    \hypertarget{hyp:conv:mp:i}{(i)} $\Cset\subset\Dset$,
    \hypertarget{hyp:conv:mp:ii}{(ii)} $\Phi$ and $f$ are twice differentiable on $\Cset$ and $C^2$-differentiable in a neighborhood of $\thv$,
    \hypertarget{hyp:conv:mp:iii}{(iii)} $\Phi$ is $\gamma$-strongly convex on $\Cset$ and $f$ is convex and $\beta$-smooth, with respect to $\norm{\cdot}_2$,
    \hypertarget{hyp:conv:mp:iv}{(iv)} $\eta \in (0;\gamma/\beta)$,
    \hypertarget{hyp:conv:mp:v}{(v)} $\thv$ is the unique minimizer of $f$ on $\Cset$,
    \hypertarget{hyp:conv:mp:vi}{(vi)} $\thv\in\ri{\Cset}$ and $\partial^2 \tilde{f}(\thv)\succ 0$.    

    Then, the algorithm converges and the convergence is asymptotically geometric.
\end{theorem}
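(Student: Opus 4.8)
The plan is to deduce the statement from the general machinery already developed, by checking that assumptions (i)--(vi) imply all the hypotheses needed to apply first \Cref{thm:suff:convergence}, which yields the convergence \ref{hyp:main:convergence}, and then \Cref{thm:main}, which yields the asymptotically geometric rate. The first reduction is immediate: (i) gives $\Theta=\Cset\cap\Dset=\Cset$, so that $\Theta$ is convex and compact, establishing \ref{hyp:main:convexity} and \ref{hyp:suff:compacity}; moreover $\ri{\Theta}=\ri{\Cset}$, so $\thv\in\ri{\Theta}$ by (vi). The continuity \ref{hyp:suff:continuity} of $\mcq^m$ follows from the $C^1$-differentiability of $\Phi$ and $f$ in (ii) together with the continuity of the point-to-point mirror descent map $\theta\mapsto\M{\theta}$, itself a consequence of the strong convexity in (iii).

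Next I would verify \ref{hyp:main:regularity} and \ref{hyp:main:thv} for mirror prox through \Cref{ex:main:mirror_prox}. Its hypotheses require the underlying mirror descent to satisfy \ref{hyp:main:regularity}--\ref{hyp:main:thv} and $\thv=\M{\thv}\in\ri{\Theta}$. Regularity for mirror descent follows from the $C^2$-differentiability of $\Phi$ and $f$ near $\thv$ in (ii); the identity $\thv=\M{\thv}$ follows from \Cref{ex:suff:md}, since by (v) $\thv$ is a local minimizer of $f$ and by (ii) $f$ is differentiable at $\thv$. For the domination, recall from \Cref{ex:main:mirror_descent} that $\Av=\partial^2\Phi(\thv)$ and $\Bv=\partial^2\Phi(\thv)-\eta\,\partial^2 f(\thv)$; by the discussion surrounding \Cref{ex:main:mirror_prox} it suffices to establish $\tBv\succ 0$ together with $\partial^2\tilde{f}(\thv)\succ 0$, the latter being exactly (vi). The key computation is the passage from the global constants in (iii)--(iv) to the tangent-space constants $\gamma_\star=\min\Spec{\partial^2\tilde{\Phi}(\thv)}$ and $\beta_\star=\max\Spec{\partial^2\tilde{f}(\thv)}$: since $P$ has orthonormal columns, $\gamma$-strong convexity of $\Phi$ yields $\gamma_\star\geq\gamma$ and $\beta$-smoothness of $f$ yields $\beta_\star\leq\beta$, so that $\eta\in(0;\gamma/\beta)\subseteq(0;\gamma_\star/\beta_\star)$ and hence $\tBv=\partial^2\tilde{\Phi}(\thv)-\eta\,\partial^2\tilde{f}(\thv)\succ 0$. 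This gives \ref{hyp:main:regularity}--\ref{hyp:main:thv} for mirror prox, and in particular $\rhosup^m\in[0;1)$.

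It then remains to secure \ref{hyp:main:convergence}. The two missing hypotheses \ref{hyp:suff:accu_point} and \ref{hyp:suff:m_thv} are supplied by \Cref{prop:suff:mp}, whose assumptions are matched term by term: its twice-differentiability, strong-convexity/smoothness, step-size, unique-minimizer and relative-interior conditions are respectively (ii), (iii) (with $\Cset\cap\Dset=\Cset$), (iv), (v) and the first part of (vi), while \ref{hyp:suff:compacity}--\ref{hyp:suff:continuity} have already been checked. With \ref{hyp:main:convexity}, \ref{hyp:suff:compacity}--\ref{hyp:suff:m_thv}, \ref{hyp:main:regularity} and \ref{hyp:main:thv} in hand, \Cref{thm:suff:convergence} gives $\theta_n\to\thv$, i.e. \ref{hyp:main:convergence}. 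Finally, all of \ref{hyp:main:convexity}--\ref{hyp:main:thv} now hold for mirror prox, so \Cref{thm:main} yields $\theta_n-\thv=\lito(\rho^n)$ for every $\rho\in(\rhosup^m;1)$, which is precisely the asymptotically geometric convergence claimed.

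The main obstacle is concentrated in the application of \Cref{ex:main:mirror_prox}: one must know that the point-to-point map $\M{\cdot}$ is regular enough for $\partial_2\mcq^m$ to be $C^1$ near $\thv$, and carry out the eigenvalue comparison that converts the scalar step-size condition $\eta<\gamma/\beta$ into the matrix inequality $\tBv\succ 0$ restricted to the tangent directions $\Vset$. Once these two points are settled, the remainder of the argument is a bookkeeping match of assumption labels across \Cref{prop:suff:mp}, \Cref{thm:suff:convergence} and \Cref{thm:main}.
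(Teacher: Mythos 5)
Your proposal is correct and follows exactly the paper's route: the paper's proof is simply a citation of \Cref{prop:suff:mp} (which supplies \ref{hyp:suff:accu_point} and \ref{hyp:suff:m_thv}) and \Cref{ex:main:mirror_prox} (which supplies \ref{hyp:main:regularity}--\ref{hyp:main:thv} for mirror prox), followed implicitly by \Cref{thm:suff:convergence} and \Cref{thm:main}, which is precisely the chain you spell out. Your additional details --- the reduction $\Theta=\Cset$ under (i), the identity $\thv=\M{\thv}$ via \Cref{ex:suff:md}, and the eigenvalue comparison $\gamma_\star\geq\gamma$, $\beta_\star\leq\beta$ converting $\eta<\gamma/\beta$ into $\tBv\succ 0$ --- are exactly the bookkeeping the paper leaves implicit, and they are carried out correctly.
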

\begin{proof}
    See \Cref{prop:suff:mp} in page \pageref{prop:suff:mp} and \Cref{ex:main:mirror_prox} in page \pageref{ex:main:mirror_prox}.
\end{proof}

Even if the convergence rates of mirror descent are always lower than those of mirror prox for the same optimization problem (see \Cref{ex:main:mirror_prox} in page \pageref{ex:main:mirror_prox}), the convergence of mirror prox is guaranteed under the assumptions of \Cref{thm:convergence_mirror_prox}.

Note that no conditions are imposed on the initialization (see \cite[Chapter 4, p.299]{bubeck}).

\begin{corollary} \label{corollary:convergence_mirror_prox}
    Let $q\in\nset^*$, $\Cset\subset\rset^q$ be compact set, and $f$ be a function that meets assumptions \hyperlink{hyp:conv:mp:ii}{(ii)}-\hyperlink{hyp:conv:mp:iii}{(iii)} and \hyperlink{hyp:conv:mp:v}{(v)}-\hyperlink{hyp:conv:mp:vi}{(vi)} of \Cref{thm:convergence_mirror_prox}.
    Then, mirror prox provides an algorithm that converges to $\argmin_\Cset f$.
\end{corollary}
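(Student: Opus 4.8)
The plan is to reduce Corollary~\ref{corollary:convergence_mirror_prox} to Theorem~\ref{thm:convergence_mirror_prox} by \emph{constructing} a concrete mirror prox instance. The key observation is that the corollary's hypotheses constrain only the compact set $\Cset$ and the objective $f$ (through the $f$-parts of (ii)--(iii) and through (v)--(vi)): the mirror map $\Phi$, its domain $\Dset$, and the step-size $\eta$ are left free. It therefore suffices to exhibit one admissible choice of $(\Phi, \Dset, \eta)$ for which all of assumptions (i)--(vi) of \Cref{thm:convergence_mirror_prox} hold, and then invoke that theorem. Note that the two assumptions omitted by the corollary, namely (i) and (iv), are precisely the map-dependent ones.

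First I would take the Euclidean mirror map $\Phi \colon x \mapsto x^\top x/2$ on $\Dset \eqdef \rset^q$. This is a valid mirror map in the sense of \Cref{ex:mirror_descent}: it is strictly convex and $C^\infty$; its gradient is the identity, so $\partial\Phi(\Dset) = \rset^q$ and \ref{hyp:mirror_map:surj} holds; and since $\Dset = \rset^q$ is open with empty topological boundary, the divergence condition \ref{hyp:mirror_map:div} holds vacuously. In particular $\Phi$ is a Legendre function, so $\M{\theta}$ is a singleton for every $\theta$ and the mirror prox iterates are well defined. I would then check that this choice discharges the structural assumptions of \Cref{thm:convergence_mirror_prox}: assumption (i), $\Cset\subset\Dset$, is immediate since $\Dset = \rset^q$; for (ii)--(iii), $\Phi$ is $C^\infty$ hence twice differentiable on $\Cset$ and $C^2$ near $\thv$, and it is $1$-strongly convex with respect to $\norm{\cdot}_2$, so we may take $\gamma = 1$, while the requirements on $f$ are granted by hypothesis. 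This fixes the admissible range in (iv) to $(0;1/\beta)$, which is nonempty because $\beta<\infty$ by $\beta$-smoothness, and I would pick any such $\eta$. Assumptions (v) and (vi) concern only $f$ and $\Cset$ and are assumed (here $\Cset$ being convex compact, as required by the mirror prox framework, so that $\ri{\Cset}$ is meaningful).

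With all of (i)--(vi) verified, \Cref{thm:convergence_mirror_prox} applies: the mirror prox sequence $\seq{\theta_n}$ converges, with asymptotically geometric rate. Since (v) states that $\thv$ is the unique minimizer of $f$ on $\Cset$, i.e. $\argmin_\Cset f = \{\thv\}$, and the limit produced by the theorem is exactly $\thv$, the algorithm converges to $\argmin_\Cset f$, as claimed. The proof is essentially a verification rather than a computation; the only genuine idea is that the quadratic mirror map is a \emph{universal} admissible choice that settles the map-dependent hypotheses (i) and (iv) for any compact $\Cset$. The one point deserving care is confirming that its associated parameters ($\gamma = 1$, empty boundary of $\Dset$, and the resulting nonempty step-size interval $(0;1/\beta)$) genuinely meet the theorem's requirements, which I expect to be the main—if minor—obstacle.
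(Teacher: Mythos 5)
Your proof is correct, and it follows the same reduction as the paper: since the corollary's hypotheses constrain only $\Cset$ and $f$, it suffices to exhibit one admissible triple $(\Phi,\Dset,\eta)$ satisfying assumptions (i)--(vi) of \Cref{thm:convergence_mirror_prox} and then invoke that theorem, so the entire content of the proof is the choice of witness. Where you differ from the paper is in the witness itself. The paper sets $R\eqdef\max_{x\in\Cset}\norm{x}_2$, takes $\Dset\eqdef\Ball{0}{R'}$ for some $R'>R$ together with the map $\Phi(x)\eqdef\norm{x}_2^2/(R'-\norm{x}_2^2)$, whose gradient diverges on $\partial\Dset$, and obtains the step-size range $\eta\in(0;2(R'\beta)^{-1})$, the strong-convexity constant there scaling like $1/R'$. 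You instead take $\Dset=\rset^q$ and the quadratic map $\Phi(x)=x^\top x/2$, with $\gamma=1$ and $\eta\in(0;1/\beta)$. Your choice is admissible in the paper's framework: \Cref{ex:mirror_descent} only requires $\Dset$ convex and open with $\Cset\subset\overline{\Dset}$ and $\Cset\cap\Dset\neq\emptyset$; condition \ref{hyp:mirror_map:div} holds vacuously since $\partial\Dset=\emptyset$; the quadratic map is Legendre, so the minimization steps are single-valued; and the paper itself records that gradient descent, i.e. $\Phi\colon x\mapsto x^\top x/2$, is a particular case of mirror descent. As for what each choice buys: your witness is simpler, its parameters ($\gamma=1$, $\eta<1/\beta$) do not depend on the size of $\Cset$, and it identifies the constructed algorithm as the plain Euclidean extra-gradient scheme with projection onto $\Cset$; the paper's witness keeps $\Dset$ bounded, so the boundary-divergence requirement of the mirror-map definition is met non-vacuously (avoiding any discussion of empty boundaries), at the price of a geometry-dependent step-size range. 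Both constructions discharge (i)--(vi), so both proofs are valid.
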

\begin{proof}
    Write $R \eqdef \max_{x\in\Cset} \norm{x}_2$. For all $R'>R$, the mirror map $\Phi$ defined on $\Dset \eqdef \Ball{0}{R'}\eqdef\set{x\in\rset^d}{\norm{x}_2< R'}$ by $\Phi(x)\eqdef\norm{x}_2^2/(R'-\norm{x}_2^2)$ meets the assumptions of \Cref{thm:convergence_mirror_prox} for all $\eta \in (0;2(R'\beta)^{-1})$.
\end{proof}

\section{Discussion} \label{sec:discussion}

\subsection{Non-asymptotic convergence} \label{sec:discussion:nonasymptotic}

We proved the asymptotic geometric convergence in \Cref{sec:main:proof} by using that for all $n \in \nset$,
\begin{equation*}
    \normtxt{\ttheta_{n+1} - \ttheta_\star}_{\tAn} \leq \rhon \normtxt{\ttheta_n - \ttheta_\star}_{\tAn},
\end{equation*}
as soon as we can define the above quantities (see \eqref{eq:main:non_asymptotic}).
The question then rises of deriving non-asymptotic convergence rates. Apart from the fact that the norm depends on $n$, the main issue would be to obtain $\rhon < 1$.
However, the ratio $\rhosup$ compares $\dtwice{2}{2}{\Qtxt{\thv}{\thv}}$ with $\dtwice{1}{2}{\Qtxt{\thv}{\thv}}$, whereas $\rhon$ compares
\begin{equation} \label{eq:discussion:an_bn}
    \An=\int_0^1 \dtwice{2}{2}{\Q{\theta_n}{s \theta_{n+1}+(1-s) \thv}} \rmd s \quad \mbox{with} \quad \Bn=-\int_0^1 \dtwice{1}{2}{\Q{s \thv+(1-s) \theta_n}{\thv}} \rmd s,
\end{equation}
and the problem is not of the same complexity. A sufficient condition to simplify it can be $\min \Spec{\An} > \max |\Spec{\Bn}|$.
Despite being less precise than a comparison being matrices, such a condition has the advantage that it is sufficient to verify it for every $s \in [0;1]$ in \eqref{eq:discussion:an_bn}.
That pointwise condition essentially corresponds to the conditions 1 and 2 behind $\gamma < \lambda$ in \cite[Theorem 1]{balakrishnan}, concerning $\dtwice{1}{2}{\mcq}$ and $\dtwice{2}{2}{\mcq}$ respectively
(the proof of that theorem has besides inspired this work).
We can also identify the classical assumptions of smoothness and
Lipschitz continuity for gradient descent and its variants.

In light of that remark, we better understand why the framework introduced in this paper yields better results asymptotically, as it allows to work with the true asymptotic convergence rate. We can see it when comparing with the results stated so far in the EM literature \cite{dempster,kunstner,meng_94_r} (that generally do not consider constrained optimization and assume that the mapping $\mcm$ is differentiable, among other things).

\subsection{Quadratic convergence}

Under the assumptions of \Cref{thm:main:lower_bound}, we established in \Cref{sec:app:main} that for sufficiently large $n$, we can write $\tBn \tilde{\Delta}_n = \tAn \tilde{\Delta}_{n+1}$, which is equivalent to
\begin{equation} \label{eq:discussion:thv}
    \tthv = \ttheta_n + (I-\tAn^{-1}\tBn)^{-1} (\ttheta_{n+1}-\ttheta_n).
\end{equation}
Note that $\tAn$ and $\tBn$ cannot be computed as they depend on $\thv$ (see \eqref{eq:notation:an_bn}). However, we can approximate them using only $\theta_n$ in order to estimate iteratively $\thv$ with \eqref{eq:discussion:thv}.
In the example of unconstrained gradient descent with step-size $1$, using $\hat{\mca}_n = I_q$ and $\hat{\mcb}_n = I_q - \partial^2 f(\theta_n)$ (see \Cref{ex:main:mirror_descent} in page \pageref{ex:main:mirror_descent}) corresponds to Newton's method (see \Cref{ex:main:newton}).

\subsection{Non-convex constrained optimization}

Considering \Cref{corollary:convergence_mirror_prox} and Lemmas \ref{lemma:tech:sup_f_sup_m} and \ref{lemma:tech:biconjugate}, the problem of finding the unique minimizer of non-convex functions can be brought down to finding $\beta$-smooth approximations of their biconjugates (for an arbitrary $\beta\in\rset^*_+$).

\appendix

\section{Proofs} \label{sec:app:proofs}

\subsection{Asymptotic convergence rate} \label{sec:app:main}

\begin{proof}[Proof of \Cref{cor:main:Q}]
Let $\norm{\cdot}$ be any norm on $\rset^q$. Under \ref{hyp:main:regularity}, the Taylor expansion with integral remainder yields 
\begin{align}
    \Q{\theta_n}{\theta_{n+1}} - \Q{\theta_n}{\thv} &= \lr{\int_0^1 \partial_2 \Q{\theta_n}{t \theta_{n+1}+(1-t)\thv} \rmd t }(\theta_{n+1}-\thv) \nonumber\\
    &= \partial_2 \Q{\thv}{\thv}(\theta_{n+1}-\thv)+ o(\norm{\theta_{n+1}-\thv}) \label{eq:cor1:one}
\end{align}
where the last equality follows from the continuity of the function $(\theta,\theta') \mapsto \partial_2 \Q{\theta}{\theta'}$. 
Moreover, since $\theta \mapsto \Q{\theta}{\thv}$ is differentiable at $\thv$, 
\begin{equation}
    \Q{\theta_n}{\thv} - \Q{\thv}{\thv} = \partial_1 \Q{\thv}{\thv} (\theta_n -\thv) + o(\norm{\theta_{n}-\thv}) \label{eq:cor1:two}
\end{equation}
Summing \eqref{eq:cor1:one} and \eqref{eq:cor1:two} combined with \Cref{thm:main} yield the expected result.
\end{proof}

\begin{proof}[Proof of \Cref{thm:main:lower_bound}]
    First, note that the theorem is proved if $\rhoinf=0$. We now assume that $\rhoinf>0$, which in particular implies that $\tBv$ is invertible. In what follows, we use the notation introduced in \Cref{sec:main:proof}. 
    Following the proof of \Cref{thm:main}, see in particular the proof of Proposition~\ref{prop:main:contraction}, with the additional assumption that $\thv \in \ri{\Theta}$, we can prove that for sufficiently large $n$, for all $\theta \in \Theta$,
    \begin{equation*}
        \pscal{\donce{\Q{\theta_n}{\theta_{n+1}}}}{\theta - \theta_{n+1}} =
        \pscal{\donce{\Q{\thv}{\thv}}}{\theta - \thv} = 0.
    \end{equation*}
    In other words, $\donce{\Qtxt{\theta_n}{\theta_{n+1}}}, \donce{\Qtxt{\thv}{\thv}} \in \Vset^\perp$. This implies that for sufficiently large $n$,
    \begin{align}
        \donce{\Q{\thv}{\thv}} - \donce{\Q{\theta_n}{\theta_{n+1}}}
        &= \donce{\Q{\thv}{\thv}} - \donce{\Q{\theta_n}{\thv}} + \donce{\Q{\theta_n}{\thv}} - \donce{\Q{\theta_n}{\theta_{n+1}}} \nonumber \\
        &= \Bn (\theta_n - \thv) - \An (\theta_{n+1} - \thv) \nonumber \\
        &= \Bn P \tilde{\Delta}_n - \An P \tilde{\Delta}_{n+1} \in \Vset^\perp, \label{eq:thm-mino:one}
    \end{align}
    where $\An$,$\Bn$, $P$ and $\tilde \Delta_n$ are defined respectively in \eqref{eq:notation:an_bn}, \eqref{eq:def:P} and \eqref{eq:def:tilde:delta}. As by definition of $P$, the condition $v \in \Vset^\perp$ is equivalent to the identity $P^\top v = 0$, we deduce from \eqref{eq:thm-mino:one} that $ P^T \Bn P \tilde{\Delta}_n = P^T \An P \tilde{\Delta}_{n+1}$, which can be written as
\begin{equation}
\label{eq:bdelta_adelta}
\tBn \tilde{\Delta}_n = \tAn \tilde{\Delta}_{n+1}. 
\end{equation}
Moreover, by \ref{hyp:main:thv},  $\tAv$ is positive-definite and by \ref{hyp:main:convergence}-\ref{hyp:main:regularity},  $\tAn$ is also positive-definite for sufficiently large $n$. Then, the invertibility of $\tBv$  allows to write for sufficiently large $n$:
    \begin{equation*}
        \lr{\tAn^{-1/2} \tBn \tAn^{-1/2}}^{-1} \tAn^{1/2} \tilde{\Delta}_{n+1} =  \tAn^{1/2}\tBn ^{-1}\tAn \tilde{\Delta}_{n+1}=\tAn^{1/2}\tBn ^{-1}\tBn \tilde{\Delta}_{n}  = \tAn^{1/2} \tilde{\Delta}_n,
    \end{equation*}
    and thus, combining with $\normtxt{\tilde{\Delta}_n}_{\tAn}=\|\tAn^{1/2} \tilde{\Delta}_n\|_2 $ and $\normtxt{\tilde{\Delta}_{n+1}}_{\tAn}=\|\tAn^{1/2} \tilde{\Delta}_{n+1}\|_2 $, we get 
    \begin{equation} \label{eq:main:thm2:domination}
        \normtxt{\tilde{\Delta}_n}_{\tAn} \leq \normmat{\lr{\tAn^{-1/2} \tBn \tAn^{-1/2}}^{-1}}_2 \normtxt{\tilde{\Delta}_{n+1}}_{\tAn}.
    \end{equation}
    Besides, by the symmetry of $\tAv^{-1/2} \tBv \tAv^{-1/2}$,
    \begin{align} \label{eq:main:thm2:rho}
        \begin{split}
            \normmat{\lr{\tAv^{-1/2} \tBv \tAv^{-1/2}}^{-1}}_2^{-1}
            &= \rhom{\lr{\tAv^{-1/2} \tBv \tAv^{-1/2}}^{-1}}^{-1} \\
            &= \inf_{u\in\rset^d} \frac {|u^\top \tAv^{-1/2} \tBv \tAv^{-1/2} u|} {u^\top u}
            = \inf_{u\in\rset^d} \frac {|u^\top \tBv u|} {u^\top \tAv u} = \rhoinf.
        \end{split}
    \end{align}
    Let $\rho\in(0;\rhoinf)$.
    Following the same steps as for the proof of \Cref{thm:main}, we can prove that $\normmattxt{(\tAn^{-1/2} \tBn \tAn^{-1/2})^{-1}}_2$ converges to $\normmattxt{(\tAv^{-1/2} \tBv \tAv^{-1/2})^{-1}}_2$. 
    Together with \eqref{eq:main:thm2:domination} and \eqref{eq:main:thm2:rho}, this provides the existence of $n_0\in\nset$ such that for all $n\geq n_0$, $\normtxt{\tilde{\Delta}_n}_{\tAv} \leq \rho^{-1} \normtxt{\tilde{\Delta}_{n+1}}_{\tAv}$.
    We deduce by induction that for all $n\geq n_0$, $\normtxt{\tilde{\Delta}_{n_0}}_{\tAv} \leq \rho^{-(n-n_0)} \normtxt{\tilde{\Delta}_{n}}_{\tAv}$.
    As the sequence $\seq{\theta_n}$ is not eventually equal to $\thv$, by \eqref{eq:notation:norm} we can choose $n_0$ such that $\normtxt{\tilde{\Delta}_{n_0}}_{\tAv}\neq0$. Hence, since all the norms on a finite dimensional space are equivalent, we deduce $\underset{n\to\infty}{\lim \inf} \eqsp \frac 1 n \log\norm{\tilde \Delta_n}_{2}=\underset{n\to\infty}{\lim \inf} \eqsp \frac 1 n \log\norm{\tilde \Delta_n}_{\tAv} \geq \log \rho $. The proof is then concluded by applying \eqref{eq:notation:norm} and by noting that $\rho$ is arbitrary in $(0;\rhoinf)$. 
\end{proof}

\begin{proof}[Proof of \Cref{thm:main:exact_rate}]
   In this proof, we use the notation introduced in \Cref{sec:main:proof}.  Define 
   $$ 
   \tSv\eqdef \tAv^{-1}\tBv. 
   $$
   Note that $\tSv$ is similar to the symmetric matrix $\tAv^{-1/2}\tBv\tAv^{-1/2}$ and is therefore diagonalizable. Therefore, there exists an invertible matrix $R=[R(i,j)]_{1\leq i,j \leq d} \in \rset^{d\times d}$ such that $\tSv=\tAv^{-1}\tBv=R  \tDv R^{-1}$ where $\tDv$ is a diagonal matrix. For any matrix $S \in \rset^{d\times d}$, it is convenient to use the notation $S^R=R^{-1}S R$. In particular, we have $\tSv^R=\tDv$. Moreover, for any vector $\Delta \in \rset^d$, we use the notation $\Delta^R\eqdef R^{-1} \Delta$. Write
    \begin{equation*}
        \tSn \eqdef \tAn^{-1}\tBn,
    \end{equation*}
    where $\An$ and $\Bn$ are defined in \eqref{eq:notation:an_bn} and $\tSn$ is well-defined for sufficiently large $n$ as $\tAv$ is positive-definite by \ref{hyp:main:thv}, and using \ref{hyp:main:convergence}-\ref{hyp:main:regularity}. Besides, Theorems \ref{thm:main} and \ref{thm:main:lower_bound} provide $\rhoinf, \rhosup\in[0;1)$. As the theorem is proved if $\rhoinf=0$, we now assume that $\rhoinf>0$, which implies the invertibility of $\tBv$  and hence of $\tSv$.
    Moreover, \Cref{thm:main} also yields that for all $\rho\in(\rhosup;1)$, $\theta_n-\thv=\lito(\rho^n)$.
    We deduce by the $C^2$-differentiability of $\partial_2 \mcq$ in a neighborhood of $(\thv, \thv)$ that for all $\rho\in(\rhosup;1)$,
    \begin{equation} \label{eq:thm_exact:dn_dv}
        \lr{\tSn^R}^{-1} - \tDv^{-1}=\lr{\tSn^R}^{-1} -\lr{\tSv^R}^{-1} = \lito(\rho^n).
    \end{equation}
    Following the proof of \Cref{thm:main:lower_bound},  by \eqref{eq:bdelta_adelta} there exists $n_0\in\nset$ such that for all $n\geq n_0$, $\tBn \tilde{\Delta}_n = \tAn \tilde{\Delta}_{n+1}$, that is, $\tSn \tilde{\Delta}_n = \tilde{\Delta}_{n+1}$ or equivalently $\tSn^R \tilde{\Delta}_n^R = \tilde{\Delta}_{n+1}^R$. This implies for all $m\in\nset^*$, 
    \begin{align*}
        \tilde{\Delta}_n^R &= \lr{\tSn^R}^{-1} \tilde{\Delta}_{n+1}^R = \lrb{\prod_{k=0}^{m-1} \lr{\tS_{n+k}^R}^{-1}} \tilde{\Delta}_{n+m}^R, \\
        &= \tDv^{-m}  \tilde{\Delta}_{n+m}^R + \sum_{k=0}^{m-1} \lrc{\tDv^{-m+1+k} \lrb{\lr{\tS_{n+k}^R}^{-1}-\tDv^{-1}} \prod_{l=0}^{k-1} \lr{\tS_{n+l}^R}^{-1}} \tilde{\Delta}_{n+m}^R.
    \end{align*}
    Component-wise, this yields for such $n,m\in\nset^*$ that for all $i\in\integerval{d}$,
    \begin{equation*}
        \tilde{\Delta}_n^R (i) = \tDv(i,i)^{-m} \tilde{\Delta}_{n+m}^R (i) + \sum_{k=0}^{m-1} \tDv(i,i)^{-m+1+k} L_{n,m,k}(i)^\top \tilde{\Delta}_{n+m}^R,
    \end{equation*}
    where  for any $k\in\integerval[0]{m-1}$, $L_{n,m,k}(i)^\top$ denotes the $i$-th row of the matrix 
    $$ 
    \lrb{\lr{\tS_{n+k}^R}^{-1}-\tDv^{-1}} \prod_{l=0}^{k-1} \lr{\tS_{n+l}^R}^{-1}.   
    $$
    Recalling that $\normmat{\cdot}_F$ is the Frobenius norm, we let $C_F>0$ be constant such that $\normmat{\cdot}_F \leq \normmat{\cdot}_2 C_F$ on $\rmat{d}{d}$.
    Let $i\in\integerval{d}$ such that $|\tDv(i,i)|=\rhosup$.
    Using the Cauchy-Schwarz inequality we deduce 
    \begin{align}
        |\tilde{\Delta}_n^R (i)| &\leq \rhosup^{-m} \normtxt{\tilde{\Delta}_{n+m}^R}_2 + \sum_{k=0}^{m-1} \rhosup^{-m+1+k} \normtxt{L_{n,m,k}(i)}_2 \normtxt{\tilde{\Delta}_{n+m}^R}_2, \nonumber \\
        &\leq \rhosup^{-m} \normtxt{\tilde{\Delta}_{n+m}^R}_2 + \rhosup^{-m} \sum_{k=0}^{m-1} \rhosup^{k+1} \normmat{\lrb{\lr{\tS_{n+k+1}^R}^{-1}-\tDv^{-1}} \prod_{l=0}^{k-1} \lr{\tS_{n+l}^R}^{-1}}_F \normtxt{\tilde{\Delta}_{n+m}^R}_2, \nonumber \\
        &\leq \rhosup^{-m} \lrb{1 + C_F \sum_{k=0}^{m-1} \rhosup^{k+1} \normmattxt{\lr{\tS_{n+k+1}^R}^{-1}-\tDv^{-1}}_2 \prod_{l=0}^{k-1} \normmattxt{\lr{\tS_{n+l}^R}^{-1}}_2} \normtxt{\tilde{\Delta}_{n+m}^R}_2. \label{eq:thm:_exact:randal_trick}
    \end{align}
Let $\delta>\max(\rhosup^{-1},\rhosup \rhoinf^{-1})$. Pick $\rho \in (\rhosup,1)$ and $\epsilon>0$ such that $\rho (\rhoinf^{-1}+\epsilon)<\delta$. By \eqref{eq:thm_exact:dn_dv} there exists $C>0$ and $n_1\geq n_0$ such that for all $n\geq n_1$,
\begin{equation} \label{eq:thm_exact:gap_d}
    \normmattxt{ \lr{\tSn^R}^{-1} - \tDv^{-1}}_2 \leq C \rho^n \leq \epsilon. 
\end{equation}
Then, \eqref{eq:thm:_exact:randal_trick} yields for all $n\geq n_1$ and $m\in\nset^*$, using $\rhosup^{-1}<\delta$ and $\rho (\rhoinf^{-1}+\epsilon) < \delta$, 
\begin{align}
    |\tilde{\Delta}_n^R (i)| &\leq \rhosup^{-m} \lrb{1 + CC_F \sum_{k=0}^{m-1} \rhosup^{k+1} \rho^{n+k+1} (\rhoinf^{-1}+\epsilon)^k} \normtxt{\tilde{\Delta}_{n+m}^R}_2 \nonumber\\
    &=  \lrc{\rhosup^{-m} + CC_F \rho^{n+1}\sum_{k=0}^{m-1} \rhosup^{k+1-m}  \lrb{\rho(\rhoinf^{-1}+\epsilon)}^k} \normtxt{\tilde{\Delta}_{n+m}^R}_2 \nonumber\\
    & \leq \lr{\delta^m + CC_F \rho^{n+1} \sum_{k=0}^{m-1} \delta^{m-1-k} \delta^k} \normtxt{\tilde{\Delta}_{n+m}^R}_2 \nonumber \\
    & =\lr{\delta^m + CC_F \rho^{n+1} \delta^{m-1} m} \normtxt{\tilde{\Delta}_{n+m}^R}_2. \label{eq:thm_exact:one}
\end{align}
We now show that there exists $n\geq n_1$ such that $\tilde{\Delta}_n^R (i) = R^{-1} P^T \Delta_n(i)\neq 0$. Indeed, otherwise, by \Cref{lem:decomp:base} below, there exists a basis $w_1,\ldots,w_d$ of $\Vset$ such that $\Delta_n=\theta_{n}-\thv \in \mathrm{Span}(w_j, j \in \integerval{d} \setminus \{i\})$ for any $n \geq n_1$ which contradicts the assumption $\mathrm{Span}(\Delta_n, n\geq n_1)=\Vset$ in the statement of \Cref{thm:main:exact_rate}. Therefore, we can choose $n\geq n_1$ such that the lhs of \eqref{eq:thm_exact:one} is strictly positive. This $n$ being chosen, take the log in the previous inequality and divide by $m$. Letting $m$ goes to infinity, we then obtain for any $\delta>\max(\rhosup^{-1},\rhosup \rhoinf^{-1})$, 
\begin{align*}
    \log (\delta^{-1}) &\leq \liminf_{m \to \infty} \frac 1 m \log{ \normtxt{\tilde{\Delta}_{m}^R}_2}\leq \liminf_{m \to \infty} \frac 1 m \log{ \normmattxt{R^{-1}}_2 \times \normtxt{\tilde{\Delta}_{m}}_2}\\
    &= \liminf_{m \to \infty} \frac 1 m \log{\normtxt{\tilde{\Delta}_{m}}_2}=\liminf_{m \to \infty} \frac 1 m \log{ \normtxt{\Delta_{m}}_2},     
\end{align*}
where the last equality follows from \eqref{eq:notation:norm}. The proof is completed since $\delta$ is arbitrary provided that $\delta>\max(\rhosup^{-1},\rhosup \rhoinf^{-1})$, which is equivalent to $\delta^{-1} <\min(\rhosup,\rhosup^{-1} \rhoinf)$. 
\end{proof}

\begin{lemma}
    \label{lem:decomp:base}
Let $\Vset$ be a $d$-dimensional linear subspace of $\rset^q$. Let  $v_1,\ldots,v_d \in\rset^q$ be an orthonormal basis of $\Vset$ and let $w_1,\ldots,w_d \in\rset^q$ be another basis obtained from $(v_i)_{1\leq i \leq d}$ by the change-of-basis matrix $R$, that is, for any $j \in \integerval{d}$, $w_j=\sum_{i=1}^d R(i,j) v_i$. 

Then, for any $\Delta \in \Vset$, the $i$-th component of the decomposition of $\Delta$ on the basis $(w_i)_{1\leq i \leq d}$ is $R^{-1} P^T \Delta(i)$, where $P$ is the matrix 
\begin{equation*}
    P\eqdef [v_1 | \ldots | v_d] \in \rmat{q}{d}. 
\end{equation*}
\end{lemma}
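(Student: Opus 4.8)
The plan is to reduce the statement to the elementary change-of-basis formula, the only subtlety being that $P\in\rmat{q}{d}$ is rectangular, so one must use orthonormality rather than a genuine inverse of $P$. First I would record the two facts that make the orthonormal basis $(v_i)_{1\le i\le d}$ convenient. Since the columns of $P$ are orthonormal, $P^\top P=I_d$; and since $\Delta\in\Vset$, expanding $\Delta$ in the orthonormal basis gives $\Delta=\sum_{i=1}^d (v_i^\top\Delta)\,v_i$, that is, $\Delta=PP^\top\Delta$. In particular the coordinate vector of $\Delta$ in the basis $(v_i)$ is exactly $P^\top\Delta\in\rset^d$.

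Next I would translate the defining relation $w_j=\sum_{i=1}^d R(i,j)\,v_i$ into the single matrix identity $[w_1|\cdots|w_d]=PR$. Because $(w_i)_{1\le i\le d}$ is a basis of $\Vset$, the vector $\Delta$ admits a unique decomposition $\Delta=\sum_{j=1}^d c_j\,w_j$, whose coordinate vector $c\eqdef(c_1,\ldots,c_d)^\top\in\rset^d$ is precisely what the lemma claims to identify. In matrix form this reads $\Delta=PR\,c$.

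Then I would left-multiply $\Delta=PR\,c$ by $P^\top$ and use $P^\top P=I_d$ together with $P^\top\Delta=P^\top(PR\,c)=R\,c$. Since $(w_i)$ is a basis, the change-of-basis matrix $R$ is invertible, so $c=R^{-1}P^\top\Delta$; component-wise, $c_i=(R^{-1}P^\top\Delta)(i)$, which is exactly the asserted formula.

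I do not anticipate any real obstacle: the argument is pure linear algebra. The single point deserving care is that $P$ is not square when $q>d$, so one cannot write $P^{-1}$; the orthonormality of $v_1,\ldots,v_d$ is what lets $P^\top$ act as a left inverse of $P$ on $\Vset$ (via $P^\top P=I_d$), and this is the only place where the hypothesis that the $(v_i)$ are orthonormal, rather than merely a basis, is used.
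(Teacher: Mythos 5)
Your proof is correct and takes essentially the same route as the paper's: both arguments identify the coordinates of $\Delta$ in the orthonormal basis as $P^\top\Delta$ and then apply $R^{-1}$ to pass to the $w$-basis. The only difference is presentational — the paper substitutes $v_j=\sum_{i=1}^d R^{-1}(i,j)\,w_i$ into the expansion $\Delta=\sum_{j=1}^d (v_j^\top\Delta)v_j$ and collects terms, whereas you solve the matrix equation $\Delta=PRc$ by left-multiplying with $P^\top$ and using $P^\top P=I_d$; both hinge on exactly the same two facts, orthonormality of the $v_i$ and invertibility of $R$.
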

\begin{proof}
Decomposing the vector $\Delta \in \Vset$ on the basis $(v_j)_{1\leq j \leq d}$ and using  $v_j=\sum_{i=1}^d R^{-1}(i,j) w_i$, we get
$$ 
 \Delta=\sum_{j=1}^d (v_j^T \Delta) v_j= \sum_{i=1}^d \lrb{\sum_{j=1}^d R^{-1}(i,j)(v_j^T \Delta) } w_i = \sum_{i=1}^d [R^{-1}P^T \Delta(i)]w_i
$$

\end{proof}
\subsection{Comments on \ref{hyp:main:convexity}} \label{sec:app:sm}

\begin{proof}[Proof of \Cref{thm:sm}]
    The proof amounts to building an analogous framework that satisfies \ref{hyp:main:convexity}-\ref{hyp:main:thv}. The first step is to define a suitable minimization set that is convex.
    Write $d$ the dimension of the submanifold $\Sset$ and for all $x\in\rset^q$, $R>0$, we set $\Ball{x}{R} \eqdef \set{y\in\rset^q}{\norm{x-y}_2<R}$.
    
    Under \ref{hyp:sm:theta} there exist $\Uset_1$, $\Uset_2$ two open
    neighborhoods of $\thv$ and the null-vector $\0$ in $\rset^q$, respectively, and a
    $C^2$-diffeomorphism $\psi\colon\Uset_1\rightarrow\Uset_2$ such
    that $\psi(\thv)=\0$ and
    $\psi(\Uset_1\cap\Sset)=\Uset_2\cap(\rset^d\times\{0\}^{q-d})$.
    Let $\Nset$ be an open neighborhood of $\thv$ such that $\mcq$ meets the conditions of \ref{hyp:main:regularity} on $\Nset\times\Nset$. Define $\Uset\eqdef\Uset_1\cap\Nset\cap\mathring{\Eset}$, which is an open set containing $\thv$ by \ref{hyp:sm:theta}.
    Set $r>0$ such that $\Ball{\thv}{r}\subset\Uset$, and $\epsilon>0$ such that $\Ball{\0}{\epsilon}\subset\psi(\Ball{\thv}{r/2})$. Define $\Wset\eqdef \rset^d\times\{0\}^{q-d}$ and the convex set
    \begin{equation} \label{eq:app:sm:xi}
        \Xi \eqdef \Ball{0}{\epsilon}\cap\Wset.
    \end{equation}
    Write $\phi$ the corestriction of $\psi$ to $\Xi$, that is, $\phi\colon\psi^{-1}(\Xi)\rightarrow\Xi$ such that $\phi(x)=\psi(x)$ for any $x\in \psi^{-1}(\Xi)$. Note that $\phi$ is still a $C^2$-diffeomorphism. Define then
    \begin{align} \label{eq:app:sm:r}
        \begin{split}
            \mcr \colon &\Xi \times \Xi \longrightarrow \rset\\
            & (\zeta, \zeta') \mapsto \mcr_\zeta(\zeta')\eqdef \Q{\phi^{-1}(\zeta)}{\phi^{-1}(\zeta')}.
        \end{split}
    \end{align}
    Under \ref{hyp:main:convergence} there exists $n_0\in\nset$ such that $\theta_n\in\phi^{-1}(\Xi)$ for all $n\geq n_0$, which allows to define on $\Xi$ the sequence
    \begin{equation} \label{eq:app:sm:zeta}
        \seq{\zeta_n} \eqdef \seq{\phi(\theta_{n+n_0})}.
    \end{equation}
    We deduce from the definition of $\seq{\theta_n}$ in \eqref{eq:general_framework:m} and from $\phi^{-1}(\Xi)\subset\Uset\subset\Eset$ that for all $n\in\nset$ and $\zeta\in\Xi$,
    \begin{equation} \label{eq:app:sm:mini}
        \R{\zeta_n}{\zeta_{n+1}} = \Q{\theta_{n+n_0}}{\theta_{n+n_0+1}} \leq \Q{\theta_{n+n_0}}{\phi^{-1}(\zeta)} = \R{\zeta_n}{\zeta}.
    \end{equation}
    The framework defined by (\ref{eq:app:sm:xi}-\ref{eq:app:sm:zeta}) thus fits into \eqref{eq:general_framework:m} and meets \ref{hyp:main:convexity}-\ref{hyp:main:regularity} with $(\theta_n,\thv,\mcq)$ replaced by $(\zeta_n,\0,\mcr)$. For consistency of notation, we write $\zv\eqdef\phi(\thv)=\0$. We now prove that \ref{hyp:main:thv} is satisfied with $(\thv,\mcq)$ replaced by $(\zeta_\star,\mcr)$.
    
    Denote by $J_{\phi}$ the Jacobian matrix of $\phi$. The fact that $\phi^{-1}(\Xi)\subset\Uset\subset\Nset$ allows to write for all $\theta, \theta'\in\phi^{-1}(\Xi)$,    
    \begin{equation*}
        \partial_2 \Q{\theta}{\theta'} = J_{\phi}(\theta')^\top \partial_2 \R{\phi(\theta)}{\phi(\theta')} = \sum_{i=1}^d \lrb{\partial_2 \R{\phi(\theta)}{\phi(\theta')}}_i \cdot \partial \phi_i(\theta'),
    \end{equation*}
    using that the image of $\phi$ is included in $\rset^d\times\{0\}^{q-d}$, i.e. $\phi_i \equiv 0$ for all $i\in\integerval[d+1]{q}{}$. This yields
    \begin{align}
        \dtwice{1}{2}{\Q{\thv}{\thv}} &= J_{\phi}(\thv)^\top \times \dtwice{1}{2}{\R{\zv}{\zv}} \times J_{\phi}(\thv), \label{eq:app:sm:d12} \\
        \dtwice{2}{2}{\Q{\thv}{\thv}} &= J_{\phi}(\thv)^\top \times \dtwice{2}{2}{\R{\zv}{\zv}} \times J_{\phi}(\thv) + \sum_{i=1}^d \lrb{\partial_2 \R{\zv}{\zv}}_i \cdot \partial^2 \phi_i(\thv). \label{eq:app:sm:d22}
    \end{align}
    Besides, \eqref{eq:app:sm:mini} and \Cref{lemma:main:minimizer} provide that $\zv$ is a local minimizer of the function $\zeta\mapsto\R{\zv}{\zeta}$. Together with the convexity of $\Xi$, the fact that $\zv\in\ri{\Xi}$ and that $\Aff{\Xi}=\Wset$, this implies $\partial_2 \R{\zv}{\zv}\in\Wset^\perp$.
    As $\Wset = \rset^d\times\{0\}^{q-d}$, the second term of the rhs in \eqref{eq:app:sm:d22} is thus null.
    Combining with $\Tthv = J_{\phi}(\thv)^{-1}\Wset$ by definition of the tangent space of a submanifold, we deduce from (\ref{eq:app:sm:d12}-\ref{eq:app:sm:d22}) that the rates $\rhoinf, \rhosup$ defined in (\ref{eq:main:rhosup}-\ref{eq:main:rhoinf}) for $\mcr$ are equal to the rates $\rhoinfsm, \rhosupsm$ defined in \eqref{eq:sm:rhos} for $\mcq$. Satisfying \ref{hyp:main:thv} for $\mcr$ is thus equivalent to satisfying \ref{hyp:sm:thv} for $\mcq$.  

    Therefore, we can apply Theorems \ref{thm:main} and \ref{thm:main:lower_bound} to the sequence $\seq{\zeta_n}$ and we get for any $(\rho_1,\rho_2) \in (\rhosupsm,1) \times (0,\rhoinfsm)$
\begin{equation} \label{eq:nonconvex:one}
    \zeta_n-\zv = \lito(\rho_1^n) \quad \mbox{and} \quad \rho_2^n= \lito(\norm{\zeta_n-\zv }_2).  
\end{equation}
To relate with the speed of convergence of $\theta_n -\thv$, note that for all $n\in\nset$,
    \begin{align}
        \zeta_n-\zv &= \phi(\theta_{n+n_0}) - \phi(\thv) = \psi(\theta_{n+n_0}) - \psi(\thv), \nonumber \\
        \theta_{n+n_0}-\thv &= \phi^{-1}(\zeta_n) - \phi^{-1}(\zv), \label{eq:nonconvex:two}
    \end{align}
    and that $\psi^{-1}(\Xi)\subset\Ball{\thv}{r/2}$ by \eqref{eq:app:sm:xi}. The $C^1$-differentiability of $\psi$ on $\bar{\mathbf{B}}(\thv, r/2)$ and of $\psi^{-1}$ on $\overline{\Xi}$ provides the existence of $C>0$ such that $\sup_{\theta\in\Ball{\thv}{r/2}} \normmattxt{\rmd_{\psi}(\theta)}_2 \leq C$ and $\sup_{\zeta\in\Xi} \normmattxt{\rmd_{\phi^{-1}}(\zeta)}_2 \leq C$, where $\rmd_{\psi}$ and $\rmd_{\phi^{-1}}$ denote the differentials of $\psi$ and $\phi^{-1}$, respectively. Thus $\psi$ is Lipshitz on $\Ball{\thv}{r/2}$ and $\phi^{-1}$ on $\Xi$. Combining with \eqref{eq:nonconvex:two} and \eqref{eq:nonconvex:one} concludes the proof.
\end{proof}

\subsection{Comments on \ref{hyp:main:convergence}} \label{sec:app:comments}

We first prove a general version of \Cref{prop:main:contraction}.

\begin{proposition} \label{prop:app:contraction}
    Assume that \ref{hyp:main:convexity}, \ref{hyp:suff:m_thv}, \ref{hyp:main:regularity} and \ref{hyp:main:thv} hold.
    Let $\norm{\cdot}$ be a norm on $\rset^q$. Then, for all $\rho>\rhosup$, there exists $\delta > 0$ such that for all $\theta\in\Theta$, $\theta'\in\M{\theta}$,
    \begin{equation*}
        \norm{\theta-\thv} \vee \norm{\theta'-\thv}\leq\delta \implies \normtxt{\ttheta'-\tthv}_{\tAv} \leq \rho \normtxt{\ttheta-\tthv}_{\tAv},
    \end{equation*}
where the notation $\ttheta$, $\ttheta'$ and $\tthv$ are defined in \eqref{eq:def:tilde}.
\end{proposition}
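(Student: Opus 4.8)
The plan is to reproduce the proof of \Cref{prop:main:contraction} almost verbatim, with two structural changes. First, the pair $(\theta_n,\theta_{n+1})$ is replaced by an arbitrary pair $(\theta,\theta')$ with $\theta\in\Theta$ and $\theta'\in\M{\theta}$, and every occurrence of ``for sufficiently large $n$'' becomes ``for $\delta$ small enough'', i.e.\ valid whenever $\norm{\theta-\thv}\vee\norm{\theta'-\thv}\leq\delta$. Second, the local-minimizer property of $\thv$ for $\Qtxt{\thv}{\cdot}$, which \Cref{lemma:main:minimizer} derived from \ref{hyp:main:convergence}, is now supplied directly by \ref{hyp:suff:m_thv}: since $\M{\thv}=\{\thv\}$, the point $\thv$ is the global minimizer of $\vartheta\mapsto\Qtxt{\thv}{\vartheta}$ on $\Theta$.

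First I would fix a convex neighborhood $\Nset$ of $\thv$ on which \ref{hyp:main:regularity} holds, so that $\partial_2\mcq$ is well-defined and $C^1$ on $\Nset\times\Nset$, and restrict $\delta$ so that $\theta,\theta'\in\Nset$. The first-order optimality conditions on the convex set $\Theta$ read $\pscal{\donce{\Q{\theta}{\theta'}}}{\vartheta-\theta'}\geq 0$ (from $\theta'\in\M{\theta}$) and $\pscal{\donce{\Q{\thv}{\thv}}}{\vartheta-\thv}\geq 0$ (from \ref{hyp:suff:m_thv}), both for every $\vartheta\in\Theta$. Taking $\vartheta=\thv$ in the first and $\vartheta=\theta'$ in the second gives, exactly as in \eqref{eq:main:balakrishnan},
\begin{equation*}
    \pscal{\donce{\Q{\theta}{\theta'}}-\donce{\Q{\theta}{\thv}}}{\theta'-\thv} \leq \pscal{\donce{\Q{\thv}{\thv}}-\donce{\Q{\theta}{\thv}}}{\theta'-\thv}.
\end{equation*}
Taylor's theorem with integral remainder rewrites the two sides as $(\theta'-\thv)^\top\At(\theta'-\thv)$ and $(\theta'-\thv)^\top\Bt(\theta-\thv)$, where
\begin{equation*}
    \At \eqdef \int_0^1 \dtwice{2}{2}{\Q{\theta}{s\theta'+(1-s)\thv}}\rmd s, \qquad \Bt \eqdef -\int_0^1 \dtwice{1}{2}{\Q{s\thv+(1-s)\theta}{\thv}}\rmd s,
\end{equation*}
and since $\theta-\thv,\theta'-\thv\in\Vset$ by \ref{hyp:main:convexity}, passing to tilde coordinates via \eqref{eq:notation:pscal} yields
\begin{equation*}
    (\ttheta'-\tthv)^\top\tAt(\ttheta'-\tthv) \leq (\ttheta'-\tthv)^\top\tBt(\ttheta-\tthv).
\end{equation*}

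Next I would let $(\theta,\theta')\to(\thv,\thv)$: by the continuity of $\partial_{22}\mcq$ and $\partial_{12}\mcq$ on $\Nset\times\Nset$ the integrands converge uniformly in $s$, so $\At\to\Av$ and $\Bt\to\Bv$, hence $\tAt\to\tAv$ and $\tBt\to\tBv$. The matrix $\tAt$ is symmetric by Schwarz's theorem, and $\tAv$ is positive-definite under \ref{hyp:main:thv}, so $\tAt$ is symmetric positive-definite once $\delta$ is small. Applying \Cref{lemma:tech:domination} with $A=\tAt$, $B=\tBt$, $x=\ttheta'-\tthv$, $y=\ttheta-\tthv$ gives $\normtxt{\ttheta'-\tthv}_{\tAt}\leq\rhotheta\normtxt{\ttheta-\tthv}_{\tAt}$ with $\rhotheta\eqdef\normmattxt{\tAt^{-1/2}\tBt\tAt^{-1/2}}_2$, and \Cref{lemma:tech:rho_approx,lemma:tech:rho_a_b} give $\rhotheta\to\rhosup$. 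Finally, as at the end of \Cref{prop:main:contraction}, I would fix $\rho>\rhosup$, set $\rho'\eqdef(\rho+\rhosup)/2$ and $\epsilon>0$ with $(1+\epsilon)\rho'\leq(1-\epsilon)\rho$, use \Cref{lemma:tech:norm_approx} to compare $\normtxt{\cdot}_{\tAt}$ and $\normtxt{\cdot}_{\tAv}$ up to factors $1\pm\epsilon$, and shrink $\delta$ so that $\rhotheta\leq\rho'$; chaining the three estimates produces $\normtxt{\ttheta'-\tthv}_{\tAv}\leq\rho\normtxt{\ttheta-\tthv}_{\tAv}$.

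The only real difficulty relative to \Cref{prop:main:contraction} is the bookkeeping of quantifiers: the statements that were previously asymptotic along the sequence (convergence of $\At,\Bt,\rhotheta$, positive-definiteness of $\tAt$, and the two-sided norm comparison) must now hold uniformly on a single ball of radius $\delta$, and I must verify that one $\delta$ makes all of them simultaneously valid. This is guaranteed by the joint continuity granted by \ref{hyp:main:regularity}, so no genuinely new obstacle arises.
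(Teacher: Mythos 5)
Your proposal is correct and follows essentially the same route as the paper's own proof: replacing \Cref{lemma:main:minimizer} by the global-minimizer property from \ref{hyp:suff:m_thv}, deriving the analogue of \eqref{eq:main:balakrishnan} from the two first-order optimality conditions on the convex set $\Theta$, and then chaining \Cref{lemma:tech:domination}, \Cref{lemma:tech:rho_approx} with \Cref{lemma:tech:rho_a_b}, and \Cref{lemma:tech:norm_approx} with the same choice of $\rho'$ and $\epsilon$. The quantifier bookkeeping you flag at the end is exactly what the paper handles by taking nested radii $\delta_0>\delta_1>\delta_2>\delta_3$, so no gap remains.
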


\begin{proof}
    In this proof, we use the notation introduced in \Cref{sec:main:proof}. For all $\delta>0$, write $\Ball{\thv}{\delta} \eqdef \settxt{\theta\in\rset^q}{\norm{\theta-\thv}<\delta}$.
    Under \ref{hyp:main:regularity} there exists $\delta_0 > 0$ such that $\partial_2 \mcq$ is well-defined and $C^1$-differentiable on $\Ball{\thv}{\delta_0} \times \Ball{\thv}{\delta_0}$.
    By \ref{hyp:main:convexity}, \ref{hyp:suff:m_thv} and \ref{hyp:main:regularity}, we can prove, similarly to \eqref{eq:main:balakrishnan} in the proof of \Cref{prop:main:contraction}, that for all $\theta, \theta'\in \Ball{\thv}{\delta_0}$ with $\theta'\in\M{\theta}$,
\begin{equation*} 
        \pscal{\donce{\Q{\theta}{\theta'}} - \donce{\Q{\theta}{\thv}}}{\theta' - \thv} \leq
        \pscal{\donce{\Q{\thv}{\thv}} - \donce{\Q{\theta}{\thv}}}{\theta' - \thv},
    \end{equation*}
which yields
    \begin{equation} \label{eq:suff:inequality_bala}
        (\theta' - \thv)^\top \At (\theta' - \thv) \leq (\theta' - \thv)^\top \Bt (\theta - \thv),
    \end{equation}
    where
    \begin{equation*}
        \At \eqdef\int_0^1 \dtwice{2}{2}{\Q{\theta}{s\theta'+(1-s)\thv}} \rmd s, \quad \Bt \eqdef -\int_0^1 \dtwice{1}{2}{\Q{s\thv+(1-s)\theta}{\thv}} \rmd s.
    \end{equation*}
    Under \ref{hyp:main:regularity}-\ref{hyp:main:thv} there also exists $\delta_1\in(0;\delta_0)$ such that if $\theta, \theta' \in \Ball{\thv}{\delta_1}$, then the symmetric matrix $\tAt$ is positive-definite.
    Using \eqref{eq:notation:pscal} and applying \Cref{lemma:tech:domination} to \eqref{eq:suff:inequality_bala} then provides
    \begin{equation} \label{eq:suff:domi_theta}
        \normtxt{\ttheta'-\tthv}_{\tAt} \leq \rhotheta \normtxt{\ttheta-\tthv}_{\tAt},
    \end{equation}
    where $\rhotheta \eqdef \normmattxt{\tAt^{-1/2} \tBt \tAt^{-1/2}}_2$.
    
    Let $\rho > \rhosup$. Set $\rho' \eqdef (\rho+\rhosup)/2$ and $\epsilon > 0$ such that $(1+\epsilon) \rho' \leq (1-\epsilon) \rho$.
    By \ref{hyp:main:regularity}-\ref{hyp:main:thv} and \Cref{lemma:tech:norm_approx} there exists $\delta_2 \in (0;\delta_1)$ such that for all $\theta, \theta' \in \Ball{\thv}{\delta_2}$, for all $u\in\rset^d$,
    \begin{equation*}
        (1-\epsilon) \norm{u}_{\tAt} \leq \norm{u}_{\tAv} \leq (1+\epsilon) \norm{u}_{\tAt}.
    \end{equation*}
    Moreover, by Lemmas~\ref{lemma:tech:rho_approx} and \ref{lemma:tech:rho_a_b}, under \ref{hyp:main:regularity} there exists $\delta_3 \in (0;\delta_2)$ such that $\rhotheta \leq \rho'$ for all $\theta, \theta' \in \Ball{\thv}{\delta_3}$.
    Combining with \eqref{eq:suff:domi_theta} yields that for all $\theta, \theta'\in \Ball{\thv}{\delta_3}$ with $\theta'\in\M{\theta}$,
    \begin{align*}
        \normtxt{\ttheta'-\tthv}_{\tAv} \leq (1+\epsilon) \normtxt{\ttheta'-\tthv}_{\tAt} &\leq (1+\epsilon) \rhotheta \normtxt{\ttheta-\tthv}_{\tAt} \\
        &\leq \frac{(1+\epsilon) \rho'}{1-\epsilon}  \normtxt{\ttheta-\tthv}_{\tAt} \leq \rho \normtxt{\ttheta-\tthv}_{\tAv}. 
    \end{align*}
\end{proof}

\begin{proof}[Proof of \Cref{thm:suff:convergence}]
    Applying \Cref{prop:app:contraction} to $\rho \eqdef (1+\rhosup)/2$ and the norm $\norm{\cdot}_2$ provides the existence of $\delta_0 > 0$ such that for all $\theta\in\Theta$, $\theta'\in\M{\theta}$,
    \begin{equation} \label{eq:suff:conv:domination}
        \normtxt{\theta-\thv}_2 \vee \normtxt{\theta'-\thv}_2 \leq \delta_0 \implies \normtxt{\ttheta'-\tthv}_{\tAv} \leq \rho \normtxt{\ttheta-\tthv}_{\tAv}.
    \end{equation}
    Moreover, by \Cref{lemma:tech:continuity_mapping_set} under \ref{hyp:suff:compacity}-\ref{hyp:suff:continuity} and by \ref{hyp:suff:m_thv}, there exists $\delta_1\in(0;\delta_0)$ such that for all $\theta\in\Theta$, $\theta'\in\M{\theta}$,
    \begin{equation*}
        \normtxt{\theta-\thv}_2 \leq \delta_1 \implies \normtxt{\theta'-\thv}_2 \leq \delta_0.
    \end{equation*}
    By \eqref{eq:notation:norm} and the equivalence of norms in finite dimension, combining with \eqref{eq:suff:conv:domination} yields the existence of $\delta>0$ such that for all $\theta\in\Theta$, $\theta'\in\M{\theta}$,
    \begin{equation*}
        \normtxt{\ttheta-\tthv}_{\tAv} \leq \delta \implies \normtxt{\ttheta'-\tthv}_{\tAv} \leq \rho \normtxt{\ttheta-\tthv}_{\tAv}.
    \end{equation*}
    Besides, by \ref{hyp:suff:accu_point} there exists $n_0 \in \nset$ such that $\normtxt{\ttheta_{n_0} - \tthv}_{\tAv} \leq \delta$.
    Using that $\rho < 1$ by \ref{hyp:main:thv}, we deduce by induction that for all $n \geq n_0$, $\normtxt{\ttheta_n - \tthv}_{\tAv} \leq \delta$, and that
    \begin{equation*}
        \normtxt{\ttheta_{n+1}-\tthv}_{\tAv} \leq \rho \normtxt{\ttheta_n-\tthv}_{\tAv},
    \end{equation*}
    which concludes the proof.
\end{proof}

\begin{lemma} \label{lemma:suff:m_equality}
    Assume that $\partial^2 \mcq$ is well-defined and differentiable on $\Theta$, and that for all $\theta, \theta'\in\Theta$, for all $v\in\Vset$, $v^\top \dtwice{1}{2}{\Q{\theta}{\theta'}}v < 0$.
    Then, for all $\theta, \theta' \in\Theta$,
    \begin{equation*}
        \M{\theta}\cap\M{\theta'}\cap\ri{\Theta}\neq\emptyset \implies \theta=\theta'.
    \end{equation*}
\end{lemma}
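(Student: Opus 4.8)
The plan is to read off first-order optimality at the common minimizer, convert the resulting orthogonality into a quadratic form built from the cross second derivative $\dtwice{1}{2}{\mcq}$, and then invoke the strict sign hypothesis to force $\theta=\theta'$.

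First I would fix a point $\xi\in\M{\theta}\cap\M{\theta'}\cap\ri{\Theta}$ and exploit that it is a minimizer lying in the relative interior. Since $\xi$ minimizes $\zeta\mapsto\Q{\theta}{\zeta}$ over $\Theta$ and $\xi\in\ri{\Theta}$, for every $v\in\Vset$ the point $\xi+tv$ stays in $\Theta$ for $|t|$ small, so $t\mapsto\Q{\theta}{\xi+tv}$ has an interior local minimum at $t=0$. Differentiating (which is legitimate since $\partial_2\mcq$ exists by assumption) gives $\pscal{\partial_2\Q{\theta}{\xi}}{v}=0$; as $v\in\Vset$ is arbitrary, $\partial_2\Q{\theta}{\xi}\in\Vset^\perp$, and symmetrically $\partial_2\Q{\theta'}{\xi}\in\Vset^\perp$. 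Hence the difference $\partial_2\Q{\theta}{\xi}-\partial_2\Q{\theta'}{\xi}$ also belongs to $\Vset^\perp$. Using a relative-interior point is essential here: it yields an equality (orthogonality) rather than the mere variational inequality one would get on the relative boundary.

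Next I would represent that difference through the cross derivative, exactly as in the proof of \Cref{prop:main:contraction} (compare \eqref{eq:main:d2_bn}). Setting $\theta_s\eqdef s\theta+(1-s)\theta'$ and applying the fundamental theorem of calculus to $s\mapsto\partial_2\Q{\theta_s}{\xi}$ yields
$$\partial_2\Q{\theta}{\xi}-\partial_2\Q{\theta'}{\xi}=\left(\int_0^1\dtwice{1}{2}{\Q{\theta_s}{\xi}}\,\rmd s\right)(\theta-\theta').$$
Pairing both sides with $\theta-\theta'\in\Vset$ and using that the left-hand side lies in $\Vset^\perp$, the inner product vanishes, so
$$\int_0^1(\theta-\theta')^\top\dtwice{1}{2}{\Q{\theta_s}{\xi}}(\theta-\theta')\,\rmd s=0.$$

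I would then finish by contradiction: if $\theta\neq\theta'$ then $v\eqdef\theta-\theta'$ is a nonzero vector of $\Vset$, and the hypothesis gives $v^\top\dtwice{1}{2}{\Q{\theta_s}{\xi}}v<0$ at every $s$, so the continuous, strictly negative integrand would make the integral strictly negative, contradicting the display above; hence $\theta=\theta'$. The one step that requires genuine care is making sure the evaluation points $\theta_s$ at which the negative-definiteness hypothesis is invoked actually lie in $\Theta$, so that the hypothesis applies to them: this is guaranteed by the convexity of $\Theta$ in \ref{hyp:main:convexity}, which places the whole segment from $\theta'$ to $\theta$ inside $\Theta$. This, together with correctly deriving the orthogonality (rather than a one-sided inequality) from $\xi\in\ri{\Theta}$, is where the argument must be watched most closely.
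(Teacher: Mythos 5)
Your proposal is correct and follows essentially the same route as the paper's own proof: there, the common minimizer $\theta''\in\M{\theta}\cap\M{\theta'}\cap\ri{\Theta}$ is used with the interior first-order conditions and the fundamental theorem of calculus along the segment (the argument borrowed from the proof of \Cref{thm:main:lower_bound}) to obtain $\tBt(\ttheta-\ttheta')=0$ with $\tBt=-\int_0^1 \dtwice{1}{2}{\tQ{s \theta'+(1-s) \theta}{\theta''}} \rmd s$, and the sign hypothesis (which, as you rightly note, is invoked at the segment points and thus implicitly uses convexity of $\Theta$) then concludes. The only cosmetic difference is the final step: the paper deduces invertibility of $\tBt$ from the positivity of $v^\top \Bt v$ on $\Vset$, whereas you test the identity directly against $v=\theta-\theta'$ and reach a sign contradiction, which is the same argument in scalar form.
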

\begin{proof}
    Let $\theta''\in\M{\theta}\cap\M{\theta'}\cap\ri{\Theta}$. We can prove as for \Cref{thm:main:lower_bound} that it implies $\tBt(\ttheta-\ttheta') = \tAt (\ttheta'' - \ttheta'')=0$, where
    \begin{equation*}
        \tAt=\dtwice{2}{2}{\tQ{\theta}{\theta''}} \quad \mbox{and} \quad \tBt=-\int_0^1 \dtwice{1}{2}{\tQ{s \theta'+(1-s) \theta}{\theta''}} \rmd s.
    \end{equation*}
    Besides, by assumption, for all $v\in\Vset$, $v^\top \Bt v = -\int_0^1 v^\top \dtwice{1}{2}{\Q{s \theta'+(1-s) \theta}{\theta''}} v \rmd s > 0$.
    This provides the invertibilty of $\tBt$ and thus $\ttheta-\ttheta' = 0$, which concludes the proof by \eqref{eq:notation:norm}.
\end{proof}

\begin{proof}[Proof of \Cref{prop:suff:mp}]
    It is proved in \cite[Theorem 4.4, p.305]{bubeck} that under assumptions \hyperlink{hyp:ex:mp:cesaro}{(ii)} and \hyperlink{hyp:ex:mp:eta}{(iii)}, for all $\theta\in\Cset\cap\Dset$ and $n\geq 1$,
    \begin{equation*}
        \frac 1 n \sum_{i=0}^{n-1} f(\zeta_i) - f(\theta) \leq \frac 1 n \frac 1 {\eta} D_{\Phi}(\theta, \zeta_0).
    \end{equation*}
    We deduce by applying \Cref{lemma:tech:min_f_compact} under \hyperlink{hyp:ex:mp:mini}{(iv)} and the compacity of $\Cset$ that there exists $\varphi \colon \nset \rightarrow \nset$ strictly increasing such that $\seq{\zeta_{\varphi(n)}}$ converges to $\thv$.
    By \eqref{eq:ex:mp:zeta} this is equivalent to $\seq[n\in\nset^*]{\Mtxt{\theta_{\varphi(n)-1}}}$ converging to $\thv$.
    Besides, \hyperlink{hyp:ex:mp:mini}{(iv)} and the differentiability of $f$ provide $\M{\thv}=\{\thv\}$ (see \Cref{ex:suff:md} in page \pageref{ex:suff:md}), and by \Cref{lemma:tech:continuity_mapping} under \ref{hyp:suff:compacity}-\ref{hyp:suff:continuity} the function $\mcm$ is continuous on $\Theta$.
    We deduce that all accumulation points $\ell$ of the sequence $\seq[n\in\nset^*]{\theta_{\varphi(n)-1}}$ verify $\Mtxt{\ell} = \thv = \M{\thv}$.
    By \Cref{lemma:suff:m_equality} under \hyperlink{hyp:ex:mp:diff_2}{(i)}, \hyperlink{hyp:ex:mp:cesaro}{(ii)}, \hyperlink{hyp:ex:mp:eta}{(iii)} and \hyperlink{hyp:ex:mp:ri}{(v)} (using (\ref{eq:ex:main:md:q22}-\ref{eq:ex:main:md:q12})), this yields $\ell=\thv$ for all accumulation points, and thus the convergence of $\seq[n\in\nset^*]{\theta_{\varphi(n)-1}}$ to $\thv$ by the compacity of $\Cset$.
    
    Using that $\M{\thv}=\{\thv\}$, we can prove as in \Cref{ex:suff:md}, page \pageref{ex:suff:md}, that $\Mtxt[m]{\thv}=\{\thv\}$, where $\mcm^m$ is the minimization mapping corresponding to mirror prox (see \eqref{eq:ex:mp_q}).
\end{proof}

\begin{proof}[Proof of \Cref{thm:suff:m_thv}]
    Under \ref{hyp:suff:accu_point} there exists $\psi \colon \nset \rightarrow \nset$ strictly increasing such that $\seq{\theta_{\psi(n)}}$ converges to $\thv$.
    By the compacity of $\Theta \times \Theta$ under \ref{hyp:suff:compacity} there also exist $\thvv \in \Theta$ and $\varphi \colon \nset \rightarrow \nset$ strictly increasing such that
    \begin{equation} \label{eq:suff:subseq_couple}
        (\theta_{\varphi(n)}, \theta_{\varphi(n)+1}) \underset{n\to\infty}{\longrightarrow} (\thv, \thvv).
    \end{equation}
    By the monotonicity of the sequence $\seq{\vartheta(\theta_n)}$ under \ref{hyp:suff:lyapunov}, for all $n \in \nset$, $\vartheta(\theta_{\varphi(n+1)}) \leq \vartheta(\theta_{\varphi(n)+1}) \leq \vartheta(\theta_{\varphi(n)})$.
    Together with \eqref{eq:suff:subseq_couple} and the continuity of $\vartheta$ this yields
    \begin{equation*}
        \vartheta(\thvv) = \vartheta(\thv).
    \end{equation*}
    Besides, by the definition of $\seq{\theta_n}$, for all $\theta\in\Theta$,
$$
\mcq_{\theta_{\varphi(n)}}(\theta_{\varphi(n) + 1}) \leq \mcq_{\theta_{\varphi(n)}}(\theta). 
$$ 
Using the continuity of $\mcq$ under \ref{hyp:suff:continuity}, this yields $\thvv \in \Mtxt{\thv}$. We deduce under \ref{hyp:suff:lyapunov} that $\thv = \thvv$, and hence $\M{\thv}=\{\thv\}$ under \ref{hyp:suff:singleton}.
\end{proof}

\subsection{Comments on \ref{hyp:main:thv}} \label{sec:app:thm:main}

\begin{proof}[Proof for \Cref{ex:main:pop_em} (Population EM)]
By  \eqref{eq:general_framework:pop_em}, for all $\theta, \theta' \in \Theta$,
$$
\Q[\pop]{\theta}{\theta'} = - \int_{\Yset} \int_{\Xset} p_{\theta}(x | y) \log p_{\theta'}(x, y) p_{\thv}(y) \mu(\rmd x) \mu(\rmd y),
$$
which yields,  in a neighborhood of $(\thv, \thv)$,
$$
\dtwice{2}{2}{\Q[\pop]{\theta}{\theta'}} = - \int_{\Yset} \int_{\Xset} p_{\theta}(x | y) p_{\thv}(y) \partial^2 \log p_{\theta'}(x, y) \mu(\rmd x) \mu(\rmd y).
$$
On the other hand, using that for all $\theta, \theta' \in \Theta$,
$$
        \Q[\pop]{\theta}{\theta'} = - \int_{\Yset} \int_{\Xset} p_{\theta}(x | y) \log p_{\theta'}(x| y) p_{\thv}(y) \mu(\rmd x) \mu(\rmd y) - \int_{\Yset} \log p_{\theta'}(y) p_{\thv}(y) \mu(\rmd y),
$$
    we deduce that in a neighborhood of $(\thv, \thv)$,
    \begin{align*}
        \dtwice{1}{2}{\Q[\pop]{\theta}{\theta'}} &= - \int_{\Yset} \int_{\Xset} p_{\thv}(y) \partial \log p_{\theta'}(x| y) \lrb{\partial p_{\theta}(x | y)}^\top \mu(\rmd x) \mu(\rmd y) \\
        &= - \int_{\Yset} \int_{\Xset} p_{\thv}(y) p_{\theta}(x | y) \partial \log p_{\theta'}(x| y) \lrb{\partial \log p_{\theta}(x | y)}^\top \mu(\rmd x) \mu(\rmd y).
    \end{align*}
    Using the chain rule for Fisher information matrices \cite{zamir,zegers},
    \begin{align}
        \dtwice{2}{2}{\Q[\pop]{\thv}{\thv}} &= -\int_{\Yset} \int_{\Xset} p_{\thv}(x,y) \partial^2 \log p_{\thv}(x, y)  \mu(\rmd x) \mu(\rmd y) = I_{X,Y}(\thv), \label{eq:ex:main:dtwice22:pop}  \\
        \dtwice{1}{2}{\Q[\pop]{\thv}{\thv}} &= -I_{X|Y}(\thv) = -\lr{I_{X,Y}(\thv) - I_Y(\thv)}. \label{eq:ex:main:dtwice12:pop}
    \end{align}
\end{proof}

\begin{proof}[Proofs for \Cref{ex:main:sample_em} (Sample EM)]
    Similarly to the proof of \Cref{ex:main:pop_em}, we deduce from \eqref{eq:general_framework:sample_em} that for all $\theta, \theta'$ in a neighborhood of $\thv$,
    \begin{align*}
        \dtwice{2}{2}{\Q[\samp]{\theta}{\theta'}} &= - \frac 1 k \sum_{i=1}^k \int_{\Xset} p_{\theta}(x | Y_i) \partial^2 \log p_{\theta'}(x, Y_i) \mu(\rmd x), \\
        \dtwice{1}{2}{\Q[\samp]{\theta}{\theta'}} &= - \frac 1 k \sum_{i=1}^k \int_{\Xset} p_{\theta}(x | Y_i) \partial \log p_{\theta'}(x| Y_i) \lrb{\partial \log p_{\theta}(x | Y_i)}^\top \mu(\rmd x),
    \end{align*}
    and therefore,
    \begin{align}
        \dtwice{2}{2}{\Q[\samp]{\thv}{\thv}} &= - \frac 1 k \sum_{i=1}^k \int_{\Xset} p_{\thv}(x | Y_i) \partial^2 \log p_{\thv}(x, Y_i) \mu(\rmd x), \label{eq:ex:main:dtwice22:sample} \\
        \dtwice{1}{2}{\Q[\samp]{\thv}{\thv}} &= \frac 1 k \sum_{i=1}^k \int_{\Xset} p_{\thv}(x | Y_i) \partial^2 \log p_{\thv}(x| Y_i) \mu(\rmd x). \label{eq:ex:main:dtwice12:sample} 
    \end{align}
\begin{lemma}
\label{lem:rhopop_conv}
Assume that $\thv$ is the true parameter of the model, that for all $x,y \in \Xset, \Yset$, the functions $\theta \mapsto p_{\theta}(x|y)$ and $\theta \mapsto p_{\theta}(y)$ are twice differentiable in a neighborhood of $\thv$, and that conditions similar to \cite[Assumption AD.1, p.492]{randal_book} hold to differentiate under the integral sign.  Then, if the corresponding population EM meets \ref{hyp:main:thv}, almost surely the sample EM meets \ref{hyp:main:thv} for sufficiently large $k$ and
\begin{equation*}
        \rhosup^\samp(\Yk) \overset{a.s.}{\longrightarrow} \rhosup^\pop.
    \end{equation*}
Furthermore, if $\partial^2 \log p_\thv(X_1, Y_1), \partial^2 \log p_{\thv}(Y_1) \in \rmL^2(\rset^{q\times q})$, then for all $\delta\in(0;1)$ there exists $C_\delta>0$ such that
    \begin{equation*}
        \underset{k\rightarrow\infty}{\lim \inf} \eqsp \PP\lr{\lrav{\rhosup^\samp(Y_{1:k}) - \rhosup^\pop} \leq \frac{C_\delta} {\sqrt{k}}} \geq 1-\delta.
    \end{equation*}
\end{lemma}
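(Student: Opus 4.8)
The plan is to reduce every assertion to the behaviour of the two empirical averages $\Av^\samp(\Yk)=\frac1k\sum_{i=1}^k g_A(Y_i)$ and $\Bv^\samp(\Yk)=\frac1k\sum_{i=1}^k g_B(Y_i)$, where, reading off \eqref{eq:ex:main:dtwice22:sample}--\eqref{eq:ex:main:dtwice12:sample}, the matrix-valued functions are $g_B(y)\eqdef I_{X|Y=y}(\thv)$ and $g_A(y)\eqdef I_{X|Y=y}(\thv)-\partial^2\log p_\thv(y)$. First I would identify the means. Splitting $\log p_\thv(x,y)=\log p_\thv(x|y)+\log p_\thv(y)$ and using the conditional information identity (valid under the stated regularity conditions for differentiating under the integral sign) gives $g_A(y)=-\PE_\thv[\partial^2\log p_\thv(X,Y)\mid Y=y]$, so the tower property yields $\PE_\thv[g_A(Y_1)]=-\PE_\thv[\partial^2\log p_\thv(X,Y)]=I_{X,Y}(\thv)=\Av^\pop$, while $\PE_\thv[g_B(Y_1)]=I_{X|Y}(\thv)=I_{X,Y}(\thv)-I_Y(\thv)=\Bv^\pop$ by the chain rule for Fisher information \cite{zamir,zegers}. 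The strong law of large numbers then gives $\Av^\samp(\Yk)\to\Av^\pop$ and $\Bv^\samp(\Yk)\to\Bv^\pop$ almost surely, each $\Bv^\samp(\Yk)$ being symmetric as an average of the symmetric matrices $I_{X|Y=Y_i}(\thv)$.

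The second ingredient is to view $\rhosup$ as a function of the pair $(\Av,\Bv)$. By \Cref{lemma:tech:rho_a_b}, whenever $\tAv\succ0$ one has $\rhosup=\normmattxt{\tAv^{-1/2}\tBv\tAv^{-1/2}}_2$, with $\tAv=P^\top\Av P$, $\tBv=P^\top\Bv P$ and $P$ as in \eqref{eq:def:P}. Since $(\Av,\Bv)\mapsto(\tAv,\tBv)$ is linear, matrix inversion and the matrix square root are smooth on the open cone of positive-definite matrices, and the spectral norm is $1$-Lipschitz, the map $G\colon(\Av,\Bv)\mapsto\rhosup$ is continuous, and in fact locally Lipschitz, on the open set $\{\tAv\succ0\}$. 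Under the hypothesis that the population EM meets \ref{hyp:main:thv} we have $\tAv^\pop\succ0$ and $\rhosup^\pop<1$ (the domination is equivalent to $\rhosup^\pop\in[0;1)$ when $\tAv^\pop\succ0$). As positive-definiteness is an open condition and $(\Av^\samp,\Bv^\samp)\to(\Av^\pop,\Bv^\pop)$ almost surely, a.s. for all large $k$ we have $\tAv^\samp\succ0$, so $\rhosup^\samp(\Yk)$ is well-defined and $\rhosup^\samp(\Yk)=G(\Av^\samp,\Bv^\samp)\to G(\Av^\pop,\Bv^\pop)=\rhosup^\pop<1$; with $\Bv^\samp$ symmetric this is exactly \ref{hyp:main:thv} for the sample EM, and the continuity of $G$ simultaneously delivers the almost-sure convergence $\rhosup^\samp(\Yk)\to\rhosup^\pop$.

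For the $1/\sqrt k$ bound I would first secure square-integrability of the summands. The identity $g_A(Y_1)=-\PE_\thv[\partial^2\log p_\thv(X_1,Y_1)\mid Y_1]$ together with the $\rmL^2$-contraction property of conditional expectation gives $\|g_A(Y_1)\|_{\rmL^2}\leq\|\partial^2\log p_\thv(X_1,Y_1)\|_{\rmL^2}<\infty$, and then $g_B(Y_1)=g_A(Y_1)+\partial^2\log p_\thv(Y_1)\in\rmL^2$ as well. The multivariate central limit theorem applied to the i.i.d. vectors $(g_A(Y_i),g_B(Y_i))$ shows that $\sqrt k\,\bigl((\Av^\samp,\Bv^\samp)-(\Av^\pop,\Bv^\pop)\bigr)$ converges in distribution, so by the continuous mapping theorem the sequence $\sqrt k\,\|(\Av^\samp-\Av^\pop,\Bv^\samp-\Bv^\pop)\|$ is tight: for every $\delta$ there is $C_\delta'$ with $\liminf_{k\to\infty}\PP(\sqrt k\,\|(\Av^\samp-\Av^\pop,\Bv^\samp-\Bv^\pop)\|\leq C_\delta')\geq1-\delta/2$. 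Finally I would transfer this through the local Lipschitz constant $L$ of $G$ near $(\Av^\pop,\Bv^\pop)$: on the event $E_k$ that $(\Av^\samp,\Bv^\samp)$ lies in the Lipschitz neighbourhood, an event with $\PP(\compl{E_k})\to0$, one has $|\rhosup^\samp-\rhosup^\pop|\leq L\,\|(\Av^\samp-\Av^\pop,\Bv^\samp-\Bv^\pop)\|$; setting $C_\delta\eqdef LC_\delta'$ and bounding $\PP(E_k\cap\{\sqrt k\,\|\cdots\|\leq C_\delta'\})\geq\PP(\sqrt k\,\|\cdots\|\leq C_\delta')-\PP(\compl{E_k})$ yields $\liminf_{k\to\infty}\PP(|\rhosup^\samp-\rhosup^\pop|\leq C_\delta/\sqrt k)\geq1-\delta$.

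The hard part will be the passage from the matrix-valued CLT to the scalar rate for $\rhosup$: everything hinges on staying away from the locus $\{\det\tAv=0\}$, where $G$ ceases to be Lipschitz, which is guaranteed only by $\tAv^\pop\succ0$, and on correctly combining the tightness coming from the CLT with the vanishing probability of the bad event $\compl{E_k}$. The integrability reduction for $g_A$ via the conditional-expectation identity is the other delicate point, since the raw definition of $I_{X|Y=y}(\thv)$ is not obviously controlled by the stated $\rmL^2$ hypotheses until it is rewritten as a conditional expectation of $\partial^2\log p_\thv(X,Y)$.
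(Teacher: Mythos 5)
Your proposal is correct and follows essentially the same route as the paper's proof: a strong law of large numbers for the i.i.d.\ matrix summands plus continuity of $(\Av,\Bv)\mapsto\rhosup$ on the open set $\{\tAv\succ 0\}$ (the content of Lemmas~\ref{lemma:tech:rho_a_b} and~\ref{lemma:tech:rho_approx}) yields the almost-sure statements, Jensen's inequality gives the $\rmL^2$ integrability of the conditional-expectation summands, and a CLT-based $1/\sqrt{k}$ bound on the matrix deviations is transferred to $\rhosup$ through a local Lipschitz estimate. The only cosmetic differences are that the paper applies the scalar CLT entrywise with Gaussian quantiles and a union bound over the $q^2$ components (controlling the Frobenius norm) where you use a multivariate CLT plus tightness, and that it invokes its Lemma~\ref{lemma:tech:rho_approx} rather than re-deriving the local Lipschitz property from smoothness of the matrix inverse square root.
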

\begin{proof}
    As $\rhosup^\samp\lr{\Yk}$ is not necessarily well-defined in \eqref{eq:main:rhosup}, using \Cref{lemma:tech:rho_a_b} we consider the following definition:
    \begin{equation} \label{eq:ex:main:rho_s}
        \rhosup\lr{\Yk} \eqdef \indic{\tAv\lr{\Yk} \succ 0} \normmat{\tAv\lr{\Yk}^{-1/2} \tBv \lr{\Yk} \tAv\lr{\Yk}^{-1/2}}_2.
    \end{equation}
    Note first that $\rhosup$ is a  measurable function of $\Yk$. Besides, we deduce from the assumptions that the following random variables are integrable:
    \begin{equation*}
        Z_1 \eqdef \int_{\Xset} p_{\thv}(x | Y_1) \partial^2 \log p_{\thv}(x|Y_1) \mu(\rmd x), \quad W_1 \eqdef \int_{\Xset} p_{\thv}(x | Y_1) \partial^2 \log p_{\thv}(x, Y_1) \mu(\rmd x).
    \end{equation*}
    Together with (\ref{eq:ex:main:dtwice22:pop}-\ref{eq:ex:main:dtwice12:pop}) and (\ref{eq:ex:main:dtwice22:sample}-\ref{eq:ex:main:dtwice12:sample}), the strong law of large numbers provides
    \begin{equation} \label{eq:ex:main:as_q}
        \lr{\dtwice{2}{2}{\Qtxt[\samp]{\thv}{\thv}}, \dtwice{1}{2}{\Qtxt[\samp]{\thv}{\thv}}} \overset{a.s.}{\longrightarrow} \lr{\dtwice{2}{2}{\Qtxt[\pop]{\thv}{\thv}}, \dtwice{1}{2}{\Qtxt[\pop]{\thv}{\thv}}}.
    \end{equation}
    By the continuity of the functions used in \eqref{eq:ex:main:rho_s}, this yields that if the corresponding population EM meets \ref{hyp:main:thv}, then, almost surely, the sample EM meets \ref{hyp:main:thv} for sufficiently large $k$, and
    \begin{equation*}
        \rhosup^\samp(\Yk) \overset{a.s.}{\longrightarrow} \rhosup^\pop.
    \end{equation*}

Assume now that $\partial^2 \log p_\thv(X_1, Y_1), \partial^2 \log p_{\thv}(Y_1) \in \rmL^2(\rset^{q\times q})$.
    First, using Jensen's inequality with $\norm{\cdot}_2^2$ provides $W_1\in\rmL^{2}(\rset^{q\times q})$, and thus $Z_1 = W_1 - \partial^2 \log p_\thv(Y_1) \in \rmL^{2}(\rset^{q\times q})$.
    Let $\delta \in (0;1)$ and write $\bar Z_{k} = \sum_{i=1}^k Z_i/k$. Set $\delta'\in(0;1)$ such that $4q^2\delta' \leq \delta$ and write $x(\delta')$ the quantile of order $1-\delta'$ of the standard Gaussian  distribution. Applying the central limit theorem to each component of $Z_1$ provides for all $i,j\in\integerval{q}$ and $k\in\nset^*$,
    \begin{equation*}
        \PP\lr{ \lrav{\bar Z_k(i,j)-\mu_{ij})} \geq \frac {x(\delta') \sigma}{\sqrt{k}} } \leq \PP\lr{ \lrav{\bar Z_k(i,j)-\mu_{ij}} \geq \frac {x(\delta')\sigma_{ij}}{\sqrt{k}} } \underset{k\to\infty}{\longrightarrow} 2\delta',
    \end{equation*}
    where $\mu_{ij}\eqdef \PE[Z_1(i,j)]$, $\sigma_{ij}^2\eqdef \mathrm{Var}\lrb{Z_1(i,j)}$ and $\sigma^2 \eqdef \PE[\normmat{Z_1}_F^2] \vee \PE[\normmat{Z_2}_F^2]$. This yields
    \begin{equation*}
        \underset{k\to\infty}{\lim \sup} \eqsp \PP\lr{ \normmat{\dtwice{1}{2}{\Qtxt[\samp]{\thv}{\thv}} - \dtwice{1}{2}{\Qtxt[\pop]{\thv}{\thv}}}_F \geq \frac {qx(\delta')\sigma}{\sqrt{k}} } \leq 2q^2 \delta'.
    \end{equation*}
    Similarly, the same inequality holds for $\dtwice{2}{2}{\Qtxt[\samp]{\thv}{\thv}}$.
    Let $C_F>0$ such that $\normmat{\cdot}_F \geq C_F \normmat{\cdot}_2$ on $\rset^{q\times q}$.
    We deduce using \eqref{eq:notation:pscal} and \eqref{eq:notation:norm} that
    \begin{multline} \label{eq:ex:main:d22d12}
        \underset{k\to\infty}{\lim \inf} \eqsp \PP\lr{\normmat{\dtwice{2}{2}{\tQtxt[\samp]{\thv}{\thv}} - \dtwice{2}{2}{\tQtxt[\pop]{\thv}{\thv}}}_2 \wedge \normmat{\dtwice{1}{2}{\tQtxt[\samp]{\thv}{\thv}} - \dtwice{1}{2}{\tQtxt[\pop]{\thv}{\thv}}}_2 < \frac {qx(\delta')\sigma}{C_F\sqrt{k}} } \\
        \geq 1-4q^2 \delta' \geq 1-\delta.
    \end{multline}
    Let $\epsilon>0$ and $C >0$ be constants obtained by applying \Cref{lemma:tech:rho_approx} to $(\dtwice{2}{2}{\tQtxt[\pop]{\thv}{\thv}}, \dtwice{1}{2}{\tQtxt[\pop]{\thv}{\thv}})$.
    We deduce from \eqref{eq:ex:main:d22d12} that
    \begin{align*}
        \underset{k\to\infty}{\lim \inf} \eqsp \PP\lr{|\rhosup^\samp\lr{\Yk}-\rhosup^\pop| < \frac {2Cqx(\delta')\sigma}{C_F} \frac 1 {\sqrt{k}} } &\geq 1-\delta,
    \end{align*}
    which proves \eqref{eq:main:comm:clt_rho}.
\end{proof}
%
\end{proof}

\begin{proof}[Proof for \Cref{ex:main:mirror_descent} in page \pageref{ex:main:mirror_descent} (Mirror descent, cont.)]
    
    For all $\theta, \theta' \in \Theta$ in a neighborhood of $\thv$,
    \begin{align}
        \Q{\theta}{\theta'} &= \eta \partial f(\theta)^\top \theta' + \Phi(\theta') - \Phi(\theta) - \partial \Phi(\theta)^\top (\theta' - \theta), \nonumber \\
        \donce{\Q{\theta}{\theta'}} &= \eta \partial f(\theta) + \partial \Phi(\theta') - \partial \Phi(\theta), \nonumber \\
        \dtwice{2}{2}{\Q{\theta}{\theta'}} &= \partial^2 \Phi(\theta'), \label{eq:ex:main:md:q22} \\
        \dtwice{1}{2}{\Q{\theta}{\theta'}} &= \eta \partial^2 f(\theta) - \partial^2 \Phi(\theta). \label{eq:ex:main:md:q12}
    \end{align}
\end{proof}

\begin{proof}[Proof for \Cref{ex:alpha_em} (The $\alpha$-EM algorithm)]

Let $\alpha \in \rset\setminus\{0,1\}$. The function $f_\alpha$ is defined on $\rset_+^*$ by
$ f_\alpha \colon x \mapsto  (1-x^\alpha)/(\alpha(\alpha - 1))$. 
Note that $f_\alpha(1) = 0$ and that for all differentiable functions $g$ taking values in $\rset_+^*$,
\begin{align*}
    \partial \lr{f_\alpha \circ g} &= \frac 1 {1-\alpha} \lrb{1 + \alpha(1-\alpha) f_\alpha \circ g} \partial \lr{\log \circ g}.
\end{align*}
The function $\mcq^{\alpha}$ defined in \eqref{eq:alpha_em:q_alpha} can be written as $\Q[\alpha]{\theta}{\theta'} = - \int_{\Xset} p_{\theta}(x | Y) F^{\alpha}_{\theta}(\theta') \mu(\rmd x)$, 
where $F^{\alpha} \colon (\theta, \theta') \mapsto f_{\alpha} \lr{p_{\theta'}(x,Y) / p_{\theta}(x,Y)}$.
For all $\theta\in\Theta$, $F^{\alpha}_{\theta}(\theta)=0$, and under the assumptions of \Cref{ex:main:pop_em} with $f_\alpha$ instead of $f_0$, for all $\theta, \theta'\in\Theta$,
\begin{align*}
    \partial_2 F^{\alpha}_{\theta}(\theta') &= \frac 1 {1-\alpha} \lrb{1+\alpha(1-\alpha)F^{\alpha}_{\theta}(\theta')} \partial_{\theta'}\log p_{\theta'}(x,Y), \\
    \partial_1 F^{\alpha}_{\theta}(\theta') &= -\frac 1 {1-\alpha} \lrb{1+\alpha(1-\alpha)F^{\alpha}_{\theta}(\theta')} \partial_{\theta}\log p_{\theta}(x,Y).
\end{align*}
We deduce
\begin{alignat*}{2}
    \dtwice{2}{2}{F^{\alpha}_{\theta}(\theta')} &= \frac 1 {1-\alpha} \lrb{1+\alpha(\alpha-1)F^{\alpha}_{\theta}(\theta')} \big( &&\partial_{\theta' \theta'} \log p_{\theta'}(x,Y) \\
    &\eqsp  &&+ \alpha \partial_{\theta'}\log p_{\theta'}(x,Y) \lrb{\partial_{\theta'}\log p_{\theta'}(x,Y)}^\top \big), \\
    \dtwice{1}{2}{F^{\alpha}_{\theta}(\theta')} &= - \frac {\alpha} {1-\alpha} \lrb{1+\alpha(\alpha-1)F^{\alpha}_{\theta}(\theta')} && \partial_{\theta'}\log p_{\theta'}(x,Y) \lrb{\partial_{\theta}\log p_{\theta}(x,Y)}^\top.
\end{alignat*}
At a population level this yields $\dtwice{2}{2}{\Q[\alpha]{\thv}{\thv}} = I_{X, Y}(\thv)$ and 
$$
\dtwice{1}{2}{\Q[\alpha]{\thv}{\thv}} = \frac {\alpha} {1-\alpha} I_{X,Y}(\thv) - \frac 1 {1-\alpha} I_{X|Y}(\thv) = -I_{X,Y}(\thv) +\frac 1 {1-\alpha} I_Y(\thv).
$$


Regarding the value of $\rhosup^\alpha$, note that $\tAv^{-1}\tBv$ is equivalent to the symmetric matrix $\tAv^{-1/2}\tBv\tAv^{-1/2}$ and is therefore diagonalizable.
Besides, we deduce from \eqref{eq:main:comm:a_b_pop_em} and \eqref{eq:alpha_em:a_b_star} that
\begin{equation*}
    \tAv^\alpha = \tAv \quad \mbox{and} \quad \tBv^\alpha = \frac 1 {1-\alpha} \tBv - \frac {\alpha}{1-\alpha} \tAv.
\end{equation*}
This yields $\Spec{(\tAv^\alpha)^{-1}\tBv^\alpha} = g_\alpha (\Spec{\tAv^{-1}\tBv})$ where $g_\alpha(x)\eqdef (x-\alpha)/(1-\alpha)$. We obtain the optimal $\alpha$ by equating $g_\alpha(\rhosup)=-g_\alpha(\rhoinf)$.
\end{proof}

\begin{proof}[Proof for \Cref{ex:main:mirror_prox} in page \pageref{ex:main:mirror_prox} (Mirror prox, cont.)]
    To begin with, the assumptions imply that the corresponding mirror descent meets \ref{hyp:main:regularity}-\ref{hyp:main:thv} and \ref{hyp:suff:compacity}-\ref{hyp:suff:continuity}, and that $\thv=\M{\thv}\in\ri{\Theta}$.
    Together with the fact that the mapping is point-to-point on $\Theta$, (see \Cref{ex:suff:md} in page \pageref{ex:suff:md}), this allows to apply \Cref{lemma:tech:diff_mapping} to mirror descent.
    We deduce the $C^1$-differentiability of $\mcm$ in a neighborhood of $\thv$, where we can write
    \begin{align*}
        \Q[m]{\theta}{\theta'} &= \eta \partial f\lr{\M{\theta}}^\top \theta' + \Phi(\theta') - \Phi(\theta) - \partial \Phi(\theta)^\top (\theta'-\theta), \\
        \donce{\Q[m]{\theta}{\theta'}} &= \eta \partial f\lr{\M{\theta}} + \partial \Phi(\theta') - \partial \Phi(\theta), \\
        \dtwice{2}{2}{\Q[m]{\theta}{\theta'}} &= \partial^2 \Phi(\theta'), \\
        \dtwice{1}{2}{\Q[m]{\theta}{\theta'}} &= -\eta\partial^2 f(\M{\theta}) P \lrb{\partial^2 \tilde{\Phi}(\M{\theta})}^{-1} P^\top \lrb{\eta \partial^2 f(\theta) - \partial^2 \Phi(\theta)} - \partial^2 \Phi(\theta).
    \end{align*}
    This yields $\Av^m = \partial^2 \Phi(\thv)$ and $\Bv^m = \partial^2 \Phi(\thv) - \eta \partial^2 f(\thv) P\tAv^{-1} P^\top \Bv$. Regarding the value of $\rhosup^m$, note that $\tAv^{-1}\tBv$ is equivalent to the symmetric matrix $\tAv^{-1/2}\tBv\tAv^{-1/2}$ and is therefore diagonalizable.
    Together with \eqref{eq:main:comm:mp} this yields
    \begin{equation*}
        \Spec{(\tAv^m)^{-1}\tBv^m} = f\lr{\Spec{\tAv^{-1}\tBv}},
    \end{equation*}
    where $f$ is defined by $f(x)\eqdef x^2-x+1$. Note that $|f(x)|<1$ if and only if $x\in(0;1)$.
    Besides, $\Spec{\tAv^{-1}\tBv}\subset\rset_+^*$ if and only if $u^\top \tAv^{-1/2}\tBv\tAv^{-1/2} u > 0$ for all $u\in\rset^d$, which is equivalent to $u^\top \tBv u > 0$ for all $u\in\rset^d$, by the symmetry of $\tAv^{-1/2}$.
    Finally, $\min_\rset f = 3/4$, which is attained at $x=1/2$, and for all $x\in(0;1)$, $f(x)>x$.
\end{proof}

\section{Technical results} \label{sec:app:tech}

All lemmas below are proved in the Supplementary material (\ref{sec:app:proofs_tech:matrices}).

\subsection{Linear algebra}

\begin{lemma} \label{lemma:tech:domination}
    Let $d\in\nset^*$ and $A,B \in \rmat{d}{d}$ such that $A$ is symmetric positive-definite.
    Then, for all $x, y \in \rset^d$, $x^\top A x \leq x^\top B y \implies \norm{x}_A \leq \rho \norm{y}_A$,  where $\rho \eqdef \normmattxt{A^{-1/2} B A^{-1/2}}_2$.
\end{lemma}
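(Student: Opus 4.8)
The plan is to reduce the statement to a plain Cauchy--Schwarz inequality in $\rset^d$ after whitening by the square root of $A$. Since $A$ is symmetric positive-definite, it admits a symmetric positive-definite square root $A^{1/2}$, which is invertible with inverse $A^{-1/2}$. First I would set $u \eqdef A^{1/2} x$ and $w \eqdef A^{1/2} y$, and introduce $C \eqdef A^{-1/2} B A^{-1/2}$, so that $\rho = \normmattxt{C}_2$ is exactly the spectral norm appearing in the statement.

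Under this change of variables the three quantities in play rewrite cleanly: $\normtxt{x}_A^2 = x^\top A x = u^\top u = \normtxt{u}_2^2$, and likewise $\normtxt{y}_A = \normtxt{w}_2$, while the bilinear term becomes $x^\top B y = u^\top A^{-1/2} B A^{-1/2} w = u^\top C w$. Hence the hypothesis $x^\top A x \leq x^\top B y$ is equivalent to $\normtxt{u}_2^2 \leq u^\top C w$.

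The core step is then to bound the right-hand side. By the Cauchy--Schwarz inequality and the definition of the spectral norm, $u^\top C w \leq \normtxt{u}_2 \normtxt{C w}_2 \leq \normtxt{u}_2 \normmattxt{C}_2 \normtxt{w}_2 = \rho \normtxt{u}_2 \normtxt{w}_2$. Combining with the rewritten hypothesis gives $\normtxt{u}_2^2 \leq \rho \normtxt{u}_2 \normtxt{w}_2$.

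Finally I would divide by $\normtxt{u}_2$ to conclude $\normtxt{u}_2 \leq \rho \normtxt{w}_2$, that is $\normtxt{x}_A \leq \rho \normtxt{y}_A$. The only point requiring a word of care is the degenerate case $\normtxt{u}_2 = 0$ (equivalently $x=0$, since $A^{1/2}$ is invertible): there the desired inequality reads $0 \leq \rho \normtxt{y}_A$, which holds trivially because $\rho \geq 0$. I do not expect a genuine obstacle here; the whole content is the whitening reduction together with Cauchy--Schwarz, and the substitution by $A^{\pm 1/2}$ is legitimate precisely because $A$ is symmetric positive-definite.
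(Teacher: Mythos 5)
Your proof is correct and follows essentially the same route as the paper's: whitening by $A^{1/2}$, Cauchy--Schwarz, and the spectral-norm bound $\normtxt{Cw}_2 \leq \normmattxt{C}_2\normtxt{w}_2$, then dividing by $\normtxt{x}_A$. The only (cosmetic) difference is that you make the change of variables explicit and treat the degenerate case $x=0$ separately, whereas the paper simply restricts to $x^\top A x > 0$ from the outset.
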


\begin{lemma} \label{lemma:tech:rho_approx}
    Let $d\in\nset^*$ and $A,B \in \rmat{d}{d}$ such that $A$ is symmetric positive-definite.
    Then, there exist $\epsilon>0$ and $C>0$ such that for all symmetric matrices $M \in \rmat{d}{d}$ and for all matrices $N\in\rmat{d}{d}$ verifying $\normmattxt{M}_2 \vee \normmattxt{N}_2\leq\epsilon$,
    \begin{equation*}
        \lrav{\normmattxt{A^{-1/2} B A^{-1/2}}_2 - \normmattxt{(A+M)^{-1/2} (B+N) (A+M)^{-1/2}}_2} \leq C \normmat{M}_2 + C \normmat{N}_2. 
    \end{equation*}
\end{lemma}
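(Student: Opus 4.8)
The plan is to establish a local Lipschitz estimate for the map $(A,B)\mapsto\normmat{A^{-1/2}BA^{-1/2}}_2$ at the point $(A,B)$, which is exactly the claimed inequality. First I would invoke the reverse triangle inequality for the spectral norm, $\big|\normmat{X}_2-\normmat{Y}_2\big|\leq\normmat{X-Y}_2$, to reduce matters to bounding
\[
\normmat{A^{-1/2}BA^{-1/2}-(A+M)^{-1/2}(B+N)(A+M)^{-1/2}}_2.
\]
Writing $P\eqdef A^{-1/2}$ and $Q\eqdef(A+M)^{-1/2}$, I would use the telescoping decomposition $PBP-Q(B+N)Q=(P-Q)BP+QB(P-Q)-QNQ$ together with submultiplicativity to obtain
\[
\normmat{A^{-1/2}BA^{-1/2}-(A+M)^{-1/2}(B+N)(A+M)^{-1/2}}_2
\leq \normmat{B}_2\normmat{P-Q}_2\big(\normmat{P}_2+\normmat{Q}_2\big)+\normmat{Q}_2^2\normmat{N}_2.
\]
It then suffices to control $\normmat{Q}_2$ uniformly for small $M$ and to prove the square-root Lipschitz bound $\normmat{P-Q}_2\leq C'\normmat{M}_2$, since $\normmat{P}_2=\lambda_{\min}(A)^{-1/2}$ and $\normmat{B}_2$ are fixed constants.

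For the uniform bound I would set $\epsilon\eqdef\lambda_{\min}(A)/2$. Then for any symmetric $M$ with $\normmat{M}_2\leq\epsilon$ the matrix $A+M$ is symmetric positive-definite with $\lambda_{\min}(A+M)\geq\lambda_{\min}(A)-\epsilon=\lambda_{\min}(A)/2$, so $Q$ is well-defined and $\normmat{Q}_2=\lambda_{\min}(A+M)^{-1/2}\leq(\lambda_{\min}(A)/2)^{-1/2}$, a bound depending only on $A$.

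The main obstacle is the Lipschitz continuity of the inverse matrix square root, i.e. the bound on $\normmat{P-Q}_2$. I would handle it through the integral representation $A^{-1/2}=\pi^{-1}\int_0^\infty(A+tI_d)^{-1}t^{-1/2}\,\rmd t$, valid by functional calculus for symmetric positive-definite $A$ (it reduces, on each eigenvalue $a>0$, to the standard identity $a^{-1/2}=\pi^{-1}\int_0^\infty(a+t)^{-1}t^{-1/2}\,\rmd t$). Applying the resolvent identity $(A+tI_d)^{-1}-(A+M+tI_d)^{-1}=(A+tI_d)^{-1}M(A+M+tI_d)^{-1}$ under the integral and using $\normmat{(A+tI_d)^{-1}}_2=(\lambda_{\min}(A)+t)^{-1}$ and $\normmat{(A+M+tI_d)^{-1}}_2\leq(\lambda_{\min}(A)/2+t)^{-1}$ gives
\[
\normmat{P-Q}_2\leq\frac{\normmat{M}_2}{\pi}\int_0^\infty\frac{t^{-1/2}}{(\lambda_{\min}(A)+t)(\lambda_{\min}(A)/2+t)}\,\rmd t,
\]
where the integral converges (the integrand is $O(t^{-1/2})$ near $0$ and $O(t^{-5/2})$ at infinity) to a finite constant depending only on $A$. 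This yields $\normmat{P-Q}_2\leq C'\normmat{M}_2$.

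Collecting the three estimates, every prefactor depends only on $A$ and $B$, so there is a constant $C>0$ such that the combined bound is at most $C\normmat{M}_2+C\normmat{N}_2$ for all symmetric $M$ and all $N$ with $\normmat{M}_2\vee\normmat{N}_2\leq\epsilon$, which is the assertion. The reduction and telescoping steps are routine; the only substantive work is the square-root perturbation bound, for which the resolvent-integral argument is the cleanest route and avoids any appeal to Sylvester-equation estimates.
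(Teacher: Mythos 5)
Your proof is correct, and its outer skeleton (reverse triangle inequality followed by the telescoping decomposition of $PBP-Q(B+N)Q$, plus a uniform bound on $\normmat{Q}_2$ for small symmetric $M$) coincides with the paper's. Where you genuinely diverge is in the single substantive step, the perturbation bound $\normmat{A^{-1/2}-(A+M)^{-1/2}}_2\leq C'\normmat{M}_2$. The paper gets it in two elementary stages: first a Lipschitz bound for the matrix \emph{square root}, proved by taking an eigenvector $x$ of $A_M\eqdef (A+M)^{1/2}-A^{1/2}$ attached to its extreme eigenvalue $\mu$ and exploiting the Sylvester-type identity $M=(A+M)^{1/2}A_M+A_M A^{1/2}$ to obtain $\normmat{M}_2\geq|\mu|\,\lambda_{\min}(A)^{1/2}$; second a Lipschitz bound for the matrix \emph{inverse} via a Neumann series; these are then composed by writing $(A+M)^{-1/2}$ as the square root of $A^{-1}-M'$ with $M'\eqdef A^{-1}-(A+M)^{-1}$, so the constant appears as $C^2$. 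You instead treat the inverse square root in one stroke through the functional-calculus representation $A^{-1/2}=\pi^{-1}\int_0^\infty (A+tI_d)^{-1}t^{-1/2}\,\rmd t$ and the resolvent identity, integrating the bound $\bigl(\lambda_{\min}(A)+t\bigr)^{-1}\bigl(\lambda_{\min}(A)/2+t\bigr)^{-1}t^{-1/2}$, whose convergence you correctly justify. Both arguments are valid and give the same conclusion; yours is shorter, yields an explicit constant, and, as you note, bypasses the Sylvester-equation step, at the price of invoking an integral representation, whereas the paper's route stays entirely within finite-dimensional linear algebra (eigenvector arguments and geometric series) and produces two standalone intermediate lemmas that are stated and proved separately in the Supplementary material.
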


\begin{lemma} \label{lemma:tech:rho_a_b}
    Let $d\in\nset^*$, $A \in \rmat{d}{d}$ be a symmetric positive-definite matrix, and $B \in \rmat{d}{d}$ be a symmetric matrix. Then,
    \begin{equation*}
        \rhom{A^{-1}B} = \rhom{BA^{-1}} = \normmat{A^{-1/2} B A^{-1/2}}_2 = \underset{v \in \rset^d}{\sup} \frac{|v^\top B v|} {v^\top A v}.
    \end{equation*}
\end{lemma}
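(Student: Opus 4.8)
The plan is to reduce everything to the symmetric matrix $S \eqdef A^{-1/2} B A^{-1/2}$, for which the spectral radius, the spectral norm, and the Rayleigh-type supremum all coincide, and then to transport these identities back to $A^{-1}B$ and $BA^{-1}$ by similarity. Throughout, $A^{1/2}$ denotes the unique symmetric positive-definite square root of $A$, which is invertible with inverse $A^{-1/2}$; the supremum over $v\in\rset^d$ is understood over $v\neq 0$, and I would state this convention at the outset to avoid the indeterminate ratio at $v=0$.

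First I would establish the two similarity relations. Writing $BA^{-1} = A(A^{-1}B)A^{-1}$ shows that $BA^{-1}$ and $A^{-1}B$ are similar, whence $\rhom{A^{-1}B} = \rhom{BA^{-1}}$. Next, the identity $A^{1/2}(A^{-1}B)A^{-1/2} = A^{-1/2}BA^{-1/2} = S$ exhibits $A^{-1}B$ as similar to $S$, so $\rhom{A^{-1}B} = \rhom{S}$. Since $B$ and $A^{-1/2}$ are symmetric, $S^\top = A^{-1/2}B^\top A^{-1/2} = S$, so $S$ is symmetric, which is the property driving the rest of the argument.

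The next step invokes the spectral theorem for $S$. Its eigenvalues are real and its singular values are their absolute values, so the largest singular value equals $\max_i|\lambda_i|$; that is, $\normmat{S}_2 = \rhom{S}$. Combined with the previous paragraph this already yields $\rhom{A^{-1}B} = \rhom{BA^{-1}} = \normmat{S}_2$, i.e.\ the first three quantities in the statement.

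It then remains to identify $\normmat{S}_2$ with the stated supremum. For the symmetric matrix $S$ the Rayleigh quotient satisfies $\sup_{u\neq 0}|u^\top S u|/(u^\top u) = \max_i|\lambda_i| = \rhom{S} = \normmat{S}_2$, the numerical radius of a symmetric matrix being its spectral radius. Performing the change of variable $u = A^{1/2}v$, which is a bijection of $\rset^d\setminus\{0\}$ since $A^{1/2}$ is invertible, turns $u^\top S u$ into $v^\top B v$ and $u^\top u$ into $v^\top A v$, giving $\normmat{S}_2 = \sup_{v\neq 0}|v^\top Bv|/(v^\top A v)$ and closing the chain of equalities. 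The computation is essentially all standard linear algebra; the only point deserving care — and the natural place for the argument to go astray if invoked carelessly — is the equality between the spectral norm of $S$ and its spectral radius, which is valid precisely because $S$ is symmetric and is where the spectral theorem must be used.
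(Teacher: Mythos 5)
Your proof is correct and follows essentially the same route as the paper's: reduce to the symmetric matrix $S = A^{-1/2}BA^{-1/2}$ by similarity, use the spectral theorem to identify the spectral radius, the spectral norm and the Rayleigh-type supremum of $S$, and then perform the change of variables $u = A^{1/2}v$. The only (immaterial) difference is in the first equality, which you obtain from the similarity $BA^{-1} = A(A^{-1}B)A^{-1}$, whereas the paper deduces it from transposition, since the symmetry of $A^{-1}$ and $B$ gives $BA^{-1} = (A^{-1}B)^\top$.
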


\begin{lemma} \label{lemma:tech:norm_approx}
    Let $d \in \nset^*$ and $S_\star \in \rmat{d}{d}$ be a symmetric positive-definite matrix.
    Then, for all $\epsilon > 0$, there exists $\delta > 0$ such that for all symmetric matrices $S\in\rmat{d}{d}$ verifying $\normmattxt{S - S_\star}_2 < \delta$, and all $x\in\mathbb{R}^d$,
    \begin{equation*}
        (1-\epsilon) \norm{x}_S \leq \norm{x}_{S_\star} \leq (1+\epsilon) \norm{x}_S.
    \end{equation*}
\end{lemma}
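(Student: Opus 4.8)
The plan is to control the relative difference between the squared norms $\norm{x}_{S_\star}^2 = x^\top S_\star x$ and $\norm{x}_S^2 = x^\top S x$ uniformly over $x\neq 0$, and then to pass from the squares to the norms themselves by taking square roots. Throughout, set $\lambda_\star \eqdef \min \Spec{S_\star}$, which is strictly positive since $S_\star$ is symmetric positive-definite. Since the lower inequality is trivial when $\epsilon \geq 1$ (its left-hand side is then nonpositive), I may assume $\epsilon \in (0;1)$.

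First I would ensure that $\norm{\cdot}_S$ is well-defined, i.e. that $S$ is itself positive-definite once $\delta$ is small. For any symmetric $S$ with $\normmat{S - S_\star}_2 < \delta$, Weyl's inequality gives $\min \Spec{S} \geq \lambda_\star - \normmat{S - S_\star}_2 > \lambda_\star - \delta$, so choosing $\delta < \lambda_\star$ guarantees $\min \Spec{S} > 0$ and hence that $\norm{x}_S = \sqrt{x^\top S x}$ is a genuine norm.

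The core step is the uniform estimate. For $x\neq 0$, writing $x^\top S_\star x - x^\top S x = x^\top (S_\star - S) x$ and bounding the Rayleigh quotient, I obtain
\begin{equation*}
    \lrav{\frac{\norm{x}_{S_\star}^2}{\norm{x}_S^2} - 1} = \frac{\lrav{x^\top (S_\star - S) x}}{x^\top S x} \leq \frac{\normmat{S_\star - S}_2 \norm{x}_2^2}{(\min \Spec{S}) \norm{x}_2^2} = \frac{\normmat{S_\star - S}_2}{\min \Spec{S}} \leq \frac{\delta}{\lambda_\star - \delta},
\end{equation*}
where $x^\top S x \geq (\min \Spec{S})\norm{x}_2^2$ is the standard Rayleigh lower bound and the last step uses the eigenvalue estimate of the previous paragraph. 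Crucially, the right-hand side does not depend on $x$, which is exactly the uniformity needed.

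It then remains to convert this multiplicative control of the squares into the linear statement. Setting $\eta \eqdef \delta/(\lambda_\star - \delta)$, the display above yields $1 - \eta \leq \norm{x}_{S_\star}^2 / \norm{x}_S^2 \leq 1 + \eta$ for every $x\neq 0$, so it suffices to pick $\delta$ (hence $\eta$) small enough that $1 - \eta \geq (1-\epsilon)^2$ and $1 + \eta \leq (1+\epsilon)^2$; the binding constraint is $\eta \leq \epsilon(2-\epsilon)$, which is strictly positive for $\epsilon \in (0;1)$ and is met by any $\delta < \lambda_\star \epsilon(2-\epsilon)/(1 + \epsilon(2-\epsilon))$. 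Taking square roots gives $(1-\epsilon)\norm{x}_S \leq \norm{x}_{S_\star} \leq (1+\epsilon)\norm{x}_S$ for all $x$, the case $x=0$ being trivial. I do not expect a serious obstacle: the statement is a quantitative form of the equivalence of norms in finite dimension, and the only points requiring care are the passage through the least eigenvalue to make the bound uniform in $x$, and the harmless squaring/square-root bookkeeping at the end.
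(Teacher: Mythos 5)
Your proof is correct and takes essentially the same route as the paper's: both control $\lrav{x^\top (S-S_\star)x} \leq \normmat{S-S_\star}_2 \norm{x}_2^2$, turn this into a uniform multiplicative bound on the ratio of the squared norms, and then take square roots. The only cosmetic difference is that you normalize by $\norm{x}_S^2$ and bound $\min\Spec{S}$ from below via Weyl's inequality, whereas the paper normalizes by $\norm{x}_{S_\star}^2$ and uses $\norm{x}_2 \leq \normmat{S_\star^{-1/2}}_2 \norm{x}_{S_\star}$, so it never needs an eigenvalue-perturbation step.
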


\subsection{Minimization}

\begin{lemma} \label{lemma:tech:min_f_compact}
    Let $(\Kset, \rmd)$ be a compact metric space and $f\colon\Kset\rightarrow\rset$ be a continuous function. Write $m\eqdef \min_K f$ and $\Mset\eqdef\argmin_K f$.
    Then, for all $\delta > 0$ there exists $\epsilon > 0$ such that for all $x\in\Kset$,
    \begin{equation*}
        f(x) - m < \epsilon \implies \rmd(x, \Mset) < \delta. 
    \end{equation*}
\end{lemma}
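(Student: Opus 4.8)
The plan is to exploit compactness through the extreme value theorem. The first thing to record is that $\Mset$ is a nonempty closed subset of $\Kset$: it is nonempty because $f$, being continuous on the compact space $\Kset$, attains its minimum $m$, and it is closed as the preimage $f^{-1}(\{m\})$ of a point under a continuous map. Consequently the distance function $x \mapsto \rmd(x, \Mset)$ is well-defined and continuous on $\Kset$, and vanishes precisely on $\Mset$.

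Fix $\delta > 0$ and consider the set $\Kset_\delta \eqdef \settxt{x \in \Kset}{\rmd(x, \Mset) \geq \delta}$. Being the preimage of a closed half-line under the continuous map $\rmd(\cdot, \Mset)$, it is a closed subset of the compact space $\Kset$, hence itself compact. If $\Kset_\delta = \emptyset$, then $\rmd(x, \Mset) < \delta$ holds for every $x \in \Kset$ and any choice of $\epsilon > 0$ makes the implication trivially true. Otherwise, $f$ attains its minimum over the compact set $\Kset_\delta$ at some point $x_0 \in \Kset_\delta$, and I would set $\epsilon \eqdef f(x_0) - m$.

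The crux is that this $\epsilon$ is strictly positive: since $\rmd(x_0, \Mset) \geq \delta > 0$, the point $x_0$ does not belong to $\Mset$, so $f(x_0) > m$. With this choice, any $x \in \Kset$ satisfying $f(x) - m < \epsilon = f(x_0) - m$ obeys $f(x) < f(x_0) = \min_{\Kset_\delta} f$, which forces $x \notin \Kset_\delta$, that is, $\rmd(x, \Mset) < \delta$, as required.

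There is no serious obstacle here; the argument is a routine compactness exercise. The only points warranting a moment of care are checking that $\Mset$ is nonempty (so that $\rmd(\cdot, \Mset)$ is meaningful) and that $f$ actually attains its infimum over $\Kset_\delta$, both of which are immediate from the extreme value theorem applied to the compact sets $\Kset$ and $\Kset_\delta$. An equivalent route would proceed by contradiction: were the claim to fail for some $\delta$, one could choose $x_n \in \Kset$ with $f(x_n) \to m$ yet $\rmd(x_n, \Mset) \geq \delta$, extract a convergent subsequence by compactness, and use the continuity of $f$ and of $\rmd(\cdot, \Mset)$ to produce a limit point lying in $\Mset$ while remaining at distance at least $\delta$ from $\Mset$, a contradiction.
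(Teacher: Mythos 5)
Your proof is correct and takes essentially the same route as the paper's: both consider the compact set $\settxt{x\in\Kset}{\rmd(x,\Mset)\geq\delta}$, minimize $f$ over it by the extreme value theorem, and take $\epsilon$ to be the gap between that minimum and $m$, which is strictly positive since the minimizer lies outside $\Mset$. Your write-up is in fact slightly more careful than the paper's, as it explicitly handles the case where this set is empty and spells out the final implication, both of which the paper leaves implicit.
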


\begin{lemma} \label{lemma:tech:continuity_mapping_set}
    Let $(\Kset, \rmd)$ be a compact metric space and $\mcq\colon\Kset\times\Kset\rightarrow\rset$ be a continuous function. Define for all $x\in\Kset$, $\Mtxt{x} \eqdef \argmin_{x'\in\Kset} \Q{x}{x'}$.
    Then, for all $\xv\in\Kset$ and $\delta>0$, there exists $\delta'>0$ such that for all $x\in\Kset$,
    \begin{equation*}
        \rmd(x,\xv) < \delta' \implies \underset{x'\in\M{x}}{\sup} \rmd(x', \M{\xv}) \leq \delta.
    \end{equation*}
\end{lemma}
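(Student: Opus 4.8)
The plan is to establish the upper hemicontinuity (closed-graph property) of the argmin correspondence $x\mapsto\M{x}$ through a compactness-and-continuity argument run by contradiction. First I would record a preliminary fact: for every $x\in\Kset$ the function $x'\mapsto\Q{x}{x'}$ is continuous on the compact space $\Kset$ and therefore attains its minimum, so $\M{x}$ is nonempty. In particular $\M{\xv}\neq\emptyset$, which ensures that the distance $\rmd(\cdot,\M{\xv})$ and the supremum appearing in the statement are well-defined.

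Next I would negate the conclusion, fixing $\xv\in\Kset$ and $\delta>0$ for which it fails. Specializing $\delta'=1/n$ yields a sequence $\seq{x_n}$ with $\rmd(x_n,\xv)<1/n$ such that $\sup_{x'\in\M{x_n}}\rmd(x',\M{\xv})>\delta$; since the supremum exceeds $\delta$, for each $n$ I can select $x_n'\in\M{x_n}$ with $\rmd(x_n',\M{\xv})>\delta$. Thus $x_n\to\xv$ while every $x_n'$ stays at distance more than $\delta$ from $\M{\xv}$.

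Then I would exploit the compactness of $\Kset$ to pass to a subsequence along which $x_n'$ converges to some $\ell\in\Kset$, and combine two facts. On one hand, the map $y\mapsto\rmd(y,\M{\xv})$ being $1$-Lipschitz, taking the limit in $\rmd(x_n',\M{\xv})>\delta$ gives $\rmd(\ell,\M{\xv})\geq\delta>0$, so $\ell\notin\M{\xv}$. On the other hand, $x_n'\in\M{x_n}$ means $\Q{x_n}{x_n'}\leq\Q{x_n}{z}$ for all $z\in\Kset$; letting $n\to\infty$ and using the continuity of $\mcq$ both at $(\xv,\ell)$ and at $(\xv,z)$ gives $\Q{\xv}{\ell}\leq\Q{\xv}{z}$ for all $z$, i.e. $\ell\in\M{\xv}$. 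This contradiction completes the proof.

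The routine ingredients are the nonemptiness of the minimizer sets and the Lipschitz continuity of the distance to a fixed set. The one step that requires care is the passage to the limit in the inequality $\Q{x_n}{x_n'}\leq\Q{x_n}{z}$, where both the parameter $x_n$ and the candidate minimizer $x_n'$ move simultaneously; it is the joint continuity of $\mcq$ in its two arguments that makes this limit valid and forces $\ell$ into $\M{\xv}$. I expect this joint passage to the limit to be the crux, even though it follows directly from joint continuity on a compact domain.
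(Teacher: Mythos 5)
Your proof is correct, but it takes a genuinely different route from the paper's. You argue by contradiction: extract $x_n\to\xv$ and minimizers $x_n'\in\M{x_n}$ staying at distance more than $\delta$ from $\M{\xv}$, pass to a convergent subsequence $x_n'\to\ell$ by sequential compactness, and derive a contradiction by showing $\ell$ lies both outside $\M{\xv}$ (Lipschitz continuity of $\rmd(\cdot,\M{\xv})$) and inside it (passing to the limit in $\Q{x_n}{x_n'}\leq\Q{x_n}{z}$ using joint continuity) --- the classical closed-graph, Berge-maximum-theorem style argument for upper hemicontinuity of an argmin correspondence. The paper instead argues directly and quantitatively: it applies \Cref{lemma:tech:min_f_compact} to $\Q{\xv}{\cdot}$ to get an $\epsilon>0$ such that every $\epsilon$-minimizer of $\Q{\xv}{\cdot}$ is $\delta$-close to $\M{\xv}$, then uses the \emph{uniform} continuity of $\mcq$ on the compact product $\Kset\times\Kset$ to choose $\delta'$ so that, for $\rmd(x,\xv)<\delta'$ and $x'\in\M{x}$, the chain $\Q{\xv}{x'}<\Q{x}{x'}+\epsilon/2\leq\Q{x}{\xvv}+\epsilon/2<\Q{\xv}{\xvv}+\epsilon$ exhibits $x'$ as an $\epsilon$-minimizer of $\Q{\xv}{\cdot}$. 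The paper's route yields an explicit $\delta'$ (built from the uniform-continuity modulus and the $\epsilon$ of \Cref{lemma:tech:min_f_compact}, a lemma it reuses elsewhere, e.g.\ in the proofs of \Cref{lemma:tech:sup_f_sup_m} and \Cref{lemma:tech:biconjugate}), whereas yours is self-contained --- it needs neither the auxiliary lemma nor uniform continuity --- at the price of being non-constructive and specific to the metric (sequential) setting, which is all the statement requires.
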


\begin{lemma}[Continuity of the minimization mapping] \label{lemma:tech:continuity_mapping}
    Let $(\Kset, \rmd)$ be a compact metric space and $\mcq\colon\Kset\times\Kset\rightarrow\rset$ be a continuous function such that for all $x\in\Kset$, $\Mtxt{x} \eqdef \argmin_{x'\in\Kset} \Q{x}{x'}$ is a singleton.
    Then, the function $\mcm$ is continuous on $\Kset$.
\end{lemma}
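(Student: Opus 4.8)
The plan is to derive this statement directly from \Cref{lemma:tech:continuity_mapping_set}, which already furnishes the required control on the minimizer correspondence as the first argument varies. The point is that \Cref{lemma:tech:continuity_mapping} is merely the specialization of \Cref{lemma:tech:continuity_mapping_set} to the case where the correspondence $\mcm$ happens to be single-valued. First I would fix notation by writing $\M{x} = \{\mcm(x)\}$, using the singleton hypothesis to identify the set-valued map with a genuine point-valued map $\mcm \colon \Kset \rightarrow \Kset$, whose continuity is what we must establish.

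Next I would fix an arbitrary $\xv \in \Kset$ and an arbitrary $\delta > 0$, and apply \Cref{lemma:tech:continuity_mapping_set} at the point $\xv$ with this $\delta$. This produces some $\delta' > 0$ such that $\rmd(x, \xv) < \delta'$ implies $\sup_{x'\in\M{x}} \rmd(x', \M{\xv}) \leq \delta$. The only substantive manipulation is then purely notational: since $\M{x}$ is the singleton $\{\mcm(x)\}$, the supremum on the left-hand side collapses to the single term $\rmd(\mcm(x), \M{\xv})$; and since $\M{\xv}$ is the singleton $\{\mcm(\xv)\}$, the distance from the point $\mcm(x)$ to this set equals $\rmd(\mcm(x), \mcm(\xv))$. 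Hence the implication becomes
\begin{equation*}
    \rmd(x, \xv) < \delta' \implies \rmd(\mcm(x), \mcm(\xv)) \leq \delta,
\end{equation*}
which is exactly the $\varepsilon$--$\delta$ characterization of continuity of $\mcm$ at $\xv$ (taking $\delta$ arbitrarily small). As $\xv$ and $\delta$ were arbitrary, this establishes continuity of $\mcm$ on all of $\Kset$.

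I expect no real obstacle here, since the genuine content—namely the upper-hemicontinuity-type bound on the minimizer correspondence, obtained from the compactness of $\Kset$ and the continuity of $\mcq$—is entirely absorbed into \Cref{lemma:tech:continuity_mapping_set}. The single point deserving a word of care is the passage from the distance between a point and the set $\M{\xv}$ to the distance between two points of $\Kset$; this step is legitimate precisely because the singleton hypothesis collapses the correspondence $\mcm$ to an honest function, so that both sides of the distance reduce to individual points.
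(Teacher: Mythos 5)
Your proposal is correct and coincides with the paper's own argument: the paper's proof of \Cref{lemma:tech:continuity_mapping} is literally ``This is a direct consequence of \Cref{lemma:tech:continuity_mapping_set},'' and your write-up simply makes explicit the routine specialization (collapsing the supremum and the point-to-set distance when $\M{x}$ and $\M{\xv}$ are singletons) that justifies this. No gap; the $\leq\delta$ versus $<\delta$ point is handled correctly by your remark that $\delta$ is arbitrary.
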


\begin{lemma}[Differentiability of the minimization mapping] \label{lemma:tech:diff_mapping}
    Let $d \in \nset^*$, $\Kset$ be a compact set of $\rset^d$, and $\mcq\colon\rset^d\times\rset^d\rightarrow\rset$ be a function continuous on $\Kset \times \Kset$ such that for all $x\in\Kset$, $\Mtxt{x} \eqdef \argmin_{x'\in\Kset} \Q{x}{x'}$ is a singleton.
    Let $x \in \Kset$ such that:
    \begin{enumerate} [(i)]
        \item $x, \M{x} \in \ri{\Kset}$, \label{hyp:tech:dm}
        \item $\partial_2 \mcq$ is well-defined and $C^k$-differentiable in a neighborhood of $(x, \M{x})$ for $k\in\nset^*$, \label{hyp:tech:dm_2}
        \item $\dtwice{2}{2}{\tQtxt{x}{\M{x}}}$ is invertible. \label{hyp:tech:dm_3}
    \end{enumerate}
    Then, the function $\mcm$ is $C^k$-differentiable in a neighborhood of $x$ (considering that the domain lies in the ambient space $\Aff{\Kset}$).
\end{lemma}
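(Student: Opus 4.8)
The plan is to recognize $\mcm$ as the solution branch of a $C^k$ system of stationarity equations and to invoke the implicit function theorem. Let $m$ be the dimension of $\Aff{\Kset}$, let $P\in\rmat{d}{m}$ be a matrix whose columns form an orthonormal basis of the direction of $\Aff{\Kset}$ (so that the tilde notation of \Cref{sec:main:proof} is read with respect to this $P$), and fix a base point $c\in\Aff{\Kset}$, so that every point of $\Aff{\Kset}$ is uniquely written $c+Pu$ with $u\in\rset^m$. Write $x=c+Pw_0$ and $\M{x}=c+Pu_\star$. Since $x\in\ri{\Kset}$ by (i), a relative neighborhood of $x$ lies in $\Kset$, so $\mcm$ is genuinely defined on a neighborhood of $x$ in $\Aff{\Kset}$, which is the object we must show is $C^k$.

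First I would establish first-order stationarity at the minimizer. As $\M{x}\in\ri{\Kset}$ by (i), the point $\M{x}$ minimizes $x'\mapsto\Q{x}{x'}$ over a relative neighborhood of $\M{x}$ contained in $\Kset$; equivalently, $u_\star$ is an unconstrained local minimizer of the chart-expressed map $u\mapsto\Q{x}{c+Pu}$. This map is $C^1$ near $u_\star$ by (ii), and its gradient is $P^\top\partial_2\Q{x}{c+Pu}$, so the optimality condition reads $P^\top\partial_2\Q{x}{\M{x}}=0$.

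Next I would set up the implicit function theorem. Define $F\colon\rset^m\times\rset^m\to\rset^m$ by $F(w,u)\eqdef P^\top\partial_2\Q{c+Pw}{c+Pu}$. By (ii) the map $F$ is $C^k$ near $(w_0,u_\star)$, and the previous step gives $F(w_0,u_\star)=0$. Its Jacobian in $u$ is $\partial_uF(w,u)=P^\top\partial_{22}\Q{c+Pw}{c+Pu}P$, hence $\partial_uF(w_0,u_\star)=\dtwice{2}{2}{\tQtxt{x}{\M{x}}}$, which is invertible by (iii). The implicit function theorem then yields neighborhoods $\mcu$ of $w_0$ and $\mathcal V$ of $u_\star$ and a $C^k$ map $g\colon\mcu\to\mathcal V$ with $g(w_0)=u_\star$ such that, for each $w\in\mcu$, the unique solution $u\in\mathcal V$ of $F(w,u)=0$ is $u=g(w)$.

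Finally I would identify $g$ with $\mcm$ in the chart $u\mapsto c+Pu$. The map $\mcm$ is continuous on $\Kset$ by \Cref{lemma:tech:continuity_mapping} (its hypotheses hold since $\mcq$ is continuous on $\Kset\times\Kset$ and $\M{\cdot}$ is everywhere a singleton), so for $w$ close to $w_0$ the point $\M{c+Pw}$ is close to $\M{x}$, hence stays in $\ri{\Kset}$ because $\ri{\Kset}$ is open in $\Aff{\Kset}$. The stationarity argument of the first step applies to $\M{c+Pw}$, producing $u\in\mathcal V$ with $c+Pu=\M{c+Pw}$ and $F(w,u)=0$ for $w$ sufficiently close to $w_0$; by local uniqueness in the implicit function theorem, $u=g(w)$, that is $\M{c+Pw}=c+Pg(w)$. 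Thus $\mcm$, read through the chart of $\Aff{\Kset}$, is the $C^k$ map $g$, which is the claim. I expect the identification step to be the main obstacle: the implicit function theorem only furnishes a locally unique \emph{stationary} point of the restricted objective, whereas $\mcm(x)$ is the \emph{global} minimizer over $\Kset$; reconciling the two relies on the singleton assumption together with continuity of $\mcm$ and openness of $\ri{\Kset}$ in $\Aff{\Kset}$, which keep the global minimizer interior and stationary near $x$ and thus force it onto the branch $g$.
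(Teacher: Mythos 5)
Your proof is correct, and it reaches the conclusion by a route that is genuinely different from the paper's, although both rest on the same three pillars: interior first-order conditions, invertibility of the reduced Hessian, and continuity of $\mcm$ from \Cref{lemma:tech:continuity_mapping}. The paper never invokes the implicit function theorem. Instead, working directly with the two minimizers $\M{y}$ and $\M{y+h}$ for $y$ near $x$, it uses the vanishing of the relative gradients $P^\top \donce{\Q{y}{\M{y}}} = P^\top \donce{\Q{y+h}{\M{y+h}}} = 0$ (the same device as in the proof of \Cref{thm:main:lower_bound}) together with Taylor's theorem with integral remainder to obtain the identity
\begin{equation*}
    \tmca(h)\lr{\tM{y+h}-\tM{y}} = \tmcb(h)\,\tilde h,
\end{equation*}
where $\mca(h)$ and $\mcb(h)$ are integral means of $\dtwice{2}{2}{\mcq}$ and $-\dtwice{1}{2}{\mcq}$ along segments joining the relevant points; it then inverts $\tmca(h)$ for small $h$ (continuity of $\mca,\mcb$ plus invertibility of $\tmca(0)$) to read off differentiability at every such $y$ with the explicit formula $\partial\M{y} = -P\lrb{\dtwice{2}{2}{\tQ{y}{\M{y}}}}^{-1}\dtwice{1}{2}{\tQ{y}{\M{y}}}P^\top$, and finally gets $C^k$-differentiability for $k>1$ by induction on this formula. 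Your reduction to the implicit function theorem applied to $F(w,u)=P^\top\partial_2\Q{c+Pw}{c+Pu}=0$ delivers $C^k$ regularity in a single stroke, with no induction, and you correctly isolate and resolve the one delicate point of that route: the theorem only controls a locally unique \emph{stationary} branch, and identifying it with the \emph{global} minimizer requires exactly what you invoke, namely continuity of $\mcm$ and relative openness of $\ri{\Kset}$, so that $\M{c+Pw}$ stays interior, stationary, and inside the uniqueness neighborhood $\mathcal{V}$. The paper's hands-on computation avoids this identification step entirely, since its key identity is derived for the actual minimizers rather than for abstract critical points, and it buys the closed-form derivative, which is what is actually used downstream, e.g. in the expression of $\dtwice{1}{2}{\Q[m]{\theta}{\theta'}}$ for mirror prox in \Cref{sec:app:thm:main}; your argument recovers the same formula by differentiating $F(w,g(w))=0$, and it would be worth recording that consequence explicitly.
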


\begin{lemma} \label{lemma:tech:sup_f_sup_m}
    Let $(\Kset, \rmd)$ be a compact metric space and $f\colon\Kset\rightarrow\rset$ be a continuous function.
    Then, for all $\delta>0$, there exists $\epsilon>0$ such that for all functions $\hat{f}\colon\Kset\rightarrow\rset$,
    \begin{equation*}
        \underset{\Kset}{\sup} \lrav{\hat{f} - f} < \epsilon \implies \underset{y\in\hat{\Mset}}{\sup} \eqsp \rmd(y,\Mset) \leq \delta,
    \end{equation*}
    with $\hat{\Mset}\eqdef \argmin_{\Kset} \hat{f}$, and the convention that the supremum over an empty set is equal to minus infinity.
\end{lemma}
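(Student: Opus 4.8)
The plan is to reduce this statement to \Cref{lemma:tech:min_f_compact}, which already controls the sublevel sets of $f$ around its minimum. First I would record that, since $\Kset$ is compact and $f$ is continuous, the value $m \eqdef \min_\Kset f$ is attained; hence $\Mset = \argmin_\Kset f$ is nonempty (so that $\rmd(\cdot, \Mset)$ is well-defined) and we may fix some $x_\star \in \Mset$, for which $f(x_\star) = m$. Given $\delta > 0$, I would then apply \Cref{lemma:tech:min_f_compact} to obtain $\epsilon_0 > 0$ such that, for all $x \in \Kset$, the implication $f(x) - m < \epsilon_0 \implies \rmd(x, \Mset) < \delta$ holds, and set $\epsilon \eqdef \epsilon_0/2$.

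The core step is a two-sided uniform-approximation estimate. Suppose $\hat f \colon \Kset \rightarrow \rset$ satisfies $\sup_\Kset \lrav{\hat f - f} < \epsilon$, and let $y \in \hat\Mset$ (if $\hat\Mset = \emptyset$, the conclusion holds by the stated convention). Since $y$ minimizes $\hat f$, we have $\hat f(y) \leq \hat f(x_\star)$, so the middle term below is nonpositive and
\[
    f(y) - m = \lrb{f(y) - \hat f(y)} + \lrb{\hat f(y) - \hat f(x_\star)} + \lrb{\hat f(x_\star) - f(x_\star)} \leq \lrav{f(y) - \hat f(y)} + \lrav{\hat f(x_\star) - f(x_\star)} < 2\epsilon = \epsilon_0 .
\]
By the choice of $\epsilon_0$ this yields $\rmd(y, \Mset) < \delta$ for every $y \in \hat\Mset$, and taking the supremum over $\hat\Mset$ gives $\sup_{y \in \hat\Mset} \rmd(y, \Mset) \leq \delta$, as desired.

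There is essentially no analytic difficulty here; the argument is a clean application of the preceding minimization lemma together with the triangle-type bound above. The only points deserving care are bookkeeping rather than substance: first, the passage from the strict bound $\rmd(y, \Mset) < \delta$ valid for each individual $y$ to the \emph{non-strict} bound $\leq \delta$ on the supremum, which is exactly why the statement is phrased with $\leq$ (a supremum of quantities each $< \delta$ need not itself be $< \delta$); and second, the degenerate case $\hat\Mset = \emptyset$, handled by the convention that the supremum over the empty set is $-\infty$. I would also emphasize that neither continuity nor measurability of $\hat f$ is used anywhere, only the defining property of its minimizers, so the lemma applies to arbitrary real-valued $\hat f$.
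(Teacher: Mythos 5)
Your proof is correct and is essentially identical to the paper's: both fix a minimizer $\xv\in\Mset$, invoke \Cref{lemma:tech:min_f_compact} with a factor-of-two margin, and use the chain $f(y)<\hat f(y)+\epsilon\leq\hat f(\xv)+\epsilon<f(\xv)+2\epsilon$ to conclude. Your additional remarks (the empty-$\hat\Mset$ convention, the strict-versus-non-strict inequality, and the fact that no regularity of $\hat f$ is needed) are accurate bookkeeping but do not change the argument.
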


\begin{lemma} \label{lemma:tech:sup_q_sup_m}
    Let $(\Kset, \rmd)$ be a compact metric space and $\mcq\colon\Kset\times\Kset\rightarrow\rset$ be a continuous function such that for all $x\in\Kset$, $\Mtxt{x} \eqdef \argmin_{x'\in\Kset} \Q{x}{x'}$ is a singleton.
    Then, for all $\delta > 0$, there exists $\epsilon > 0$ such that for all functions $\hat{\mcq}\colon \Kset\times\Kset \rightarrow \rset$,
    \begin{equation*}
        \underset{\Kset\times\Kset}{\sup} \lrav{\hat{\mcq} - \mcq} < \epsilon \implies \underset{x\in\Kset \;;\;y\in\hat{\mcm}(x) }{\sup} \rmd(y,\M{x}) \leq \delta,
    \end{equation*}
    where $\hat{\mcm}$ is defined by $\hat{\mcm}(x)\eqdef \argmin_{x'\in\Kset} \hat{\mcq}_{x}(x')$, and with the convention that the supremum over an empty set is equal to minus infinity.
\end{lemma}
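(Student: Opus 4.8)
The plan is to argue by contradiction, exploiting the compactness of $\Kset$ together with the continuity of the minimization mapping established in \Cref{lemma:tech:continuity_mapping}; the statement is the uniform-in-$x$ analogue of \Cref{lemma:tech:sup_f_sup_m}. Suppose the conclusion fails for some $\delta > 0$. Then, taking $\epsilon = 1/n$ for each $n \in \nstar$, I would obtain a function $\hat{\mcq}_n \colon \Kset \times \Kset \to \rset$ with $\sup_{\Kset\times\Kset} \lrav{\hat{\mcq}_n - \mcq} < 1/n$, together with points $x_n \in \Kset$ and $y_n \in \hat{\mcm}_n(x_n)$ such that $\rmd(y_n, \M{x_n}) > \delta$, where $\hat{\mcm}_n(x) \eqdef \argmin_{x'\in\Kset} \hat{\mcq}_n(x, x')$. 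In particular $\hat{\mcm}_n(x_n)$ is non-empty, so the convention on the empty set plays no role in the negation.

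By compactness of $\Kset$, I would extract along a common subsequence (not relabelled) limits $x_n \to \bar x$ and $y_n \to \bar y$ in $\Kset$. The crux is to show that $\bar y \in \M{\bar x}$. Since $y_n$ minimizes $\hat{\mcq}_n(x_n, \cdot)$, for every $x' \in \Kset$ one has $\mcq(x_n, y_n) \leq \hat{\mcq}_n(x_n, y_n) + 1/n \leq \hat{\mcq}_n(x_n, x') + 1/n \leq \mcq(x_n, x') + 2/n$, where the outer inequalities use the uniform bound and the middle one uses the minimality of $y_n$. Letting $n \to \infty$ and invoking the continuity of $\mcq$ yields $\mcq(\bar x, \bar y) \leq \mcq(\bar x, x')$ for all $x' \in \Kset$, so $\bar y$ is a minimizer of $\mcq(\bar x, \cdot)$. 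As $\M{\bar x}$ is a singleton by assumption, this forces $\bar y = \M{\bar x}$.

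It then remains to pass to the limit in the distance. Since every $\M{x}$ is a singleton and $\mcq$ is continuous on the compact set $\Kset$, \Cref{lemma:tech:continuity_mapping} ensures that $\mcm$ is continuous on $\Kset$, whence $\M{x_n} \to \M{\bar x} = \bar y$. Combining with $y_n \to \bar y$ gives $\rmd(y_n, \M{x_n}) \to \rmd(\bar y, \bar y) = 0$, which contradicts $\rmd(y_n, \M{x_n}) > \delta$ for all $n$. I expect the only genuine difficulty to be the justification that $\M{x_n} \to \M{\bar x}$, which is precisely the point where the continuity of the minimization mapping from \Cref{lemma:tech:continuity_mapping} is required; the remainder is a routine passage to the limit resting on the uniform closeness of $\hat{\mcq}_n$ to $\mcq$ and the continuity of $\mcq$.
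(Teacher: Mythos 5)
Your proof is correct, but it takes a genuinely different route from the paper. The paper first establishes an intermediate quantitative result, \Cref{lemma:tech:min_q_compact}: a single $\epsilon>0$, uniform in $x$, such that $\Q{x}{y}-\Q{x}{\M{x}}<2\epsilon$ forces $\rmd(y,\M{x})<\delta$ (this is the bivariate, uniform-in-$x$ analogue of \Cref{lemma:tech:min_f_compact}, and its proof is where the continuity of $\mcm$ from \Cref{lemma:tech:continuity_mapping} enters, via compactness of the bad set $\settxt{(x,x')}{\rmd(\M{x},x')\geq\delta}$ and positivity of the infimum of the gap over it). The lemma itself is then a direct three-line argument: for $y\in\hat{\mcm}(x)$, $\Q{x}{y}<\hat{\mcq}_x(y)+\epsilon\leq\hat{\mcq}_x(\M{x})+\epsilon<\Q{x}{\M{x}}+2\epsilon$. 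You instead argue by contradiction with sequential compactness: you run essentially the same three-epsilon chain along the sequence $(x_n,y_n)$, pass to the limit to identify $\bar y=\M{\bar x}$ via the singleton hypothesis, and then invoke the continuity of $\mcm$ directly to get $\rmd(y_n,\M{x_n})\to 0$. The trade-off: the paper's factorization isolates a reusable, effectively quantitative statement (the $\epsilon$ is an explicit infimum) and yields the strict bound $\rmd(y,\M{x})<\delta$, while your argument needs no intermediate lemma beyond \Cref{lemma:tech:continuity_mapping} but is purely existential in $\epsilon$; note also that you could make it fully self-contained by extracting a further subsequence of $\M{x_n}$ and identifying its limit by the same passage-to-the-limit, rather than citing the continuity lemma. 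Your handling of the two delicate points — the negation of the supremum (so the empty-set convention is moot) and the fact that $\hat{\mcq}_n$ need not be continuous (only the specific $x_n$ with $\hat{\mcm}_n(x_n)\neq\emptyset$ is used) — is sound.
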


\subsection{Convexity}

\begin{lemma} \label{lemma:tech:biconjugate}
    Let $d\in\nset^*$, $\Kset$ be a bounded convex set of $\rset^d$, and $f\colon\Kset\rightarrow\rset$ be a continuous function.
    Assume that $f$ has a unique minimizer $\xv$ on $\Kset$, that $\xv\in\ri{K}$, that $f$ is $C^2$-differentiable in a neighborhood of $\xv$ and that $\partial^2 f(\xv) \succ 0$.
    
    Then, $\xv$ is the unique minimizer of $f^{**}$ on $\Kset$ and $f^{**}$ is equal to $f$ in a neighborhood of $\xv$, where $f^{**}$ denotes the biconjugate of $f$ (see \cite[Section 11, p.473]{rockafellar_variational}).
\end{lemma}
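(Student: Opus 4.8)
The plan is to reduce to working inside $\Aff{\Kset}$ (so that ``neighborhood'' and $\ri{\cdot}$ are understood relative to the affine hull, and $\xv$ is an interior point there), to recall the variational characterization that $f^{**}$ is the pointwise supremum of the affine minorants of the extension $\bar f$ of $f$ by $+\infty$ outside $\Kset$ (Fenchel--Moreau, $\bar f$ being proper and bounded below; see \cite{rockafellar_variational}), and then, for every point near $\xv$, to exhibit one affine minorant of $\bar f$ that touches $f$ at that point. All the difficulty lies in producing such a minorant \emph{globally} on $\Kset$.

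First I would fix $m\eqdef f(\xv)=\min_\Kset f$ and let $V$ be the direction of $\Aff{\Kset}$. Since $\xv\in\ri{\Kset}$ minimizes $f$, the restriction of $\partial f(\xv)$ to $V$ vanishes, and $\partial^2 f(\xv)\succ 0$ provides $\mu>0$ with $v^\top\partial^2 f(\xv)v\ge\mu\norm{v}_2^2$ for $v\in V$. By continuity of the Hessian there are $\rho>0$ and $L>0$ such that on $\CBall{\xv}{\rho}$ the function $f$ is $C^2$, satisfies $v^\top\partial^2 f\, v\ge \mu\norm{v}_2^2/2$ for $v\in V$, and $\normmat{\partial^2 f}_2\le L$; in particular $\norm{\partial f(x_0)}_2\le L\norm{x_0-\xv}_2$ and $f(x_0)\le m+L\norm{x_0-\xv}_2^2/2$ whenever $x_0\in\CBall{\xv}{\rho}$.

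The crux is to show that for $x_0$ close enough to $\xv$ the tangent map $\ell_{x_0}(x)\eqdef f(x_0)+\pscal{\partial f(x_0)}{x-x_0}$ stays below $f$ on all of $\Kset$. On $\Kset\cap\CBall{\xv}{\rho}$ this follows from a second-order Taylor expansion and the local strong convexity, since $x-x_0\in V$ gives $f(x)\ge\ell_{x_0}(x)+\mu\norm{x-x_0}_2^2/4\ge\ell_{x_0}(x)$. On $\Kset\setminus\CBall{\xv}{\rho}$ I would use that, by continuity, uniqueness of the minimizer, and compactness of $\Kset$, the infimum $m_\rho\eqdef\inf_{\Kset\setminus\CBall{\xv}{\rho}}f$ is attained and satisfies $m_\rho>m$; writing $\epsilon\eqdef\norm{x_0-\xv}_2$ and letting $D$ be the diameter of $\Kset$, the bounds above yield $\ell_{x_0}(x)\le m+L\epsilon D+L\epsilon^2/2$, which is $\le m_\rho\le f(x)$ once $\epsilon$ is small enough. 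Hence there is $\epsilon_0\in(0;\rho]$ such that for every $x_0\in\Ball{\xv}{\epsilon_0}$ the affine map $\ell_{x_0}$ minorizes $\bar f$, so $f^{**}(x_0)\ge\ell_{x_0}(x_0)=f(x_0)$; combined with $f^{**}\le\bar f$ this gives $f^{**}=f$ on $\Ball{\xv}{\epsilon_0}$. This is the main obstacle, and it is precisely where compactness of $\Kset$ is needed (so that $m_\rho>m$): the conclusion genuinely fails for merely bounded, non-closed $\Kset$, because then $f$ can approach $m$ along the boundary and the convex hull collapses near $\xv$.

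Finally I would identify the minimizer. Taking $y=0$ in the conjugate gives $f^*(0)=-m$, hence $f^{**}\ge m$ everywhere, while $f^{**}(\xv)\le f(\xv)=m$; thus $\xv$ minimizes $f^{**}$ with $\min f^{**}=m$. For uniqueness, suppose some $z\in\Kset\setminus\{\xv\}$ also attained $m$. Convexity of $f^{**}$ together with $f^{**}\ge m$ would force $f^{**}\equiv m$ on the segment $[\xv,z]$; but for small $t>0$ the point $\xv+t(z-\xv)$ lies in $\Ball{\xv}{\epsilon_0}$, where $f^{**}=f$, so $f(\xv+t(z-\xv))=m$ at a point of $\Kset\setminus\{\xv\}$, contradicting the uniqueness of the minimizer of $f$. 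Therefore $\xv$ is the unique minimizer of $f^{**}$ on $\Kset$, which completes the argument.
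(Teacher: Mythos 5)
Your proof is correct (granting the compactness point discussed below) and follows essentially the same route as the paper's: both arguments show that for $x_0$ close enough to $\xv$ the tangent affine map $\ell_{x_0}$ is a \emph{global} affine minorant of $f$ on $\Kset$ --- near $\xv$ via the local convexity supplied by $\partial^2 f(\xv)\succ 0$, far from $\xv$ by comparing a uniform gap $f\geq f(\xv)+\delta$ with the smallness of the slope and offset of $\ell_{x_0}$ --- and then conclude $f^{**}\geq f$ locally, the reverse inequality being automatic.

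Two differences are worth recording. First, the uniqueness of the minimizer: the paper proves $f^{**}(y)>f^{**}(\xv)$ for every $y\neq\xv$ directly, by applying the affine-minorant inequality at a point $x(t)=\xv+t(y-\xv)$ near $\xv$ and using $\partial f(x(t))^\top(y-\xv)\geq 0$ (monotonicity of the gradient coming from local convexity); you instead use convexity of $f^{**}$ to force $f^{**}\equiv f(\xv)$ on a segment that enters the region where $f^{**}=f$, contradicting uniqueness of the minimizer of $f$. Both are sound, and yours has the small advantage of recycling the local identity rather than redoing an estimate. Second, and more substantively, your remark on compactness is accurate and flags a defect in the statement rather than in your argument: the lemma assumes only that $\Kset$ is \emph{bounded}, but the paper's own proof invokes \Cref{lemma:tech:min_f_compact}, whose hypothesis is a compact metric space --- exactly the sublevel-set control you recover from $m_\rho>f(\xv)$. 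The conclusion does fail without closedness: on $\Kset=\Ball{0}{1}$ the function $f(x)=\norm{x}_2^2(1-\norm{x}_2^2)$ has unique minimizer $0$ with Hessian $2I_d\succ 0$, yet every affine minorant of $f$ is nonpositive on $\Kset$, so $f^{**}\equiv 0$ and both conclusions of the lemma break down. (One cosmetic repair in your far-field step: $\Kset\setminus\CBall{\xv}{\rho}$ is relatively open, so the infimum there need not be attained; bound it below instead by the minimum of $f$ over the compact set $\Kset\setminus\Ball{\xv}{\rho}$, which exceeds $f(\xv)$ by uniqueness of the minimizer.)
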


\bibliographystyle{apalike} 
\bibliography{bibliography}

\section{Supplementary material}
\label{sec:supplementary}

\subsection{Linear algebra} \label{sec:app:proofs_tech:matrices}

\begin{proof}[Proof of \Cref{lemma:tech:domination}]
    Let $x,y\in\rset^d$ such that $0<x^\top A x \leq x^\top B y$. The Cauchy-Schwarz inequality provides
    \begin{equation*}
        \norm{x}_A^2=x^\top A x \leq x^\top B y = (A^{1/2} x)^\top A^{-1/2} B y \leq \normtxt{A^{1/2} x}_2 \normtxt{A^{-1/2} B y}_2.
    \end{equation*}
    Using that $\normtxt{A^{-1/2} B y}_2 \leq \normmattxt{A^{-1/2} B A^{-1/2}}_2 \normtxt{A^{1/2} y}_2$, we deduce
    \begin{equation*}
        \norm{x}_{A}^2 \leq \rho \normtxt{A^{1/2} x}_2 \normtxt{A^{1/2} y}_2
        = \rho \norm{x}_A \norm{y}_A,
    \end{equation*}
where $\rho = \normmattxt{A^{-1/2} B A^{-1/2}}_2$.
\end{proof}

\begin{lemma} \label{lemma:tech:rho_approx:square}
    Let $d\in\nset^*$ and $A \in \rmat{d}{d}$ be a symmetric positive-definite matrix.
    Then, there exist $\epsilon>0$ and $C>0$ such that for all
    symmetric matrices $M\in\rmat{d}{d}$ verifying $\normmattxt{M}_2
    \leq \epsilon$, the matrix $A+M$ is symmetric positive-definite and 
    \begin{equation*}
        \normmat{A^{1/2} - \lr{A+M}^{1/2}}_2 \leq C \normmat{M}_2. 
    \end{equation*}
\end{lemma}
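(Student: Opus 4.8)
The plan is to reduce the statement to a perturbation (Lipschitz) bound for the matrix square root, obtained through a Sylvester-equation argument that delivers a dimension-free constant. First I would set $\lambda \eqdef \lambda_{\min}(A) > 0$, the smallest eigenvalue of the symmetric positive-definite matrix $A$, and take $\epsilon \eqdef \lambda/2$. For any symmetric $M$ with $\normmat{M}_2 \leq \epsilon$, Weyl's eigenvalue perturbation inequality gives $\lambda_{\min}(A+M) \geq \lambda - \normmat{M}_2 \geq \lambda/2 > 0$, so $A+M$ is symmetric positive-definite and its principal square root $Y \eqdef (A+M)^{1/2}$ is well-defined, as is $X \eqdef A^{1/2}$; both are symmetric positive-definite, with $\lambda_{\min}(X) = \sqrt{\lambda}$ and $\lambda_{\min}(Y) = \sqrt{\lambda_{\min}(A+M)} > 0$.

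Next I would write $E \eqdef Y - X$ and exploit the purely algebraic identity $YE + EX = Y^2 - X^2 = M$, which requires no commutativity between $X$ and $Y$. This is a Sylvester equation in $E$, and since $X$ and $Y$ are positive-definite the associated operator $E \mapsto YE + EX$ is invertible (its eigenvalues are the sums $\lambda_i(X)+\lambda_j(Y) > 0$). Its solution admits the integral representation
\[
E = \int_0^\infty \rme^{-tY} M \rme^{-tX}\,\rmd t,
\]
which I would justify by differentiating $t \mapsto \rme^{-tY} M \rme^{-tX}$ and integrating over $[0,\infty)$, the boundary term at infinity vanishing because $X, Y \succ 0$.

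Finally I would estimate the spectral norm. Since $X$ and $Y$ are symmetric positive-definite, $\normmat{\rme^{-tX}}_2 = \rme^{-t\lambda_{\min}(X)}$ and $\normmat{\rme^{-tY}}_2 = \rme^{-t\lambda_{\min}(Y)}$, so submultiplicativity of the spectral norm yields
\[
\normmat{E}_2 \leq \normmat{M}_2 \int_0^\infty \rme^{-t(\lambda_{\min}(X)+\lambda_{\min}(Y))}\,\rmd t = \frac{\normmat{M}_2}{\lambda_{\min}(X)+\lambda_{\min}(Y)} \leq \frac{\normmat{M}_2}{\sqrt{\lambda}},
\]
using $\lambda_{\min}(Y) > 0$ in the last inequality. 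This is exactly the claim with $C \eqdef 1/\sqrt{\lambda}$, and the bound is uniform over all symmetric $M$ with $\normmat{M}_2 \leq \epsilon$ since the constant does not depend on $M$.

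The argument is largely routine once the Sylvester structure is recognized; the only points needing care are the verification of the integral representation and the uniformity of $C$ in $M$. As an alternative to the integral route, one can test the Sylvester equation against $E$ in the Frobenius inner product: from $\langle YE+EX, E\rangle_F = \mathrm{tr}\big((X+Y)E^2\big) \geq (\lambda_{\min}(X)+\lambda_{\min}(Y))\normmat{E}_F^2$ and the Cauchy--Schwarz bound $\langle M, E\rangle_F \leq \normmat{M}_F \normmat{E}_F$ one gets $\normmat{E}_F \leq \normmat{M}_F/(\lambda_{\min}(X)+\lambda_{\min}(Y))$, from which a spectral-norm bound follows at the cost of a dimension-dependent factor. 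I would prefer the integral approach precisely because it produces the clean, dimension-free constant $C = 1/\sqrt{\lambda}$.
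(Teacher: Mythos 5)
Your proof is correct, and it reaches the paper's exact constants ($\epsilon = \lambda/2$, $C = \lambda^{-1/2}$), but the mechanism for extracting the norm bound is genuinely different. Both arguments hinge on the same algebraic identity: writing $X \eqdef A^{1/2}$, $Y \eqdef (A+M)^{1/2}$ and $E \eqdef Y - X$, one has $YE + EX = Y^2 - X^2 = M$. The paper exploits the \emph{symmetry of $E$} directly: it picks a unit eigenvector $x$ of $E$ with eigenvalue $\mu$ satisfying $|\mu| = \normmat{E}_2$, so that testing the identity against $x$ collapses the two-sided multiplication into a scalar, $x^\top M x = \mu\, x^\top (Y+X)x$; dropping the term $x^\top Y x \geq 0$ and bounding $x^\top X x \geq \lambda^{1/2}$ gives $\normmat{E}_2 \leq \normmat{M}_2/\lambda^{1/2}$ in three lines of finite-dimensional linear algebra, with no integrals and no need to invoke solvability or uniqueness for the Sylvester operator. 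You instead solve the Sylvester equation explicitly via $E = \int_0^\infty \rme^{-tY} M \rme^{-tX}\,\rmd t$, which forces you to verify two extra things the paper never needs: that the integral indeed solves the equation (differentiate and integrate, boundary term vanishing) and that the solution is unique (invertibility of $Z \mapsto YZ + ZX$), so that it must coincide with $Y - X$. What your route buys in exchange is robustness: the integral representation is the standard operator-theoretic tool, it extends verbatim to infinite-dimensional settings and to situations where one cannot lay hands on a norming eigenvector, and as you note it keeps the constant dimension-free (unlike your Frobenius alternative, which loses a $\sqrt{d}$ factor in the conversion to the spectral norm). For this finite-dimensional symmetric setting, however, the paper's eigenvector test is the more economical argument.
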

\begin{proof}
    Write $\lambda \eqdef \min \Spec{A} > 0$. Let $M\in\rmat{d}{d}$ be a symmetric matrix such that $\normmattxt{M}_2\leq\lambda/2$. The matrix $A+M$ is then symmetric positive-definite and we can define its square root.
    By the symmetry of $A_M \eqdef (A+M)^{1/2} - A^{1/2}$, there exist $\mu\in\Spec{A_M}$ such that $|\mu| = \normmattxt{A_M}_2$ and $x\in\rmat{d}{d}$ such that $A_M x=\mu x$ and $\norm{x}_2=1$. This yields
    \begin{align*}
        x^\top M x &= x^\top (A+M)^{1/2}\lr{(A+M)^{1/2} - A^{1/2}} x + x^\top \lr{(A+M)^{1/2} - A^{1/2}}A^{1/2} x \\
        &= \mu x^\top \lr{(A+M)^{1/2} + A^{1/2}} x.
    \end{align*}
    We deduce, using that $x^T (A+M)^{1/2} x\geq 0$ and $\lambda^{1/2} = \min \Spec{A^{1/2}}$, 
    \begin{equation*}
        \normmat{M}_2 \geq |x^\top M x| = |\mu| x^\top \lr{(A+M)^{1/2} + A^{1/2}} x \geq \normmat{A_M}_2 x^\top  A^{1/2} x  \geq \normmat{A_M}_2 \lambda^{1/2},
    \end{equation*}
    and hence the result with $\epsilon=\lambda/2$ and $C=\lambda^{-1/2}$.
\end{proof}

\begin{lemma} \label{lemma:tech:rho_approx:inv}
    Let $d\in\nset^*$ and $A \in \rmat{d}{d}$ be an invertible matrix.
    Then, there exist $\epsilon>0$ and $C>0$ such that for all matrices $M\in\rmat{d}{d}$ verifying $\normmattxt{M}_2 \leq \epsilon$,
    \begin{equation*}
        \normmat{A^{-1} - \lr{A+M}^{-1}}_2 \leq C \normmat{M}_2. 
    \end{equation*}
\end{lemma}
\begin{proof}
    Let $M\in\rmat{d}{d}$ such that $\normmattxt{M}_2\leq\normmattxt{A^{-1}}_2^{-1}/2$. We can then write
    \begin{equation*}
        A^{-1} - \lr{A+M}^{-1} = A^{-1} - A^{-1}(I + MA^{-1})^{-1}
        = -A^{-1} MA^{-1} \sum_{k=0}^{\infty} (-MA^{-1})^k.
    \end{equation*}
    This yields the result with $\epsilon = \normmattxt{A^{-1}}_2^{-1}/2$ and $C=2\normmattxt{A^{-1}}_2^2$.
\end{proof}

\begin{proof}[Proof of \Cref{lemma:tech:rho_approx}]
    Applying \Cref{lemma:tech:rho_approx:square} to $A^{-1}$ and
    \Cref{lemma:tech:rho_approx:inv} to $A$ provides the existence of
    $\epsilon>0$ and $C>1$ such that for all symmetric matrices
    $S\in\rmat{d}{d}$ verifying $\normmattxt{S}_2\leq\epsilon$, the
    matrix $A^{-1}+S$ is symmetric positive-definite and 
    \begin{align}
        \normmat{\lr{A^{-1}}^{1/2} - \lr{A^{-1}+S}^{1/2}}_2 &\leq C
                                                              \normmat{S}_2, \label{eq:tech:square}\\
              \normmat{A^{-1} - \lr{A+S}^{-1}}_2 &\leq C \normmat{S}_2. \label{eq:tech:inv} 
    \end{align}
    Let $M\in\rmat{d}{d}$ be a symmetric matrix such that $\normmattxt{M}_2\leq \epsilon/C \leq \epsilon$.
    By \eqref{eq:tech:inv} we can then define $M' \eqdef A^{-1} -
    (A+M)^{-1}$, which verifies $\normmattxt{M'}_2\leq C
    \normmattxt{M}_2 \leq \epsilon$. Choosing $S=-M'$, we deduce that  $A^{-1} - M'=(A+M)^{-1}$ 
    is symmetric positive-definite  and by \eqref{eq:tech:square},
    \begin{align*}
        \normmat{\lr{A}^{-1/2} - \lr{A+M}^{-1/2}}_2 &= \normmat{\lr{A^{-1}}^{1/2} - \lr{(A+M)^{-1}}^{1/2}}_2 \\
        &= \normmat{\lr{A^{-1}}^{1/2} - \lr{A^{-1}-M'}^{1/2}}_2 \\
        &\leq C \normmat{M'}_2 \leq C^2 \normmat{M}_2.
    \end{align*}
    This concludes the proof up to simple algebra, noting that for all matrices $N\in\rmat{d}{d}$,
    \begin{align*}
        A^{-1/2} B A^{-1/2} - &(A+M)^{-1/2} (B+N) (A+M)^{-1/2} \\
        = &\lr{A^{-1/2} - (A+M)^{-1/2}}B A^{-1/2} - (A+M)^{-1/2}N A^{-1/2} \\
        &+ (A+M)^{-1/2}(B+N) \lr{A^{-1/2} -  (A+M)^{-1/2}}.
    \end{align*}
\end{proof}

\begin{proof}[Proof of \Cref{lemma:tech:rho_a_b}]
    The symmetry of $A^{-1}$ and $B$ provides the first equality.
    As $A^{-1} B = A^{-1/2} (A^{-1/2} B A^{-1/2}) A^{1/2}$,  $A^{-1} B$ and $A^{-1/2} B A^{-1/2}$ are similar and then
    \begin{equation*}
        \rhomtxt{A^{-1}B} = \rhomtxt{A^{-1/2} B A^{-1/2}}.
    \end{equation*}
    Since $A^{-1/2} B A^{-1/2}$ is symmetric, we can write
    \begin{align*}
        \rhom{A^{-1/2} B A^{-1/2}} &= \max\lrav{\Spec{A^{-1/2} B A^{-1/2}}} \\
        &= \underset{x\in\rset^d}{\sup} \frac {\lrav{x^\top A^{-1/2} B A^{-1/2} x}} {x^\top x}
        = \normmat{A^{-1/2} B A^{-1/2}}_2.
    \end{align*}
    This provides the second equality, along with the third one by considering the change of variables $v = A^{-1/2}x$.
\end{proof}

\begin{proof}[Proof of \Cref{lemma:tech:norm_approx}]

    For all $x \in \rset^d$, we have 
    $$
    \norm{x}_2\leq \normmattxt{S_{\star}^{-1/2}}_2 \norm{S_{\star}^{1/2}x}_2=\normmattxt{S_{\star}^{-1/2}}_2 \norm{x}_{S_{\star}}
    $$
This implies 
$$ 
|\norm{x}_S^2 - \norm{x}_{S_\star}^2|  = | \pscal{x}{(S-S_\star) x}| \leq \normmattxt{S-S_\star}_2 \norm{x}_2^2 \leq \norm{x}_{S_\star}^2 \normmattxt{S-S_\star}_2 \normmattxt{S_{\star}^{-1/2}}_2^2,
$$
which yields, for all $S$ such that $\normmattxt{S-S_\star}_2\leq \delta$,
$$
\left(1 + \delta \normmattxt{S_{\star}^{-1/2}}_2^2\right)^{-1/2}\norm{x}_{S}\leq  \norm{x}_{S_\star} \leq \left(1 - \delta \normmattxt{S_{\star}^{-1/2}}_2^2\right)^{-1/2}\norm{x}_{S}.
$$
The proof follows. 
\end{proof}

\subsection{Minimization} \label{sec:app:proofs_tech:minimization}

\begin{proof}[Proof of \Cref{lemma:tech:min_f_compact}]
    Let $\delta>0$. The function $\trmd$ defined on $\Kset$ by $\trmd(x) \eqdef \rmd(x,\Mset)$ being continuous, the set $\tilde{\Kset} \eqdef \trmd^{-1}([\delta, +\infty[)$ is compact, as the intersection of a closed set with a compact set.
    By the continuity of  $f$ we deduce the existence of $x_0 \in \tilde{\Kset}$ such that
    \begin{equation*}
        \epsilon \eqdef \underset{x \in \tilde{\Kset}}{\inf} f - m = f(x_0) - m > 0.
    \end{equation*}
\end{proof}

\begin{proof}[Proof of \Cref{lemma:tech:continuity_mapping_set}]
    Let $\xv\in\Kset$ and $\delta>0$. By the compacity of $\Kset$ and the continuity of $\Q{\xv}{\cdot}$, there exists $\xvv\in\M{\xv}$. Besides, \Cref{lemma:tech:min_f_compact} applied to $\Q{\xv}{\cdot}$ provides the existence of $\epsilon>0$ such that for all $x'\in\Kset$,
    \begin{equation} \label{lemma:tech:conti_ms}
        \Q{\xv}{x'} - \Q{\xv}{\xvv} < \epsilon \implies \rmd(x', \M{\xv}) < \delta.
    \end{equation}
    Moreover, by the uniform continuity of $\mcq$ on $(\Kset\times\Kset, \trmd)$, where $\trmd((y,y'), (z,z'))\eqdef\rmd(y,z) + \rmd(y',z')$, there exists $\delta'>0$ such that for all $y, y', z, z' \in\Kset$,
    \begin{equation*}
        \trmd((y,y'), (z,z')) < \delta' \implies \lrav{\Q{y}{y'} - \Q{z}{z'}} < \epsilon/2.
    \end{equation*}
    We deduce that for all $x\in\Kset$ such that $\rmd(x,\xv)<\delta'$, for all $x'\in\M{x}$,
    \begin{equation*}
        \Q{\xv}{x'} < \Q{x}{x'} + \epsilon/2 \leq \Q{x}{\xvv} + \epsilon/2 < \Q{\xv}{\xvv} + \epsilon,
    \end{equation*}
    and thus $\rmd(x', \M{\xv})<\delta$ by \eqref{lemma:tech:conti_ms}.
\end{proof}

\begin{proof}[Proof of \Cref{lemma:tech:continuity_mapping}] \renewcommand{\qedsymbol}{}
    This is a direct consequence of \Cref{lemma:tech:continuity_mapping_set}.
\end{proof}

\begin{proof}[Proof of \Cref{lemma:tech:diff_mapping}]
    We first prove the lemma for $k=1$.
    Write $\Vset$ the direction of $\Aff{\Kset}$ and for all $v \in \Vset, \epsilon>0$, write $\Ball{v}{\epsilon} \eqdef \settxt{w \in \Vset}{\normtxt{v-w}_2<\epsilon}$. Note that the ball is defined as a subset of $\Vset$.
    
    Under \ref{hyp:tech:dm} and \ref{hyp:tech:dm_2} there exists $\epsilon_0>0$ such that $\partial_2 \mcq$ is $C^1$-differentiable on $\Ball{x}{2\epsilon_0}\times\Ball{\Mtxt{x}}{2\epsilon_0} \subset \ri{\Kset}\times\ri{\Kset}$.
    Moreover, the function $\mcm$ is continuous on $\Kset$ by \Cref{lemma:tech:continuity_mapping}, which yields the existence of $\epsilon_1\in(0;\epsilon_0)$ such that $y\in\Ball{x}{\epsilon_1}$ implies $\Mtxt{y}\in\Ball{\M{x}}{\epsilon_0}$.
    By \ref{hyp:tech:dm_3} there also exists $\epsilon_2\in(0;\epsilon_1)$ such that for all $y \in \Ball{x}{\epsilon_2}$, the matrix $\dtwice{2}{2}{\tQtxt{y}{\M{y}}}$ is invertible.

    Let $y \in \Ball{x}{\epsilon_2}$ and set $\epsilon>0$ such that $\Ball{y}{\epsilon}\subset\Ball{x}{\epsilon_2}$. For all $h\in\Ball{0}{\epsilon}$, the fact that $\M{y}, \M{y+h}\in\ri{\Kset}$ provides (see the proof of \Cref{thm:main:lower_bound}):
    \begin{equation*}
        \tmca(h) \lr{\tM{y+h}-\tM{y}} = \tmcb(h) \tilde{h},
    \end{equation*}
    where the functions $\mca$ and $\mcb$ are defined on $\Ball{0}{\epsilon}$ by
    \begin{align*}
        \mca(h) &\eqdef \int_0^1 \dtwice{1}{1}{\Q{y+h}{s\M{y+h}+(1-s)\M{y}}} \rmd s, \\
        \mcb(h) &\eqdef -\int_0^1 \dtwice{1}{2}{\Q{sy+(1-s)(y+h)}{\M{y}}} \rmd s.
    \end{align*}
    Besides, $\tmca(0) = \dtwice{2}{2}{\tQtxt{y}{\M{y}}}$ is invertible by the definition of $\epsilon_2$, and the functions $\mca$, $\mcb$ are continuous on $\Ball{0}{\epsilon}$ by the continuity of $\mcm$ and the uniform continuity of $\dtwice{1}{2}{\mcq}, \dtwice{2}{2}{\mcq}$ on $\bar{\mathbf{B}}(x, \epsilon_0)\times\bar{\mathbf{B}}(\Mtxt{x}, \epsilon_0)$.
    Therefore, there exists $\epsilon' \in(0;\epsilon)$ such that on $\Ball{0}{\epsilon'}$,
    \begin{equation*}
        \tM{y+h}-\tM{y} = \tmca(h)^{-1}\tmcb(h) \tilde{h} = \tmca(0)^{-1}\tmcb(0) \tilde{h} + \lito(\norm{h}).
    \end{equation*}
    We deduce
    \begin{equation*}
        \partial\M{y} = -P \lrb{\dtwice{1}{1}{\tQ{y}{\M{y}}}}^{-1} \dtwice{1}{2}{\tQ{y}{\M{y}}} P^\top,
    \end{equation*}
    where $P$ is defined as in \Cref{sec:main}.
    The case $k>1$ follows by induction, using the above expression and the $C^{\infty}$-differentiability of the matrix inverse.
\end{proof}

\begin{proof}[Proof of \Cref{lemma:tech:sup_f_sup_m}]
    Let $\delta>0$ and $\xv\in\Mset$. By \Cref{lemma:tech:min_f_compact} there exists $\epsilon>0$ such that for all $y\in\Kset$,
    \begin{equation} \label{eq:tech:sup_f_sup_m}
        f(y) - m < 2\epsilon \implies \rmd(y,\Mset) < \delta.
    \end{equation}
    Let $\hat{f}$ be a real-valued function defined on $\Kset$ such that $\sup_{\Kset} |\hat{f}-f|<\epsilon$. Then, for all $y\in\hat{\Mset}$,
    \begin{equation*}
        f(y) < \hat{f}(y) + \epsilon \leq \hat{f}(\xv) + \epsilon < f(\xv) + 2\epsilon,
    \end{equation*}
    which implies $\rmd(y,\Mset) < \delta$ by \eqref{eq:tech:sup_f_sup_m}.
\end{proof}

\begin{lemma} \label{lemma:tech:min_q_compact}
    Let $(\Kset, \rmd)$ be a compact metric space and $\mcq\colon\Kset\times\Kset\rightarrow\rset$ be a continuous function such that for all $x\in\Kset$, $\Mtxt{x} \eqdef \argmin_{x'\in\Kset} \Q{x}{x'}$ is a singleton.
    Then, for all $\delta > 0$, there exists $\epsilon > 0$ such that for all $x, x' \in \Kset$,
    \begin{equation*}
        \Q{x}{x'} - \Q{x}{\M{x}} < \epsilon \implies \rmd(x',\M{x}) < \delta.
    \end{equation*}
\end{lemma}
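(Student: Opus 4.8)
The plan is to upgrade the pointwise estimate of \Cref{lemma:tech:min_f_compact} to one uniform in $x$ by exploiting the continuity of the minimization mapping $\mcm$ together with a compactness argument on $\Kset\times\Kset$. First I would invoke \Cref{lemma:tech:continuity_mapping}: since $\mcq$ is continuous and each $\M{x}$ is a singleton, $\mcm\colon\Kset\to\Kset$ is continuous. This makes both $(x,x')\mapsto\rmd(x',\M{x})$ and the gap function
\[
g(x,x') \eqdef \Q{x}{x'} - \Q{x}{\M{x}}
\]
continuous on $\Kset\times\Kset$; moreover, because $\M{x}$ is the unique minimizer of $\Q{x}{\cdot}$, one has $g\geq 0$ with $g(x,x')=0$ if and only if $x'=\M{x}$.

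Next, fixing $\delta>0$, I would introduce the level set
\[
F \eqdef \settxt{(x,x')\in\Kset\times\Kset}{\rmd(x',\M{x})\geq\delta},
\]
which is closed, hence compact, by continuity of $(x,x')\mapsto\rmd(x',\M{x})$. If $F$ is empty the conclusion is immediate for any $\epsilon>0$. Otherwise, every point of $F$ satisfies $x'\neq\M{x}$, so $g$ is strictly positive on $F$, and the extreme value theorem yields $\epsilon\eqdef\min_F g>0$. This $\epsilon$ settles the claim: any $(x,x')$ with $g(x,x')<\epsilon$ cannot lie in $F$, i.e. $\rmd(x',\M{x})<\delta$.

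I expect the only real subtlety to be the continuity of $\mcm$, which is exactly what \Cref{lemma:tech:continuity_mapping} supplies and which is indispensable here: it is what converts the $x$-dependent optimality gap into a genuinely jointly continuous function and renders $F$ closed, thereby turning the nonuniform conclusion of \Cref{lemma:tech:min_f_compact} into a uniform one. As an equivalent alternative I could argue by contradiction, extracting from a putative failure convergent sequences $x_n\to\bar x$ and $x_n'\to\bar x'$ with $g(x_n,x_n')\to 0$ but $\rmd(x_n',\M{x_n})\geq\delta$, and passing to the limit to obtain $g(\bar x,\bar x')=0$ and hence $\bar x'=\M{\bar x}$, in contradiction with $\rmd(\bar x',\M{\bar x})\geq\delta$.
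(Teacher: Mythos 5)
Your proof is correct and is essentially the paper's own argument: the paper likewise invokes \Cref{lemma:tech:continuity_mapping} to obtain continuity of $(x,x')\mapsto\rmd(x',\Mtxt{x})$, forms the compact level set $\settxt{(x,x')\in\Kset\times\Kset}{\rmd(x',\Mtxt{x})\geq\delta}$ (your $F$), and takes $\epsilon$ to be the attained, strictly positive infimum over it of the gap $\Qtxt{x}{x'}-\Qtxt{x}{\Mtxt{x}}$. The only cosmetic difference is that you treat the empty-level-set case explicitly, which the paper leaves implicit.
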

\begin{proof}
    By \Cref{lemma:tech:continuity_mapping}, the function $\trmd$ defined on $\Kset^2$ by $\trmd(x,x') \eqdef \rmd(\Mtxt{x}, x')$ is continuous.
    Let $\delta > 0$. By the compacity of $\tilde{\Kset} \eqdef \trmd^{-1}([\delta, +\infty[)$ and the continuity of $\mcq$, there exists $(x_0, x_0') \in \tilde{\Kset}$ such that
    \begin{equation*}
        \epsilon \eqdef \underset{(x,x') \in \tilde{\Kset}}{\inf} \lrb{\Q{x}{x'} - \Q{x}{\M{x}}} = \Q{x_0}{x_0'} - \Q{x_0}{\M{x_0}} > 0.
    \end{equation*}
\end{proof}

\begin{proof}[Proof of \Cref{lemma:tech:sup_q_sup_m}]
    Let $\delta>0$. By \Cref{lemma:tech:min_q_compact} there exists $\epsilon>0$ such that for all $x, y\in\Kset$,
    \begin{equation} \label{eq:tech:sup_q_sup_m}
        \Q{x}{y} - \Q{x}{\M{x}} < 2\epsilon \implies \rmd(y,\M{x}) < \delta.
    \end{equation}
    Let $\hat{\mcq}$ be a real-valued function defined on $\Kset\times\Kset$ such that $\sup_{\Kset\times\Kset} |\hat{\mcq}-\mcq|<\epsilon$. Then, for all $x,y\in\Kset$ such that $y\in\hat{\mcm}(x)$,
    \begin{equation*}
        \Q{x}{y} < \hat{\mcq}_{x}(y) + \epsilon \leq \hat{\mcq}_{x}\lr{\M{x}} + \epsilon < \Q{x}{\M{x}} + 2\epsilon,
    \end{equation*}
    which implies $\rmd(y,\M{x}) < \delta$ by \eqref{eq:tech:sup_q_sup_m}.
\end{proof}

\subsection{Convexity} \label{sec:app:proofs_tech:convexity}

\begin{proof}[Proof of \Cref{lemma:tech:biconjugate}]
    For all $\epsilon>0$, write $\Ball{\xv}{\epsilon}\eqdef\set{x\in\Kset}{\norm{x-\xv}_2<\epsilon}$.
    To begin with, note that $f(\xv)=f^{**}(\xv)$. Indeed, $f(\xv)\geq f^{**}(\xv)$ by definition of the biconjugate, and the constant $f(\xv)$ is an affine minorant of $f$, which provides $f^{**} \geq f(\xv)$.
    
    We now prove that $\xv$ is the unique minimizer of $f^{**}$ on $\Kset$.
    By assumption there exists $\epsilon_0>0$ such that $f$ is $C^2$-differentiable and $\partial^2 f \succ 0$ on $\Ball{\xv}{\epsilon_0}$, which implies the convexity of $f$ on that neighborhood.
    Besides, by \Cref{lemma:tech:min_f_compact} there exists $\delta>0$ such that for all $x\in\Kset$,
    \begin{equation} \label{eq:tech:min_f}
        f(x) - f(\xv) < \delta \implies \norm{x-\xv}_2 < \epsilon_0.
    \end{equation}
    Let $\epsilon_1 \in(0;\epsilon_0)$ such that $f(\Ball{\xv}{\epsilon_1})\subset\Ball{f(\xv)}{\delta/2}$.
    As $\xv\in\ri{K}$, for all $x\in\Kset$, $\partial f(\xv)^\top (x-\xv)=0$.
    By the boundedness of $\Kset$ and the $C^1$-differentiability of $f$ on $\Ball{\xv}{\epsilon_1}$ this provides the existence of $\epsilon_2\in(0;\epsilon_1)$ such that for all $x\in\Ball{\xv}{\epsilon_2}$ and $y\in\Kset$, $\partial f(x)^\top (y-x) \leq \delta/2$.
    Together with \eqref{eq:tech:min_f} this yields for all $x\in\Ball{\xv}{\epsilon_2}$ and $y\in\Kset\setminus\Ball{\xv}{\epsilon_0}$,
    \begin{equation} \label{eq:tech:lowerbound_f}
        f(y) \geq \delta/2 + \delta/2 + f(\xv) \geq f(x_0) + \partial f(x)^\top (y-x).
    \end{equation}
    By the convexity of $f$ on $\Ball{\xv}{\epsilon_0}$, the same inequality holds for all $y\in\Ball{\xv}{\epsilon_0}$.
    We deduce from \eqref{eq:tech:lowerbound_f} that for all $x\in\Ball{\xv}{\epsilon_2}$, $y\in\Kset$,
    \begin{equation} \label{eq:tech:lowerbound_biconjugate}
        f^{**}(y)\geq f(x) + \partial f(x)^\top (y-x).
    \end{equation}
    Let $y\in\Kset\setminus\{\xv\}$ and set $t\in(0;1)$ such that $x(t)\eqdef \xv + t(y-\xv) \in\Ball{\xv}{\epsilon}$. By the convexity of $f$ on $\Ball{\xv}{\epsilon}$,
    \begin{equation*}
        \partial f(x(t))^\top (y-\xv) = \frac 1 t \partial f(x(t))^\top (x(t)-\xv) \geq \frac 1 t \partial f(\xv)^\top (x(t)-\xv) = 0.
    \end{equation*}
    Using \eqref{eq:tech:lowerbound_biconjugate} with $x = x(t)$ then yields $f^{**}(y)\geq f(x(t))>f(\xv)=f^{**}(\xv)$, which proves that $\xv$ is the only minimizer of $f^{**}$ on $\Kset$.

    Finally, for all $x\in\Ball{\xv}{\epsilon_2}$, using \eqref{eq:tech:lowerbound_biconjugate} with $y=x$ provides $f^{**}(x)\geq f(x)$, and hence $f=f^{**}$ on $\Ball{\xv}{\epsilon_2}$.
\end{proof}

\section*{Acknowledgement}
Many results presented in this paper were obtained during the first year of the Ph.D. of Rayan Charrier, before his resignation for personal reasons.

\end{document}